\definecolor{mygreen}{rgb}{0.0, 0.5, 0.0}
\definecolor{winered}{rgb}{0.8,0,0}
\definecolor{myblue}{rgb}{0,0,0.8}
\newtheorem{theorem}{Theorem}
\newtheorem{lemma}{Lemma}
\newtheorem{proposition}{Proposition}
\newtheorem{corollary}{Corollary}
\newtheorem{remark}{Remark}
\newtheorem{assumption}{Assumption}
\DeclareMathOperator*{\argmin}{\arg\!\min}
\DeclarePairedDelimiterX{\norm}[1]{\lVert}{\rVert}{#1}
\title{\LARGE Linear Convergence in Federated Learning: \\  Tackling Client Heterogeneity and Sparse Gradients}
\author{Aritra Mitra, Rayana Jaafar, George J. Pappas, and Hamed Hassani
\thanks{The authors are with the Department of Electrical and Systems Engineering, University of Pennsylvania. Email: {\tt \{amitra20, rayanaj, pappasg, hassani\}@seas.upenn.edu}. This work was supported by NSF Award 1837253, NSF CAREER award CIF 1943064, and the Air Force Office
of Scientific Research Young Investigator Program (AFOSR-YIP) under award FA9550-20-1-0111.}}
\date{}
\begin{document}
\maketitle
\thispagestyle{empty}
\pagestyle{empty}
\begin{abstract}
We consider a standard federated learning (FL) architecture where a group of clients periodically coordinate with a central server to train a statistical model. We develop a general algorithmic framework called \texttt{FedLin} to tackle some of the key challenges intrinsic to FL, namely objective heterogeneity, systems heterogeneity, and infrequent and imprecise communication. Our framework is motivated by the observation that under these challenges, various existing FL algorithms suffer from a fundamental speed-accuracy conflict: they either guarantee linear convergence but to an incorrect point, or convergence to the global minimum but at a sub-linear rate, i.e., fast convergence comes at the expense of accuracy. In contrast, when the clients' local loss functions are smooth and strongly convex, we show that \texttt{FedLin} guarantees linear convergence to the global minimum, despite arbitrary objective and systems heterogeneity. We then establish matching upper and lower bounds on the convergence rate of \texttt{FedLin}  that highlight the effects of intermittent communication. Finally, we show that \texttt{FedLin} preserves linear convergence rates under aggressive gradient sparsification, and quantify the effect of the compression level on the convergence rate. Our work is the first to provide tight linear convergence rate guarantees, and constitutes the first comprehensive analysis of gradient sparsification in FL. 
\end{abstract}
\section{Introduction}
In a canonical federated learning (FL) architecture, a set $\mathcal{S}$ of clients periodically communicate with a central server to find a global statistical model that solves the following problem  \cite{mcmahan,surv1}:
\begin{equation}
  \min_{x\in\mathbb{R}^d}   f(x), \hspace{2mm} \textrm{where} \hspace{1mm} f(x) =\frac{1}{m}\sum_{i=1}^{m} f_i(x).
\label{eqn:objective}
\end{equation}
Here, $m$ is the number of clients, $f_i:\mathbb{R}^{d}\rightarrow \mathbb{R}$ is the local objective function of client $i$, and $f(x)$ is the global objective function. The above formulation can be used to model a variety of common machine learning tasks such as classification, inference, and regression by casting them as specific instances of the general empirical risk minimization (ERM) framework; within this framework, $f_i(x)$ can be thought of as the local loss function of client $i$ defined over its own private data. Some of the core distinguishing tenets of the FL paradigm are as follows \cite{surv1,surv2,mcmahan,bonawitz}. First, due to privacy considerations, clients cannot directly share their local training data with the server.  Second, differences in the clients' data-sets may cause the local objectives of the clients to be  non-identical, and, in particular, to have different minima - this is known as \textit{objective} or \textit{statistical} heterogeneity. Third, due to variability in hardware (CPU, memory) and power (battery level), i.e., due to \textit{systems} or \textit{device} heterogeneity, client devices may have different computation speeds leading  to slow and straggling machines.\footnote{In addition to varying operating speeds, systems  heterogeneity can also cause certain clients to drop out unexpectedly. While the effect of partial client participation (due to drop outs) has been analyzed in quite a few recent works \cite{sahu,FedPAQ,scaffold}, an understanding of how differing computing speeds impacts  convergence rates is still very limited. This is precisely why we focus on the latter in this paper.} Finally, \textit{communication-efficiency} is a first-order concern, motivating the need to reduce the number of communication rounds, and also the size of the messages transmitted. The above considerations pose unique technical challenges that we aim to address in this paper. 

To reduce the number of communication rounds, a typical FL algorithm involves each client performing multiple local training steps in isolation on its own data  (using, for instance, mini-batch stochastic gradient descent) before communicating with the server. Due to such local steps, the popular \texttt{FedAvg} \cite{mcmahan} algorithm suffers from a ``client-drift effect" under objective heterogeneity \cite{li,malinovsky,charles,charles2,fedsplit,scaffold}: the local iterates of each client drift-off towards the minimum of their own local loss function, leading to slow convergence rates. Recently, several new algorithms such as \texttt{FedProx} \cite{sahu}, \texttt{SCAFFOLD} \cite{scaffold}, \texttt{FedSplit} \cite{fedsplit}, and \texttt{FedNova} \cite{FedNova} have been proposed as improvements to \texttt{FedAvg}. Despite these advances, there remain gaps in our understanding of the extent to which these algorithms match the guarantees of a centralized baseline.\footnote{By a centralized baseline, we refer to a setup where each client can communicate with every other client at all time steps via the server.} For instance, like \texttt{FedAvg}, \texttt{FedProx} \cite{sahu} and \texttt{FedNova} \cite{FedNova} exhibit a fundamental \textit{speed-accuracy conflict} under objective heterogeneity; see \cite{charles,charles2} and Section \ref{sec:Motivation}. In particular, when each $f_i(x)$ is smooth and strongly convex\footnote{A continuously differentiable function $f:\mathbb{R}^{d}\rightarrow \mathbb{R}$ is $L$-smooth if the gradient map $\nabla f:\mathbb{R}^{d}\rightarrow \mathbb{R}^d$ is $L$-Lipschitz. Moreover, $f(\cdot)$ is $\mu$-strongly convex if, given any $x,y\in\mathbb{R}^{d}$, the following inequality holds: 
$$
    f(y) - f(x) \geq\langle y-x, \nabla f(x) \rangle +\frac{\mu}{2}{\Vert y-x \Vert}^2.
$$}, the iterates generated by these algorithms converge linearly, but potentially to an incorrect point when the step-size (learning rate) is held constant. Thus, convergence to the minimum of the global loss function $f(x)$ necessitates diminishing step-sizes, which, in turn, leads to sub-linear convergence rates. In short, with \texttt{FedAvg}, \texttt{FedProx}, and \texttt{FedNova}, \textit{fast convergence comes at the expense of accuracy}. Importantly, this trade-off persists even when each client has access to exact, deterministic gradients of its local objective function, and all clients participate in every communication round.

Although \texttt{SCAFFOLD} \cite{scaffold} and \texttt{FedSplit} \cite{fedsplit} employ variance-reduction and operator-splitting techniques, respectively, to tackle objective heterogeneity, it is not known whether the rates in these papers are \textit{tight}, i.e., these algorithms come with no lower bounds. Moreover, neither \texttt{SCAFFOLD} nor \texttt{FedSplit} account for the effects of systems heterogeneity or compression, both of which are key challenges in FL. Indeed, due to systems heterogeneity, the number of local steps may vary across clients, causing some clients to make much less progress than others in each round \cite{FedNova}. As observed in \cite{FedNova}, this may even result in convergence to incorrect points. Furthermore, while empirical studies \cite{emp1,emp2} have revealed significant benefits of biased sparsification, theoretical guarantees for such methods in a federated setting have remained elusive. In this context, our goal is to build a better understanding of the following questions.

\begin{itemize}
    \item Given strongly convex and smooth objectives, is it possible to devise a local update scheme that guarantees linear convergence to the global minimum, despite both objective and systems heterogeneity? More generally, to what extent can we match centralized convergence rates? 
    
    \item Under client heterogeneity, how does  \textit{infrequent} (due to periodic transmissions) and \textit{imprecise} (due to compression) communication impact convergence rates?  
\end{itemize}

\subsection{Our Contributions}
In an effort to systematically answer the above questions, we make the following contributions in this paper. 

\begin{itemize}
    \item \textbf{A New Algorithm:} We develop a general algorithmic framework called \texttt{FedLin} that simultaneously accounts for objective heterogeneity, systems heterogeneity, and gradient sparsification. The key components of \texttt{FedLin} include a gradient correction term in the local update rule that carefully exploits memory; the use of client-specific learning rates; and error-feedback mechanisms at the clients and the server. 
    
    \item \textbf{Matching Centralized Convergence Guarantees under Objective and Systems  Heterogeneity:} For smooth and strongly convex losses, we show that \texttt{FedLin} resolves the speed-accuracy conflict by guaranteeing  linear convergence to the global minimum in the deterministic setting; see Theorem \ref{thm:strconv_dh}.  Moreover, for a general stochastic oracle model, \texttt{FedLin} matches the centralized  $O(1/T)$ rate (up to constants); see Theorem \ref{thm:noisy}. We then present matching rates for smooth, convex and non-convex settings as well in Theorems \ref{thm:conv_dh} and \ref{thm:nonconv_dh}. Importantly, our results hold under \textit{arbitrary} objective \textit{and} systems heterogeneity. In contrast, the only other work in FL (as far as we are aware) that investigates both objective and systems heterogeneity \cite{FedNova} provides results only for the non-convex setting, under a bounded dissimilarity assumption. Moreover, the \texttt{FedNova} algorithm in \cite{FedNova} suffers from the speed-accuracy conflict, while \texttt{FedLin} does not. 
    
$\bullet$ \textbf{Quantifying the Price of Intermittent Communication:} As one of our key contributions, in Theorem \ref{thm:lowerbnd}, we establish a lower bound for \texttt{FedLin} that matches the upper-bound in Theorem \ref{thm:strconv_dh} for smooth, strongly convex losses. In doing so, we provide the first  \textit{tight} linear convergence rate analysis in FL. Our lower bound highlights the price paid for performing multiple local steps between successive communication rounds, i.e., the effect of intermittent communication on the convergence rate. In particular, our analysis reveals, perhaps surprisingly, that there exist simple instances (involving quadratic losses) for which performing multiple local steps does not improve the rate of convergence. In this way, we provide valuable insights into the limitations of gradient-tracking/variance-reduction techniques that employ stale gradient-correction terms in the update rule.

\item \textbf{Analyzing the Impacts of Gradient Sparsification at Server and at Clients:} To further reduce communication, we separately explore the impacts of sparsifying gradients at the server side, and at the clients. We do so by equipping \texttt{FedLin} with a standard error-feedback mechanism  \cite{seide,stichsparse,alistarhsparse}. For the strongly convex and smooth setting, we show that when gradients are compressed at the server, our approach guarantees linear convergence to the true minimum without requiring any error-feedback; see Theorem \ref{thm:server_sprs1}. Using error-feedback, however, improves the convergence rate, as we establish in Theorem \ref{thm:serv_sprs2}. In contrast to server-side compression, when gradients are compressed at each client, then even with error feedback, we can establish linear convergence only to a neighborhood of the true minimum; see Theorem  \ref{thm:client_sprs}. Our work  (i) constitutes the first formal study of gradient sparsification in a federated setting; (ii) reveals key differences between up-link and down-link compression; and (iii) quantifies the effect of the compression level on the convergence rate. Notably, these results hold in the face of both objective and systems  heterogeneity. 
\end{itemize}
\newpage
\subsection{Related Work}
\begin{itemize}
\item \textbf{Federated Learning Algorithms:} Since its inception in \cite{mcmahan}, \texttt{FedAvg} has become the most popularly used algorithm for federated learning. Several papers provide a detailed theoretical analysis of \texttt{FedAvg} both in the homogeneous case when all clients minimize the same objective function \cite{stich,wang,spiridonoff,FedPAQ,haddadpourNIPS,woodworth1}, and also in the more challenging  heterogeneous setting \cite{khaled1,khaled2,haddadpour,li,koloskova}.\footnote{The analyses in \cite{wang} and \cite{koloskova} pertain to fully distributed peer-to-peer networks.}  In the latter scenario, to mitigate the client-drift that \texttt{FedAvg} suffers from and guarantee convergence to the true minimum, one needs to necessarily maintain a diminishing step-size, which, in turn, hurts the convergence rate; see \cite{charles} for an excellent discussion on this topic, and also
\cite[Theorem 4]{li}. In \cite{sahu}, the authors introduce \texttt{FedProx} - a local update method with an additional proximal term to control the client-drift. Nonetheless, as the authors explain in detail in \cite{charles2} (via both theory and experiments), \texttt{FedProx} continues to exhibit the speed-accuracy conflict. 

To improve upon the convergence guarantees of \texttt{FedAvg}, the authors in \cite{scaffold} introduce \texttt{SCAFFOLD}, where the clients maintain control variates to achieve variance-reduction. In \cite{scaffold}, the focus is on a general stochastic oracle model, for which, even in a single-worker non-federated setting, one cannot in general hope for linear convergence;\footnote{A notable exception is the class of strongly convex, smooth \textit{finite sum} stochastic optimization problems for which one can in fact establish linear convergence rates using variance reduction techniques such as SAG \cite{SAG},  SAGA \cite{SAGA}, and SVRG \cite{SVRG}.} thus, the speed-accuracy trade-off is not a point of discussion in \cite{scaffold}. Among other differences, \texttt{FedLin} enjoys a desirable fixed-point property that \texttt{SCAFFOLD} does not, in general; we discuss this in detail in Section \ref{sec:algo}. In \cite{fedsplit}, the authors focus on a deterministic setting  and introduce \texttt{FedSplit} - an algorithmic framework that builds on operator-splitting techniques and guarantees linear convergence, but only to a neighborhood of the global minimum. In contrast, our approach guarantees \textit{exact} convergence to the global minimum in the deterministic setting. Finally, we note that in a concurrent work \cite{gorbunov}, the authors develop a linearly converging algorithm called \texttt{S-Local-SVRG} for the setting where the objective function of each client can be expressed as a finite sum. 

Unlike our work, none of the papers \cite{scaffold}, \cite{fedsplit}, and \cite{gorbunov} 
consider systems heterogeneity or compression/sparsification. Moreover, while we provide a lower bound analysis for \texttt{FedLin} that highlights the effects of local steps and intermittent communication, the aforementioned works only provide upper bounds for their respective algorithms.\footnote{In \cite{woodworth1} and \cite{scaffold}, the authors provide lower bounds on the performance of \texttt{FedAvg} for the homogeneous and the heterogeneous setting, respectively. However, these bounds do not apply to our algorithm.}   

\item \textbf{Straggler-Tolerant Distributed Learning:} The topic of designing straggler-robust distributed learning/optimization algorithms has received considerable recent interest, both for the master-worker type architecture \cite{dutta,adacomm,SGC,ferdinand}, and also for peer-to-peer networks \cite{reisizadeh}. These works differ from our current setting since they either study the homogeneous case  where all client objectives are identical \cite{dutta,adacomm}, or do not consider the effect of local steps \cite{dutta,adacomm,SGC,ferdinand,reisizadeh} - a key feature of federated learning algorithms. In terms of methodology, a popular technique for straggler mitigation exploits \textit{redundancy}: the server carefully replicates the data a priori and then sends it out to the workers/clients for processing \cite{lee2017speeding, ferdinand,SGC}. However, this strategy is no longer applicable in a federated setting since the server cannot access client data.

So far, only a few works \cite{FLANP,FedNova} have investigated  systems heterogeneity in federated learning. The work in \cite{FLANP} incorporates
statistical characteristics of the clients' data to adaptively select
the clients in order to speed up the learning procedure. The work in \cite{FedNova} considers both objective and systems heterogeneity simultaneously, introduces  \texttt{FedNova}, and provides  guarantees only for the non-convex setting under a bounded dissimilarity assumption on the client loss functions. Compared to  these works, we provide results for all the three standard settings -  strongly convex, convex, and non-convex - while making no similarity assumptions whatsoever on the client loss functions. Importantly, while we show that \texttt{FedLin} guarantees linear convergence to the global minimum under \textit{arbitrary} objective and systems heterogeneity, no such result is reported in prior work.
\end{itemize}

For a survey of compression techniques in distributed optimization and machine learning, we direct the reader to Appendix \ref{sec:compression}.

\subsection{Notation and Terminology}
 Referring to \eqref{eqn:objective}, let $x^* \in \argmin_{x\in\mathbb{R}^{d}} f(x)$, and $x^*_i \in  \argmin_{x\in\mathbb{R}^{d}} f_i(x)$. Every FL algorithm mentioned in this paper operates in rounds $t\in\{1,\ldots,T\}$. Round $t$ commences with each client starting off from a common global model $\bar{x}_t$, and then  performing a certain number of local training steps; the nature of the local step is what differentiates one algorithm from another. We will denote by $x^{(t)}_{i,\ell}$ client $i$'s estimate of the model at the $\ell$-th local step of round $t$. 

In the context of gradient sparsification, we will denote by $\mathcal{C}_{\delta}:\mathbb{R}^d\rightarrow\mathbb{R}^{d}$ the \texttt{TOP-k} operator, where $\delta=d/k$, and $k\in\{1,\ldots,d\}$. Given any $x\in\mathbb{R}^{d}$, let $\mathcal{E}_{\delta}(x)$ be a set containing the indices of the $k$ largest-magnitude components of $x$. Then, the \texttt{TOP-k} operator we consider is as follows:
\begin{equation}
{\left(\mathcal{C}_\delta(x)\right)}_j =
    \begin{cases}
    (x)_j, & \text{if } j\in\mathcal{E}_{\delta}(x)\\
    0, & \text{otherwise.}
\end{cases}
\end{equation}
Here, we use $(x)_j$ to denote the $j$-th component of a vector $x$. It is apparent that a larger $\delta$ implies more aggressive compression. As an example of the \texttt{TOP-k} operation, if $x={\begin{bmatrix}1 & -5 & 2 & 3 \end{bmatrix}}'$, and $\delta=2$, then $\mathcal{C}_2(x)={\begin{bmatrix}0 & -5 & 0 & 3 \end{bmatrix}}'$.

\section{Motivation}
\label{sec:Motivation}
To motivate our work, in this section we will demonstrate how the speed-accuracy trade-off shows up in the simplest of settings when considering some recently proposed FL algorithms, namely \texttt{FedProx} \cite{sahu} and \texttt{FedNova} \cite{FedNova}. For this purpose, we will follow the analysis framework developed in \cite{charles} to illustrate the speed-accuracy trade-off for \texttt{FedAvg}. Even when clients have access to exact deterministic gradients of their loss functions, we will show that \texttt{FedProx} and  \texttt{FedNova} cannot guarantee convergence to the minimum of the global objective function with constant step-sizes. This, in turn, necessitates diminishing step-sizes, leading to sub-linear convergence rates.\footnote{In a follow-up work to \cite{charles}, the authors in \cite{charles2} generalize their framework to encompass proximal methods such as \texttt{FedProx} as well. As such, Propositions \ref{prop1:FedProx} and \ref{prop2:FedNova} in this section turn out to be special cases of the results in \cite{charles2}. We were not aware of \cite{charles2} while preparing the initial version of this paper.} 

\subsection{Problem Setup}
Inspired by \cite{charles}, we  consider a simple deterministic quadratic model where the local objective of client $i$ is given by $f_i(x)=\dfrac{1}{2}\|A_i^{1/2}(x-c_i)\|^2$, 
 where $A_i$ is a symmetric positive-definite matrix. Hence, $x_i^*=c_i$ is the local minimizer of $f_i(x)$. The global objective function is given by
\begin{equation}
\label{setup_cs}
    f(x)=\dfrac{1}{m}\sum\limits_{i\in \mathcal{S}}f_i(x)=\dfrac{1}{m}\sum\limits_{i\in \mathcal{S}}\dfrac{1}{2}\|A_i^{1/2}(x-c_i)\|^2,
\end{equation}
where $x^*=\Big(\sum\limits_{i\in \mathcal{S}}A_i\Big)^{-1}\Big(\sum\limits_{i\in \mathcal{S}}A_ic_i \Big)$ is the global minimizer of $f(x)$. For each $i\in\mathcal{S}$, since $A_i$ is positive-definite, note that there exists $\mu_i,L_i>0$, such that $\mu_iI \preceq A_i \preceq L_iI$. In particular, $\mu_i=\lambda_{min}(A_i)$ and $L_i=\lambda_{max}(A_i)$ satisfy the preceding inequality. Let us also  define $L=\max_{i\in\mathcal{S}} L_i$.

\subsection{Surrogate Optimization Problem}
 We begin by assuming that all clients perform the same number of local steps $H$. As a result, the sole aspect of heterogeneity that we consider is the one caused by  differences in the clients' local objective functions. To mitigate client-drift, the recently proposed \texttt{FedProx} scheme restricts the local client updates to remain close to the global model by adding a proximal term to the local objective functions. To avoid any inaccuracies introduced by stochastic gradients or client sampling, we consider a deterministic version of \texttt{FedProx} where all clients participate in each round, and have access to the exact gradients of their respective local loss functions. For a given initial global model $\bar{x}_1$, the update rule of \texttt{FedProx} is as follows: 
\begin{equation}
x_{i,\ell+1}^{(t)}=x_{i,\ell}^{(t)}-\eta \bigg(\nabla f_i(x_{i,\ell}^{(t)})+\beta(x_{i,\ell}^{(t)}-\bar{x}_t) \bigg), \hspace{1mm} \ell=0,\cdots,H-1,\hspace{1mm} i \in \mathcal{S}; \hspace{4mm} 
\bar{x}_{t+1}=\dfrac{1}{m}\sum\limits_{i \in \mathcal{S}} x_{i,H}^{(t)}.
\label{eqn:FedProx}
\end{equation}
The following result characterizes the output of \texttt{FedProx}. For its proof, see Appendix \ref{app:proofprop1_prox}.
\begin{proposition}
\label{prop1:FedProx}
For any step-size $\eta>0$, $T$ rounds of \texttt{FedProx} amount to performing $T$ rounds of parallel GD on the \textbf{surrogate} optimization problem given by
\begin{align}\label{surr_problem_prox}
    \min_x \tilde{f}(x)=\min_x \dfrac{1}{m}\sum \limits_{i\in \mathcal{S}} \tilde{f}_i(x)=\min_x \dfrac{1}{m}\sum \limits_{i\in \mathcal{S}} \dfrac{1}{2}\bigg\|\bigg(\sum\limits_{\ell=0}^{H-1}[I-\eta (A_i+\beta I)]^{\ell}A_i\bigg)^{1/2}(x-c_i)\bigg \|^2.
\end{align}
\end{proposition}
In Appendix \ref{app:proofprop1_prox}, we also provide additional insights on the client surrogate functions $\tilde{f}_i(x)$ and how they relate to the true client functions $f_i(x)$. Proposition \ref{prop1:FedProx} shows that even when the clients perform the same number of local updates, \texttt{FedProx} ends up minimizing a surrogate objective function \eqref{surr_problem_prox}  whose minimum may not, in general, coincide with the minimum of \eqref{setup_cs}, namely $x^*$,  i.e., \texttt{FedProx} may converge to wrong solutions (see the numerical results in Section \ref{numb_Prox_Nova}).\footnote{Two notable cases where the minimum of the surrogate objective function matches that of the true objective function are as follows: (i) $H$=1, i.e., when the clients perform just one local step, and their iterates evolve synchronously;  and (ii) $f_i(x)=f(x), \forall i\in\mathcal{S}, \forall x\in\mathbb{R}^{d}$, i.e., when the clients have identical loss functions.} Moreover, as equation \eqref{surr_problem_prox} shows, the learning rate $\eta$ has a direct impact on the amount of \emph{distortion} introduced to the original optimization problem \eqref{setup_cs}. Finally, it should be noted that with $\beta=0$, \texttt{FedProx} reduces to \texttt{FedAvg}, and our previous observations continue to hold. 

To capture systems heterogeneity, suppose client $i$ performs $\tau_i$ local iterations per round. Under this model, the recently proposed \texttt{FedNova} algorithm  \cite{FedNova} aggregates a normalized version of the cumulative local gradients to update the global model. In what follows, we will analyze the output of \texttt{FedNova} for the case where clients employ gradient descent in their local steps.\footnote{Although \texttt{FedNova} can technically accommodate any local solver whose  accumulated gradients are expressible as a linear combination of local gradients, we deliberately choose gradient descent, the simplest of solvers, to isolate the impact of normalized aggregation - the essence of the \texttt{FedNova} scheme. We do so to highlight that a weighted aggregation of local gradients, on its own, may not be enough to resolve the speed-accuracy trade-off.} To that end, define $\tau_{eff}\triangleq {1}/m\sum\limits_{i\in \mathcal{S}} \tau_i$, and $\alpha_i\triangleq \tau_{eff}/\tau_i$, $\forall i \in \mathcal{S}$. Then, for an initial global model $\bar{x}_1$, the update rule of \texttt{FedNova} is as follows: 
\begin{equation}
x_{i,\ell+1}^{(t)}=x_{i,\ell}^{(t)}-\eta \nabla f_i(x_{i,\ell}^{(t)}), \hspace{1mm} \ell=0,\cdots,\tau_i-1,\hspace{1mm} i \in \mathcal{S}; \hspace{4mm}
\bar{x}_{t+1}=\bar{x}_t-\dfrac{\eta}{m}\sum\limits_{i \in \mathcal{S}} \alpha_i \sum \limits_{\ell=0}^{\tau_i-1} \nabla f_i(x_{i,\ell}^{(t)}).
\label{eqn:FedNova}
\end{equation}
The following result characterizes the output of \texttt{FedNova}. For its proof, see Appendix \ref{app:proofprop2_nova}.
\begin{proposition}
\label{prop2:FedNova}
For any step-size  $\eta>0$, $T$ rounds of \texttt{FedNova} amount to performing $T$ rounds of parallel GD on the \textbf{surrogate} optimization problem given by
\begin{align}\label{surr_problem_nova}
    \min_x \tilde{f}(x)=\min_x \dfrac{1}{m}\sum \limits_{i\in \mathcal{S}} \tilde{f}_i(x)=\min_x \dfrac{1}{m}\sum \limits_{i\in \mathcal{S}} \dfrac{1}{2}\bigg\|\bigg(\sum\limits_{\ell=0}^{\tau_i-1}[I-\eta A_i]^{\ell}\alpha_iA_i\bigg)^{1/2}(x-c_i)\bigg \|^2.
\end{align}
\end{proposition}
\textbf{Main Takeaways:} Proposition \ref{prop2:FedNova} shows that in the presence of both objective and systems  heterogeneity, \texttt{FedNova} minimizes a surrogate objective function whose minimum may be far off from $x^*$. Aligning with our conclusions for \texttt{FedProx}, observe from \eqref{surr_problem_nova} that using a larger learning rate $\eta$ in \texttt{FedNova} introduces more \emph{distortion} to the original problem. In Section \ref{numb_Prox_Nova}, we will further show that there are particular instances of problem  \eqref{setup_cs}, for which, a diminishing step-size $\eta$ is \textit{necessary} to ensure convergence to the true minimum $x^*$ when employing \texttt{FedProx} or \texttt{FedNova}; diminishing step-sizes lead to sub-linear convergence rates, contributing to the speed-accuracy trade-off. Proposition \ref{prop1:FedProx} suggests that this trade-off is not caused by differences in the computing speeds of the clients, but rather originates from heterogeneity in the clients' objectives. \textit{Proposition \ref{prop2:FedNova} complements this result by showing that a weighted aggregation of accumulated local gradients may not suffice to resolve the speed-accuracy trade-off.}

The main message we want to convey here is that even for deterministic settings, there are non-trivial challenges posed by objective and systems heterogeneity that only get amplified when one additionally considers biased compression. For such scenarios, it is not at all apparent whether (and to what extent) one can match even the basic centralized benchmark of achieving linear convergence for smooth, strongly convex loss functions. To focus on the above unresolved issues, we will primarily consider a deterministic model in this paper. \textit{Nonetheless, the general approach we develop applies to the stochastic setting as well, as demonstrated by Theorem \ref{thm:noisy} in Section \ref{sec:speed_acc}.}


\begin{figure}[t]
  \centering
  \includegraphics[width=1\linewidth]{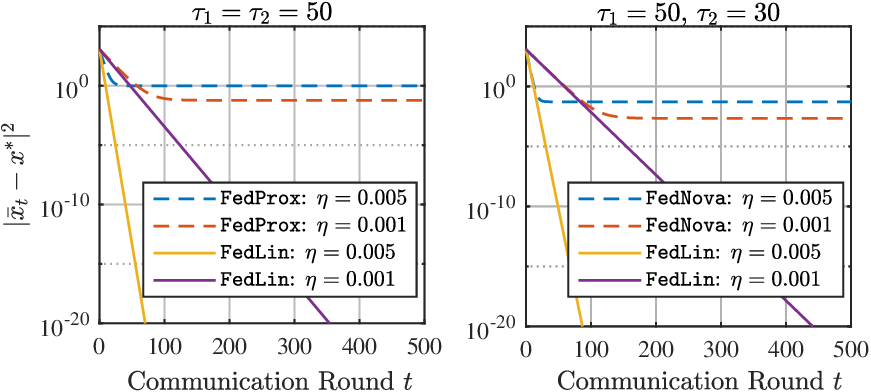}
  \caption{Simulations comparing \texttt{FedProx}, \texttt{FedNova},  and \texttt{FedLin} for $2$ clients with quadratic losses as in \eqref{eqn:2client}. \textit{\textbf{Left}}: Clients perform the same number of local steps, $H=50$. For \texttt{FedProx}, lower values of $\eta$ increase the accuracy at the expense of a slower convergence rate. \textit{\textbf{Right}}: Client $1$ performs $\tau_1=50$ local steps and client $2$ performs $\tau_2=30$ local steps per communication round. For lower values of $\eta$, \texttt{FedNova} becomes more accurate at the expense of a slower convergence  rate. \texttt{FedLin} significantly outperforms both \texttt{FedProx} and \texttt{FedNova} in terms of convergence speed and accuracy.}
\label{fig1_CaseStudy}
\end{figure}

\subsection{Numerical Illustration}
\label{numb_Prox_Nova}
We now consider an instance of problem \eqref{setup_cs} where two clients with simple scalar quadratic loss functions aim to minimize the global objective function using \texttt{FedProx} (see equation \eqref{eqn:FedProx}), \texttt{FedNova} (see equation \eqref{eqn:FedNova}), and our proposed algorithm \texttt{FedLin} that we develop in Section \ref{sec:algo}.  Specifically, the local objective functions of the clients are as follows:
\begin{equation}
    f_1(x)=\dfrac{1}{2}(x-3)^2; \hspace{2mm} 
    f_2(x)=(x-50)^2.
\label{eqn:2client}
\end{equation}

For the above instance, let the minima of the surrogate objective functions in \eqref{surr_problem_prox} and \eqref{surr_problem_nova} be denoted by  $\tilde{x}^*_1$ and $\tilde{x}^*_2$, respectively. In other words, while $\tilde{x}^*_1$ corresponds to the convergence point of \texttt{FedProx}, $\tilde{x}^*_2$ corresponds to that of \texttt{FedNova}. To clearly demonstrate that $\tilde{x}^*_1$ and $\tilde{x}^*_2$ may not coincide with the true minimum ${x}^*$, we consider two simple scenarios. We begin by assuming that both clients perform two local iterations, i.e. $\tau_1=\tau_2=2$. Then, for $0<\eta <{1}/{(2+\beta)}$, the convergence error of \texttt{FedProx} is given by
\begin{align}
    \label{error_prox}
    |\tilde{x}^*_1-x^*|=\dfrac{94\eta}{3[6-\eta (5+3\beta)]}. 
\end{align}

For some fixed value of $\beta$, and any fixed value of $\eta > 0$, note that the convergence error of \texttt{FedProx} (as shown above)  will never be eliminated. Also, observe that as $\eta \xrightarrow[]{} 0$, the convergence error in $\eqref{error_prox}$ vanishes. It follows that a decaying learning rate $\eta$ is \emph{necessary} to achieve convergence to the global minimum $x^*$. Moreover, lower values of $\eta$ decrease the convergence error at the expense of a slower convergence rate. For $\beta=0$, \texttt{FedProx} becomes equivalent to \texttt{FedAvg}, and the mismatch between $\tilde{x}^*_1$ and $x^*$  remains for any fixed value of $\eta > 0$. Next, to model systems heterogeneity, we now assume that clients perform a different number of local steps. In particular, suppose $\tau_1=2$ and $\tau_2=3$. Then, for $0<\eta<{1}/{2}$, the convergence error of \texttt{FedNova} is given by
\begin{align}
    \label{error_nova}
   |\tilde{x}^*_2-x^*|= \dfrac{94\alpha_1\eta^2}{3[\eta^2\alpha_1-\eta(3\alpha_1+4 \alpha_2)+3\alpha_1+4\alpha_2]},
\end{align}
where $\alpha_1=5/6$ and $\alpha_2=5/4$. Once again, no fixed value of $\eta > 0$ can eliminate the convergence error of \texttt{FedNova}, as shown above. Our results above indicate that even when clients perform only two or three local iterations on simple scalar quadratics, both \texttt{FedProx} and \texttt{FedNova} might inevitably converge to the stationary point of a mismatched optimization problem. As we discussed earlier, these results further suggest that a different type of local solver is needed to eliminate the observed trade-off between speed and accuracy.

In Figure \ref{fig1_CaseStudy}, we see how \texttt{FedProx} and \texttt{FedNova} converge to incorrect minimizers even for simple instances with two clients and deterministic, quadratic losses. 
Moreover, both schemes exhibit a trade-off between speed and accuracy: lower values of the learning rate $\eta$ increase the convergence accuracy at the expense of a slower convergence rate. On the other hand, \texttt{FedLin} eliminates the speed-accuracy trade-off by converging fast to the true minimum, despite heterogeneity in both the clients' local loss functions and operating speeds. Indeed, the theoretical results provided in the following sections confirm this observation. 

\section{Proposed  Algorithm: \texttt{FedLin}}
\begin{algorithm}[t]
\caption{\texttt{FedLin}} \label{algo:Main}
\begin{algorithmic}[1]
\State \textbf{Input:} Client step-sizes (learning rates) $\eta_i,i\in\mathcal{S}$, compression levels $\delta_c$ and $\delta_s$, initial iterate $\bar{x}_1\in\mathbb{R}^{d}$, $g_1=\nabla f(\bar{x}_1)$, initial compression errors $\rho_{i,1}=0, \forall i\in\mathcal{S}$ and $e_1=0$
\For {$t=1,\ldots,T$} 
\For {$i=1,\ldots,m$} 
\For {$\ell=0,\ldots,\tau_i-1$} 
\vspace{2mm}
\State 
$x^{(t)}_{i,\ell+1} \gets x^{(t)}_{i,\ell}-\eta_i(\nabla f_i(x^{(t)}_{i,\ell})-\nabla f_i(\bar{x}_{t})+g_t)$; \hspace{1mm} $x^{(t)}_{i,0}=\bar{x}_t$ \Comment{\textcolor{blue!80}{Local training steps}}
\vspace{2mm}
\EndFor
\State Client $i$ transmits $x^{(t)}_{i,\tau_i}$ to server
\EndFor
\State Server transmits $\bar{x}_{t+1}=1/m\sum_{i\in\mathcal{S}}x^{(t)}_{i,\tau_i}$ to each client
\For {$i=1,\ldots,m$} 
\State Client $i$ transmits $h_{i,t+1}=\mathcal{C}_{\delta_c}(\rho_{i,t}+\nabla f_i(\bar{x}_{t+1}))$ to server \Comment{\textcolor{blue!80}{Client-level sparsification}}
\State Client $i$ updates error: $\rho_{i,t+1}\gets\rho_{i,t}+\nabla f_i(\bar{x}_{t+1})-h_{i,t+1}$ 
\EndFor
\State Server transmits $g_{t+1}=\mathcal{C}_{\delta_s}(e_t+1/m\sum_{i\in\mathcal{S}}h_{i,t+1})$ to each client \Comment{\textcolor{blue!80}{Server-level sparsification}}
\State Server updates error: $e_{t+1}\gets e_t+1/m\sum_{i\in\mathcal{S}}h_{i,t+1}-g_{t+1}$ 
\EndFor
\end{algorithmic}
\end{algorithm}
\label{sec:algo}
In this section, we develop our proposed algorithm \texttt{FedLin},  formally described in Algorithm \ref{algo:Main}. \texttt{FedLin} is initialized from a common global iterate $\bar{x}_1\in\mathbb{R}^{d}$. For simplicity, we assume that $g_1 = \nabla f(\bar{x}_1)$, i.e., every client has access to the true gradient of $f(\cdot)$ initially; we can allow $g_1$ to be arbitrary as well without affecting the convergence guarantees. Complying with the structure of existing federated learning algorithms, \texttt{FedLin} proceeds in rounds $t\in\{1,\ldots,T\}$. In each round $t$, starting from a common global model $\bar{x}_t$, each client $i$ performs $\tau_i$ local training steps in parallel, as per line 5 of Algorithm \ref{algo:Main}.\footnote{Recall that $x^{(t)}_{i,\ell}$ denotes client $i$'s estimate of the model at the $\ell$-th local iteration within the $t$-th round, where $\ell\in\{0,\ldots,\tau_i-1\}.$ We use $\bar{x}_t=1/m\sum_{i\in\mathcal{S}}x^{(t)}_{i,0}$ and $\bar{x}_{t+1}=1/m\sum_{i\in\mathcal{S}}x^{(t)}_{i,\tau_i}$ to represent the global estimate of the model at the beginning,  and at the end of round $t$, respectively.} The key features of our local update rule are as follows:  \textit{exploiting past gradients} to account for objective heterogeneity, using \textit{client-specific step-sizes} to tackle systems heterogeneity, and employing \textit{error-feedback} to account for gradient sparsification. We now discuss each of these features in detail. 

\textbf{Exploiting Memory:} To gain intuition regarding the local step in line 5, note that the ideal local update at client $i$ takes the form  $x^{(t)}_{i,\ell+1} = x^{(t)}_{i,\ell}-\eta_i \nabla f(x^{(t)}_{i,\ell})$. However, this requires client $i$ to have access to the gradients of all other clients - which it does not, since clients do not communicate between rounds. To get around this, client $i$ \textit{exploits memory}, and uses the gradient of the global function $\nabla f(\bar{x}_t)$ from the beginning of round $t$ (when the clients last communicated) as a guiding direction in its update rule.  However, since $\nabla f(\bar{x}_t)$ is evaluated at a stale point $x^{(t)}_{i,0}=\bar{x}_t$, client $i$ subtracts off the stale gradient $\nabla f_i(\bar{x}_t)$ from $\nabla f(\bar{x}_t)$, and adds in the gradient of its own function $\nabla f_i(x^{(t)}_{i,\ell})$ evaluated at the most recent local iterate $x^{(t)}_{i,\ell}$.\footnote{As long as $x^{(t)}_{i,\ell}$ is not too far off from $\bar{x}_t$, and each $f_i(x)$ is smooth, one should expect $\nabla f(\bar{x}_t)$ to serve as a reasonable proxy for $\nabla f(x^{(t)}_{i,\ell})$.} This results in the update rule:  
$$x^{(t)}_{i,\ell+1} = x^{(t)}_{i,\ell}-\eta_i(\nabla f_i(x^{(t)}_{i,\ell})-\nabla f_i(\bar{x}_{t})+\nabla f(\bar{x}_t)).$$

Our  local update rule in line 5 is precisely of the above form, where $g_t$ is an inexact version of $\nabla f(\bar{x}_t)$, the inexactness being a consequence of gradient sparsification. 

\textbf{Client-Specific Learning Rates:} Since the number of local training steps can differ across clients due to systems heterogeneity, the amount of `client-drift' can also vary across clients. When each client $i$ performs $\tau_i$ local-steps, our analysis reveals that the bound on the drift-term $\Vert x_{i,\ell} - \bar{x}_t \Vert$ scales linearly in $\tau_i$ (see Lemma \ref{lemma:drift_conv_ct} in Appendix \ref{app:proofs_CT}). Accordingly, to compensate for such drift at client $i$, the step-size $\eta_i$ needs to be chosen to vary inversely with the number of local steps $\tau_i$. In fact, the requirement that $\eta_i \propto 1/{\tau_i}$ also turns out to be necessary (see Theorem \ref{thm:lowerbnd}), providing further motivation for the choice of client-specific learning rates in \texttt{FedLin}. Roughly speaking, the intuition here is that faster devices have a tendency to drift more than the slower ones. Accordingly, controlling the drift of faster devices necessitates lower learning rates for those devices.

\textbf{Error-Feedback:} Let us now discuss the compression module.  We employ standard error-feedback mechanisms \cite{seide,stichsparse,alistarhsparse} at both the server and the clients to account for gradient sparsification. At client $i$, $\rho_{i,t}$ represents the accumulated error due to gradient sparsification. At the end of round $t$, instead of simply compressing $\nabla f_i (\bar{x}_{t+1})$,  client $i$ instead compresses $\nabla f_i (\bar{x}_{t+1})+\rho_{i,t}$, to account for gradient coordinates that were not transmitted in the past. It then updates the accumulated error as per line 12. An analogous description applies to the error-feedback mechanism at the server, where $e_t$ represents the aggregate error at the beginning of round $t$.

\textbf{Overall Workflow of \texttt{FedLin}:} In each round $t$, clients perform local steps in parallel starting from $\bar{x}_t$, as per line 5. Subsequently, each client uploads its local model to the server, the server averages them to generate the global estimate $\bar{x}_{t+1}$, and then broadcasts $\bar{x}_{t+1}$ to each client. Each client $i$ then uploads a compressed gradient vector  $h_{i,t+1}$ to the server as per line 11, and the server broadcasts back (as per line 14) a compressed version $g_{t+1}$ of $1/m\sum_{i\in\mathcal{S}}h_{i,t+1}$ for each client to use in its local updates in round $t+1$. The parameters of \texttt{FedLin} are the client step-sizes $\{\eta_i\}_{ i\in\mathcal{S}}$, and the compression levels $\delta_c$ and $\delta_s$ at the clients and at the server, respectively.

\textbf{Related Algorithmic Approaches:} In the related but different setting of distributed optimization, we note that the idea of exploiting past gradients has been used to design \textit{gradient-tracking} algorithms \cite{qu,nedic, xi, pu, xin}. In the context of FL, this idea is also related to the variance-reduction technique employed in \texttt{SCAFFOLD} \cite{scaffold}. A major difference of \texttt{FedLin} with the above works is that none of them consider the effect of systems heterogeneity or biased compression. In particular, accounting for the inexact gradient term $g_t$ in our update rule introduces new technical challenges that we address in this paper.

\textit{Fixed-point property of} \texttt{FedLin}. There are some additional basic differences between \texttt{FedLin} and \texttt{SCAFFOLD}. To see this, consider the update rule of \texttt{FedLin} without sparsification:  $x^{(t)}_{i,\ell+1} = x^{(t)}_{i,\ell}-\eta_i(\nabla f_i(x^{(t)}_{i,\ell})-\nabla f_i(\bar{x}_{t})+\nabla f(\bar{x}_t))$. Now suppose the global model $\bar{x}_t$ at the beginning of round $t$ has already converged to $x^*$. Since $x^{(t)}_{i,0}=\bar{x}_t, \forall i \in \mathcal{S}$, and $\nabla f(x^*)=0$, it is easy to see that the iterates of the clients do not evolve any further, as one would ideally want. \textit{Thus, the global optimum  $x^*$ can be viewed as a fixed-point of the update rule for \texttt{FedLin} .} Adapting to our notation, and considering the case when there is no noise in the gradients, the update rule of \texttt{SCAFFOLD} takes the form $x^{(t)}_{i,\ell+1} = x^{(t)}_{i,\ell}-\eta(\nabla f_i(x^{(t)}_{i,\ell})-c_i+c)$, where $c_i$ is a `control-variate' maintained by client $i$, and $c$ is the average of the $c_i$'s. Importantly, the control variates $\{c_i\}_{i\in\mathcal{S}}$ used in round $t$ of \texttt{SCAFFOLD} contain stale terms from round $t-1$. As a result, even if $\bar{x}_t=x^*$, it may very well be that $(\nabla f_i(\bar{x}_t)-c_i+c) \neq 0$, causing the iterates of the clients to move away from $x^*$, and requiring further rounds of communication to average out the imbalance; this is clearly undesirable. Thus, the fixed-point property we discussed for \texttt{FedLin} does not hold in general for \texttt{SCAFFOLD}. 

In the following sections, we will show that \texttt{FedLin} guarantees linear convergence rates despite objective heterogeneity, systems heterogeneity, and gradient sparsification.  

\section{Matching Centralized Rates under Objective and Systems Heterogeneity}
\label{sec:speed_acc}
In this section, we will investigate the performance of 
\texttt{FedLin} in the face of both objective and systems heterogeneity. To focus solely on the aspect of client heterogeneity, we will assume throughout this section that gradients are neither sparsified by the clients, nor by the server, i.e., $\delta_{c}=\delta_{s}=1$. Based on the second assumption, observe that $\rho_{i,t}=0,e_t=0,\forall i\in\mathcal{S},\forall t\in\{1,\ldots,T\}.$ Thus, the local update rule for each client in line 5 of \texttt{FedLin} simplifies to
\begin{equation}
    x^{(t)}_{i,\ell+1} = x^{(t)}_{i,\ell}-\eta_i(\nabla f_i(x^{(t)}_{i,\ell})-\nabla f_i(\bar{x}_{t})+\nabla f(\bar{x}_t)).
\end{equation}
Let us denote by $\kappa=L/\mu$ the condition number of an $L$-smooth and $\mu$-strongly convex function. Also,  let $\eta_i={\bar{\eta}}/{\tau_i}, \forall i\in\mathcal{S}$, where $\bar{\eta} \in (0,1)$ is a flexible parameter that we will specify based on context. We are now ready to state the main results of this section. 
\begin{theorem} \label{thm:strconv_dh} (\textbf{Strongly convex case}) Suppose each $f_i(x)$ is $L$-smooth and $\mu$-strongly convex. Moreover, suppose $\tau_i\geq 1,\forall i\in\mathcal{S}$, and $\delta_c=\delta_s=1$. Then, with $\eta_i=\frac{1}{6L\tau_i}, \forall i \in \mathcal{S}$, \texttt{FedLin} guarantees:
\begin{equation}
    f(\bar{x}_{T+1})-f(x^*) \leq {\left(1-\frac{1}{6\kappa}\right)}^T (f(\bar{x}_{1})-f(x^*)). 
\label{eqn:str_rate}
\end{equation}
\end{theorem}

\begin{theorem} \label{thm:conv_dh} (\textbf{Convex case}) Suppose each $f_i(x)$ is $L$-smooth and   convex. Moreover, suppose $\tau_i \geq 1,\forall i\in\mathcal{S}$, and $\delta_c=\delta_s=1$. Then, \texttt{FedLin} guarantees the following. 
\begin{itemize}
    \item With $\eta_i=\frac{1}{10L\tau_i}, \forall i \in \mathcal{S}$, 
\begin{equation}
    f\left(\frac{1}{T}\sum\limits_{t=1}^{T}\bar{x}_t\right)-f(x^*) \leq \frac{10L}{T}\left({\Vert\bar{x}_1-x^*\Vert}^2-{\Vert\bar{x}_{T+1}-x^*\Vert}^2\right). 
\label{eqn:conv_ct_1}
\end{equation}
\item With $\eta_i=\frac{1}{6L\tau_i}, \forall i \in \mathcal{S}$,
\begin{equation}
  f(\bar{x}_T)-f(x^*)  \leq \frac{f(\bar{x}_1)-f(x^*)}{c(f(\bar{x}_1)-f(x^*))(T-1)+1}, \textrm{where}\hspace{1mm} c=\frac{1}{12{\Vert\bar{x}_1-x^*\Vert}^2L}.
\label{eqn:conv_ct_2}
\end{equation}
\end{itemize}
\end{theorem}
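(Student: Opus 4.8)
The plan is to collapse each round of \texttt{FedLin} into a single \emph{inexact} gradient-descent step on the global objective $f$, and then to run the two standard smooth-convex templates (averaged-iterate and best-iterate) on top of that reduction. The first step is to compute the net effect of a round on the global iterate. Summing the local rule $x^{(t)}_{i,\ell+1}=x^{(t)}_{i,\ell}-\eta_i(\nabla f_i(x^{(t)}_{i,\ell})-\nabla f_i(\bar{x}_t)+\nabla f(\bar{x}_t))$ over $\ell=0,\dots,\tau_i-1$, averaging over clients, and using $\eta_i=\bar\eta/\tau_i$, I get
$$\bar{x}_{t+1}=\bar{x}_t-\bar\eta\,d_t,\qquad d_t:=\frac{1}{m}\sum_{i\in\mathcal{S}}\frac{1}{\tau_i}\sum_{\ell=0}^{\tau_i-1}\nabla f_i(x^{(t)}_{i,\ell}),$$
where $\bar\eta=1/(10L)$ for the first bound and $\bar\eta=1/(6L)$ for the second. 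The crucial simplification is that the tracking terms cancel in the average, since $\frac{1}{m}\sum_i(\nabla f(\bar{x}_t)-\nabla f_i(\bar{x}_t))=0$. Hence $d_t=\nabla f(\bar{x}_t)+e_t$ with $e_t=\frac{1}{m}\sum_i\frac{1}{\tau_i}\sum_\ell(\nabla f_i(x^{(t)}_{i,\ell})-\nabla f_i(\bar{x}_t))$ an error driven purely by client drift.

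I would then control $e_t$. By $L$-smoothness, $\|e_t\|\leq \frac{L}{m}\sum_i\frac{1}{\tau_i}\sum_\ell\|x^{(t)}_{i,\ell}-\bar{x}_t\|$, so everything reduces to a drift estimate. Invoking the convex drift lemma (Lemma \ref{lemma:drift_conv_ct}, the counterpart of Lemma \ref{lemma:drift_conv}), which I expect to yield $\|x^{(t)}_{i,\ell}-\bar{x}_t\|\lesssim\bar\eta\|\nabla f(\bar{x}_t)\|$ uniformly over $\ell\le\tau_i$, I obtain $\|e_t\|\lesssim L\bar\eta\|\nabla f(\bar{x}_t)\|$ with a small absolute constant. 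This is exactly where the choice $\eta_i=\bar\eta/\tau_i$ earns its keep: the raw drift scales with $\eta_i\tau_i=\bar\eta$ and is therefore \emph{independent of $\tau_i$}, which is what makes the final rate insensitive to systems heterogeneity.

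For the averaged-iterate bound \eqref{eqn:conv_ct_1}, I expand $\|\bar{x}_{t+1}-x^*\|^2=\|\bar{x}_t-x^*\|^2-2\bar\eta\langle d_t,\bar{x}_t-x^*\rangle+\bar\eta^2\|d_t\|^2$, lower-bound $\langle\nabla f(\bar{x}_t),\bar{x}_t-x^*\rangle\geq f(\bar{x}_t)-f(x^*)$ by convexity, use $\|\nabla f(\bar{x}_t)\|^2\leq 2L(f(\bar{x}_t)-f(x^*))$ by smoothness, and absorb the $e_t$-contributions via the drift estimate to reach the per-round inequality
$$\|\bar{x}_{t+1}-x^*\|^2\leq\|\bar{x}_t-x^*\|^2-\frac{1}{10L}\left(f(\bar{x}_t)-f(x^*)\right).$$
Telescoping over $t=1,\dots,T$ and applying Jensen's inequality to $\frac{1}{T}\sum_t\bar{x}_t$ gives \eqref{eqn:conv_ct_1}; this inequality also certifies that $\|\bar{x}_t-x^*\|$ is non-increasing. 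For the $1/T$ bound \eqref{eqn:conv_ct_2}, I instead apply the smoothness descent lemma, $f(\bar{x}_{t+1})\leq f(\bar{x}_t)-\bar\eta\langle\nabla f(\bar{x}_t),d_t\rangle+\frac{L\bar\eta^2}{2}\|d_t\|^2$, and after absorbing $e_t$ obtain $\Delta_{t+1}\leq\Delta_t-\frac{1}{12L}\|\nabla f(\bar{x}_t)\|^2$ with $\Delta_t:=f(\bar{x}_t)-f(x^*)$. Combining with the convexity estimate $\Delta_t\leq\|\nabla f(\bar{x}_t)\|\,\|\bar{x}_t-x^*\|\leq\|\nabla f(\bar{x}_t)\|\,\|\bar{x}_1-x^*\|$ (using the just-established monotonicity of $\|\bar{x}_t-x^*\|$) gives the quadratic recursion $\Delta_{t+1}\leq\Delta_t-c\,\Delta_t^2$ with $c=\frac{1}{12L\|\bar{x}_1-x^*\|^2}$; dividing by $\Delta_t\Delta_{t+1}$ and telescoping $1/\Delta_t$ yields \eqref{eqn:conv_ct_2}.

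The main obstacle I anticipate is the constant-tracking in the perturbed descent inequalities. The drift error $e_t$ enters both the cross term $\langle e_t,\bar{x}_t-x^*\rangle$ and the squared-norm term $\|d_t\|^2$, and one must verify that with $\bar\eta=1/(10L)$ (resp. $1/(6L)$) these perturbations only degrade the leading negative term by a bounded factor—turning the ideal coefficient $\tfrac{1}{5L}$ into $\tfrac{1}{10L}$ for the first bound, and $\tfrac{1}{6L}$ into $\tfrac{1}{12L}$ for the second. Making the drift constant small enough, through the right Gronwall-type argument inside the drift lemma, so that these absorptions close with exactly the stated constants, is the delicate step; everything downstream (Jensen, telescoping, solving the scalar recursion) is routine.
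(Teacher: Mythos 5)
Your overall scaffolding (drift lemma, telescoping plus Jensen for the first bound, smoothness descent inequality plus the quadratic recursion $\Delta_{t+1}\le\Delta_t-c\,\Delta_t^2$ for the second) coincides with the paper's, and your treatment of the descent inequality for \eqref{eqn:conv_ct_2} is sound: there the error enters only through $\langle\nabla f(\bar x_t),e_t\rangle\le\|\nabla f(\bar x_t)\|\|e_t\|\lesssim L\bar\eta\|\nabla f(\bar x_t)\|^2$, which absorbs into $-\bar\eta\|\nabla f(\bar x_t)\|^2$ at constant step size. The genuine gap is in the distance recursion, i.e., exactly the step you flag as delicate. Splitting $d_t=\nabla f(\bar x_t)+e_t$ and using convexity only at $\bar x_t$ forces you to control the cross term $-2\bar\eta\langle e_t,\bar x_t-x^*\rangle$ by Cauchy--Schwarz, which with the drift estimate yields a term of order $L\bar\eta^2\|\nabla f(\bar x_t)\|\,\|\bar x_t-x^*\|$. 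In the merely convex setting this is \emph{not} dominated by $f(\bar x_t)-f(x^*)$: convexity gives $f(\bar x_t)-f(x^*)\le\|\nabla f(\bar x_t)\|\|\bar x_t-x^*\|$, which is the wrong direction, and the ratio can be unbounded; e.g., for $f(x)=\tfrac12\max(|x|-1,0)^2$ with $x^*=-1$ and $\bar x_t=1+\delta$ one has $\|\nabla f(\bar x_t)\|\|\bar x_t-x^*\|\approx 2\delta$ while $f(\bar x_t)-f(x^*)=\delta^2/2$. Young's inequality does not rescue this: the leftover $\|\bar x_t-x^*\|^2$ piece carries a positive coefficient, so the recursion degrades to $\|\bar x_{t+1}-x^*\|^2\le(1+\epsilon)\|\bar x_t-x^*\|^2-\mathrm{const}\cdot\bar\eta\,(f(\bar x_t)-f(x^*))$, whose growth factor compounds under telescoping and, at fixed $\bar\eta=1/(10L)$, produces an error floor rather than \eqref{eqn:conv_ct_1}. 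Moreover, since your proof of \eqref{eqn:conv_ct_2} borrows the monotonicity of $\|\bar x_t-x^*\|$ from this same recursion, that part inherits the gap as well.

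The repair is not to split $d_t$ at all, but to decompose the full cross term per client and per local step through the intermediate iterate, as the paper does in Lemma \ref{lemma:iterate_subopt_ct}: write $\langle \bar x_t-x^*,\nabla f_i(x_{i,\ell})\rangle=\langle \bar x_t-x_{i,\ell},\nabla f_i(x_{i,\ell})\rangle+\langle x_{i,\ell}-x^*,\nabla f_i(x_{i,\ell})\rangle$, bound the first piece by smoothness, $\ge f_i(\bar x_t)-f_i(x_{i,\ell})-\tfrac{L}{2}\|x_{i,\ell}-\bar x_t\|^2$, and the second by convexity \emph{at the local iterate}, $\ge f_i(x_{i,\ell})-f_i(x^*)$. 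The $f_i(x_{i,\ell})$ terms cancel, so the perturbation enters only as the squared drift $\|x_{i,\ell}-\bar x_t\|^2\le\bar\eta^2\|\nabla f(\bar x_t)\|^2\le 2L\bar\eta^2\,(f(\bar x_t)-f(x^*))$ (via Lemma \ref{lemma:drift_conv_ct} and smoothness), which does absorb into the negative term and yields $\|\bar x_{t+1}-x^*\|^2\le\|\bar x_t-x^*\|^2-2\bar\eta(1-5\bar\eta L)(f(\bar x_t)-f(x^*))$. With $\bar\eta=1/(10L)$ this is precisely your claimed per-round inequality, it also certifies the monotonicity you need, and both bounds then follow exactly as you outline.
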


\begin{theorem} \label{thm:nonconv_dh} (\textbf{Non-convex case}) Suppose each $f_i(x)$ is $L$-smooth.  Moreover, suppose $\tau_i\geq 1,\forall i\in\mathcal{S}$, and $\delta_c=\delta_s=1$. Then, with $\eta_i=\frac{1}{26L\tau_i}, \forall i \in \mathcal{S}$, \texttt{FedLin} guarantees:
\begin{equation}
    \min_{t\in[T]} {\Vert \nabla f(\bar{x}_t) \Vert}^2 \leq \frac{52L}{T} (f(\bar{x}_1)-f(\bar{x}_{T+1})).
\end{equation} \nonumber
\end{theorem}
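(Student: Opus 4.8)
The plan is to follow the standard descent-lemma template for smooth non-convex optimization, where the bulk of the work is to show that the \emph{averaged} \texttt{FedLin} update behaves like a single gradient step on $f$ up to a controllable perturbation. First I would compute the one-round displacement of the global iterate. Since $\delta_c=\delta_s=1$ we have $g_t=\nabla f(\bar{x}_t)$, so unrolling the local update in line 5 over the $\tau_i$ steps (with $\eta_i=\bar\eta/\tau_i$, $\bar\eta=1/(26L)$) and using $x^{(t)}_{i,0}=\bar{x}_t$ gives $x^{(t)}_{i,\tau_i}-\bar{x}_t=-\bar\eta\nabla f(\bar{x}_t)-\frac{\bar\eta}{\tau_i}\sum_{\ell=0}^{\tau_i-1}(\nabla f_i(x^{(t)}_{i,\ell})-\nabla f_i(\bar{x}_t))$. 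Averaging over $i$ and using $\frac{1}{m}\sum_{i\in\mathcal{S}}\nabla f_i(\bar{x}_t)=\nabla f(\bar{x}_t)$ yields
\[
\bar{x}_{t+1}-\bar{x}_t=-\bar\eta\left(\nabla f(\bar{x}_t)+\epsilon_t\right),\qquad \epsilon_t:=\frac{1}{m}\sum_{i\in\mathcal{S}}\frac{1}{\tau_i}\sum_{\ell=0}^{\tau_i-1}\left(\nabla f_i(x^{(t)}_{i,\ell})-\nabla f_i(\bar{x}_t)\right).
\]
Thus the global iterate performs an \emph{inexact} gradient step on $f$, with $\epsilon_t$ the error introduced by client drift.

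The key step is to bound $\epsilon_t$ in terms of $\Vert\nabla f(\bar{x}_t)\Vert$. By $L$-smoothness, $\Vert\nabla f_i(x^{(t)}_{i,\ell})-\nabla f_i(\bar{x}_t)\Vert\le L\Vert x^{(t)}_{i,\ell}-\bar{x}_t\Vert$, so everything reduces to bounding the drift $\Vert x^{(t)}_{i,\ell}-\bar{x}_t\Vert$ --- precisely the role of Lemma \ref{lemma:drift_conv_ct}. I would derive this from the recursion $\Vert x^{(t)}_{i,\ell+1}-\bar{x}_t\Vert\le(1+\eta_iL)\Vert x^{(t)}_{i,\ell}-\bar{x}_t\Vert+\eta_i\Vert\nabla f(\bar{x}_t)\Vert$ (triangle inequality plus smoothness), unrolled from $\Vert x^{(t)}_{i,0}-\bar{x}_t\Vert=0$. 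Because $\eta_iL\ell\le\eta_iL\tau_i=1/26$, the growth factor $(1+\eta_iL)^\ell$ stays bounded by $e^{1/26}$ and the geometric sum collapses to a bound of the form $\Vert x^{(t)}_{i,\ell}-\bar{x}_t\Vert\le\frac{c_0}{L}\Vert\nabla f(\bar{x}_t)\Vert$ for a small absolute constant $c_0$, \emph{independent of $\tau_i$}. \textbf{This $\tau_i$-independence is the crux}: the choice $\eta_i\propto 1/\tau_i$ cancels the $\tau_i$ local steps against the $1/\tau_i$ step-size, which is exactly why the final rate is unaffected by systems heterogeneity. Substituting back gives $\Vert\epsilon_t\Vert\le\gamma\Vert\nabla f(\bar{x}_t)\Vert$ for a small constant $\gamma$ (on the order of $c_0$).

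Finally I would invoke the descent lemma for the $L$-smooth function $f$ (an average of $L$-smooth functions is $L$-smooth): $f(\bar{x}_{t+1})\le f(\bar{x}_t)+\langle\nabla f(\bar{x}_t),\bar{x}_{t+1}-\bar{x}_t\rangle+\frac{L}{2}\Vert\bar{x}_{t+1}-\bar{x}_t\Vert^2$. Plugging in the displacement, bounding the cross term by Cauchy--Schwarz ($-\langle\nabla f(\bar{x}_t),\epsilon_t\rangle\le\gamma\Vert\nabla f(\bar{x}_t)\Vert^2$) and the quadratic term by the triangle inequality ($\Vert\nabla f(\bar{x}_t)+\epsilon_t\Vert^2\le(1+\gamma)^2\Vert\nabla f(\bar{x}_t)\Vert^2$) yields
\[
f(\bar{x}_{t+1})\le f(\bar{x}_t)-\bar\eta\left(1-\gamma-\tfrac{L\bar\eta}{2}(1+\gamma)^2\right)\Vert\nabla f(\bar{x}_t)\Vert^2.
\]
With $\bar\eta=1/(26L)$ and $\gamma,L\bar\eta$ small, the parenthesized factor exceeds $1/2$ with substantial slack, giving $f(\bar{x}_{t+1})\le f(\bar{x}_t)-\frac{1}{52L}\Vert\nabla f(\bar{x}_t)\Vert^2$. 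Rearranging to $\Vert\nabla f(\bar{x}_t)\Vert^2\le 52L(f(\bar{x}_t)-f(\bar{x}_{t+1}))$, summing over $t=1,\ldots,T$ so the right-hand side telescopes, and lower-bounding the sum by $T\min_{t\in[T]}\Vert\nabla f(\bar{x}_t)\Vert^2$ delivers the claim. The main obstacle lies entirely in the drift estimate of the second paragraph; once it is in hand the descent computation is routine, and the generous constants ($26$ and $52$) leave ample room to absorb the drift error.
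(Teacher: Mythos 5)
Your proposal is correct and follows essentially the same route as the paper's proof: the paper's Lemma \ref{lemma:f_bound_ct} is exactly your descent-lemma computation (with the cross term handled by Cauchy--Schwarz and the quadratic term by Jensen's rather than your $(1+\gamma)^2$ bound), and your drift estimate is precisely the paper's Lemma \ref{lemma:drift_nonconv_ct}, derived from the same recursion $\Vert x_{i,\ell+1}-\bar{x}_t\Vert\leq(1+\eta_iL)\Vert x_{i,\ell}-\bar{x}_t\Vert+\eta_i\Vert\nabla f(\bar{x}_t)\Vert$ and the same observation that $\eta_i\propto 1/\tau_i$ makes the bound $\tau_i$-independent. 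The only differences are cosmetic (your ``inexact gradient step'' packaging with $\epsilon_t$, and a slightly tighter drift constant from using $\eta_iL\tau_i=1/26$ instead of the paper's $(1+1/\tau_i)^{\tau_i}\leq 3$), and both analyses land on the same per-round decrease $f(\bar{x}_{t+1})\leq f(\bar{x}_t)-\frac{1}{52L}\Vert\nabla f(\bar{x}_t)\Vert^2$ followed by telescoping.
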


\begin{theorem} \label{thm:nonconv_PL} (\textbf{Non-convex PL case}) Suppose each $f_i(x)$ is $L$-smooth, and $f(x)$ satisfies the Polyak-Lojasiewicz (PL) condition \cite{karimi} with a constant $\mu > 0$, i.e., for any $x\in\mathbb{R}^{d}, 
    {\Vert \nabla f(x) \Vert}^2 \geq 2\mu (f(x)-f(x^*))$, 
where $x^* \in \argmin_{x \in \mathbb{R}^d} f(x)$. Moreover, suppose $\tau_i\geq 1,\forall i\in\mathcal{S}$, and $\delta_c=\delta_s=1$. Then, with $\eta_i=\frac{1}{26L\tau_i}, \forall i \in \mathcal{S}$, \texttt{FedLin} guarantees:
\begin{equation}
    f(\bar{x}_{T+1})-f(x^*) \leq {\left(1-\frac{1}{26\kappa}\right)}^T (f(\bar{x}_{1})-f(x^*)). \nonumber
\end{equation}
\end{theorem}

\textbf{Noisy Case Analysis:} We now analyze the performance of \texttt{FedLin} under a general stochastic oracle model. For each $i\in \mathcal{S}$ and $x\in\mathbb{R}^d$, let $q_i(x)$ be an unbiased estimate of the gradient $\nabla f_i(x)$ with variance bounded above by $\sigma^2$. We consider the following noisy update rule for \texttt{FedLin}:
$$
x^{(t)}_{i,\ell+1} = x^{(t)}_{i,\ell}-\eta_i(q_i(x^{(t)}_{i,\ell})- q_i(\bar{x}_{t})+q(\bar{x}_t)),$$ 

where $q(x) \triangleq 1/m \sum_{i\in\mathcal{S}} q_i(x), \forall x\in\mathbb{R}^d.$  We then have the following result. 

\begin{theorem} \label{thm:noisy}  (\textbf{Strongly convex case with noise}) Consider the above stochastic oracle model. Suppose each $f_i(x)$ is $L$-smooth and $\mu$-strongly convex. Moreover, suppose $\tau_i\geq 1,\forall i\in\mathcal{S}$, and $\delta_c=\delta_s=1$. For each $i\in\mathcal{S}$, let  $\eta_i=\frac{\bar{\eta}}{\tau_i}$, where $\bar{\eta}\in (0,1)$ satisfies $\bar{\eta}  
< \frac{1}{6L}$. Then, $\forall t\in [T]$, \texttt{FedLin} guarantees:
\begin{equation}
    \mathbb{E}[{\Vert \bar{x}_{t+1}-x^* \Vert}^2] \leq \left(1-  \frac{\bar{\eta} \mu}{2} \right) \mathbb{E}[{\Vert \bar{x}_{t}-x^* \Vert}^2] +25 \bar{\eta}^2 \sigma^2.
\end{equation}
\end{theorem}

The proofs of Theorems \ref{thm:strconv_dh}, \ref{thm:conv_dh}, \ref{thm:nonconv_dh}, \ref{thm:nonconv_PL}, and \ref{thm:noisy} are all provided in  Appendix \ref{app:proofs_CT}. 

\textbf{Main Takeaways:} From Theorems \ref{thm:strconv_dh}, \ref{thm:conv_dh}, \ref{thm:nonconv_dh}, and \ref{thm:nonconv_PL}, we note that \texttt{FedLin} matches the guarantees of centralized gradient descent (up to constants) for the strongly convex, convex, and non-convex settings, respectively, despite \textit{arbitrary} heterogeneity in the client objectives, and \textit{arbitrary} device heterogeneity. For the strongly convex setting in particular, \texttt{FedLin} resolves the speed-accuracy conflict by guaranteeing linear convergence to the global minimum. For the non-convex settings, convergence is guaranteed to a first-order stationary point of $f(x)$. \textit{To the best of our knowledge, ours is the only paper to provide such comprehensive theoretical guarantees that account for both forms of client heterogeneity.} In fact, all our results continue to hold even when the operating speeds of the client machines vary across rounds, i.e., $\tau_i$ is allowed to be a function of $t$. Each client $i$ can simply adjust its learning rate $\eta_i \propto 1/ \tau_i(t)$ \textit{locally} to account for such variations.

The bound for the noisy case in Theorem \ref{thm:noisy} resembles that of centralized SGD \cite{nemirovski}: With a time-varying parameter $\bar{\eta_t} = O(1/t)$, we get the standard $O(1/T)$ rate after $T$ rounds. Note that to arrive at this result, the assumptions we make on the stochastic oracle model are exactly the same as for centralized SGD, namely unbiased gradients with bounded variance. 

\textbf{Comparison with Related Work}: In the recent paper \cite{fedsplit}, the authors propose \texttt{FedSplit}, where the local steps involve solving a proximal optimization problem. For a deterministic setting with strongly-convex and smooth loss functions, the authors show (see \cite[Theorem 1]{fedsplit}) that \texttt{FedSplit} guarantees linear convergence up to an error floor $b$, where $b$ is a residual error that arises from the inexactness of the proximal evaluation step. Importantly, $b$ does not shrink to $0$. Thus, like \texttt{FedAvg} and \texttt{FedProx}, \texttt{FedSplit} fails to guarantee linear convergence to $x^*$. Empirically, we observe that \texttt{FedSplit} diverges on certain instances; see Appendix \ref{fedsplit:num}. Compared to these algorithms, we see from Theorem \ref{thm:strconv_dh} that \texttt{FedLin} guarantees linear convergence to $x^*$. Notably, the linear convergence rate we obtain in Theorem \ref{thm:strconv_dh} under \textit{both} objective and systems heterogeneity is the \emph{best rate we know of in FL}, and matches that of \texttt{SCAFFOLD} \cite{scaffold} where only objective heterogeneity is considered.

The model of systems heterogeneity that we study here is taken from \cite{FedNova}. From an algorithmic standpoint, the key to the \texttt{FedNova} approach in \cite{FedNova} is to perform \textit{normalized aggregation} of local gradients. Importantly, in \texttt{FedNova}, clients have identical step-sizes, and the normalized aggregation takes place at the end of each round. In contrast, the client-specific step-sizes in \texttt{FedLin} affect how the local iterates evolve \textit{within} each round - a subtle, but key technical distinction from \texttt{FedNova}. Moreover, in Section  \ref{sec:Motivation} we saw, both theoretically and empirically, that normalized aggregation of gradients, on its own, does not resolve the speed-accuracy trade-off. In terms of results, \cite{FedNova} provides guarantees  only for the non-convex setting  under a bounded dissimilarity assumption. Our results, on the other hand, cover all the three standard settings - strongly convex, convex, and non-convex - without requiring any bounded dissimilarity assumption.

In a concurrent work \cite{FLANP}, the authors propose a novel straggler-resilient federated learning technique termed \texttt{FLANP} that is based on the idea of adaptively selecting the participating clients during the training process. When the samples of all clients come from the same distribution, the authors show that selecting the fastest clients first, and then gradually selecting the slower ones, leads to overall improvements in the learning time. Unlike the setting in \cite{FLANP} where fast machines can only help towards learning the global model, objective heterogeneity may cause fast machines to drift more (towards the minima of their own loss function) in our formulation, i.e.,  \textit{fast machines can potentially hurt as well}. This naturally requires a more cautious approach to handling systems heterogeneity: we penalize faster machines more with lower learning rates. 

\subsection{The Price of Intermittent  Communication}
\label{sec:comm_eff}
In this section, we take a closer look at the effect of performing multiple local steps on the convergence rate. To do so, we assume that all clients perform the same number of local steps $H$, i.e., there is no communication for $H$ consecutive time-steps between two communication rounds. We seek to answer: \textit{What is the price paid for such intermittent communication?} To answer this question, consider a centralized baseline where each client can communicate with every other client at all times (i.e., even between rounds). In this case, since each client can always access $\nabla f(x)$, running standard gradient descent on each client in parallel provides a guarantee of the form
\begin{equation}
f(\bar{x}_{T+1})-f(x^*) \leq \exp(-\frac{1}{\kappa}TH) (f(\bar{x}_{1})-f(x^*))
\label{eqn:centralized}
\end{equation}
after $T$ rounds, with $H$ synchronized local iterations within each round.  However, in the heterogeneous case, based on Theorem \ref{thm:strconv_dh}, we can only guarantee
\begin{equation}
f(\bar{x}_{T+1})-f(x^*) \leq \exp(-\frac{1}{6\kappa}T) (f(\bar{x}_{1})-f(x^*))
\label{eqn:het_rate}
\end{equation}
after $T$ communication rounds.\footnote{To arrive at \eqref{eqn:het_rate} from \eqref{eqn:str_rate}, we used the inequality ${(1-z)}^r \leq \exp(-zr)$, where   $z\in(0,1)$ and $r\geq 0$.}  Comparing \eqref{eqn:het_rate} with \eqref{eqn:centralized}, observe that we lose out by a factor of $H$ in the exponent relative to the centralized case. Both in the centralized scenario and in \texttt{FedLin}, each client queries the gradient of its local objective $H$ times in each round, thereby making $TH$ gradient queries over $T$ rounds. The key difference, however, lies in the fact that in the centralized scenario, given any two clients $i,j\in\mathcal{S}$,  $\nabla f_i(\cdot) $ and $\nabla f_j(\cdot)$ are always evaluated at the same synchronized point. In contrast, with \texttt{FedLin}, these gradients are computed at potentially different points $x^{(t)}_{i,\ell}$ and $x^{(t)}_{j,\ell}$ at each local step $\ell$. Consequently, after $T$ rounds, while a total of $mTH$ gradient calls are made both in the centralized case and in \texttt{FedLin}, the former translates to $m$ parallel and identical implementations of $TH$ gradient descent steps on $f(x)$, while the latter does not, in general. 

To sum up the above discussion, relative to a centralized baseline, \texttt{FedLin} incurs the same computational cost in terms of gradient queries, and reduces communication by a factor of $H$ at the expense of a convergence rate that is slower by a factor of $H$. We emphasize here that just as with $\texttt{FedLin}$, $H$ does not show up in the convergence rate (exponent) of algorithms like \texttt{FedSplit} \cite{fedsplit} and \texttt{SCAFFOLD} \cite{scaffold} either. 

\textbf{Utility of local steps:} {Is there any benefit to performing a large number of local steps?} From Proposition \ref{prop:homogeneous} in Appendix \ref{app:proofthm_speed_acc}, we see that there is indeed a clear benefit in the homogeneous setting: the convergence rate or exponent scales linearly with $H$, and hence, performing more local steps directly translates to faster convergence. This is also true for a centralized method, as seen from \eqref{eqn:centralized}.  Unfortunately, Theorem \ref{thm:strconv_dh} is unable to capture any such advantage for the heterogeneous setting. The primary reason for the slower convergence rate (relative to a centralized baseline) stems from the need to set $\eta \propto 1/H $ to mitigate client-drift under objective heterogeneity. Loosely speaking, any progress made during the course of the $H$ local iterations gets ``washed away" when the learning rate $\eta$ is scaled down by $H$. At this stage, one may conjecture that the above phenomenon is simply an artifact of a conservative analysis of \texttt{FedLin}, and that a more refined analysis will reveal the utility of performing more local steps even in the heterogeneous setting. Our next result suggests otherwise. 
\begin{theorem} \label{thm:lowerbnd} (\textbf{Lower bound for \texttt{FedLin}}) 
Suppose $\delta_c=\delta_s=1$, and $\tau_i=H, \eta_i=\eta,  \forall i\in\mathcal{S}$. Then, given any $L \geq 14$ and $H\geq 2$, there exists an instance involving $2$ clients where each $f_i(x), i\in\{1,2\}$, is $1$-strongly convex and $L$-smooth, and an initial condition $\bar{x}_1$, such that \texttt{FedLin} initialized from $\bar{x}_1$ generates a sequence of iterates $\{\bar{x}_t\}$ satisfying the following for any $T\geq 1$:
\begin{equation}
    {\Vert \bar{x}_{T+1}-x^* \Vert}^2 \geq \exp{(-4T)} {\Vert \bar{x}_{1}-x^* \Vert}^2; \hspace{4mm} f(\bar{x}_{T+1}) - f(x^*) \geq \exp(-4T) (f(\bar{x}_{1}) - f(x^*)).
\label{eqn:lbnd}
\end{equation}
\end{theorem}
\begin{proof}
See Appendix \ref{app:lowerbnd}.
\end{proof} 
\textbf{Main Takeaways:} There are several important implications of Theorem \ref{thm:lowerbnd}. First, it complements Theorem \ref{thm:strconv_dh} by providing a matching lower bound.  \textit{We believe ours is the first work to provide a \textit{tight} linear convergence rate analysis in FL}: \cite{scaffold} and \cite{fedsplit} only provide upper-bounds for \texttt{SCAFFOLD} and \texttt{FedSplit}, respectively. Second, our analysis of Theorem \ref{thm:lowerbnd} in Appendix \ref{app:lowerbnd} indicates that there are problem  instances where setting $\eta\propto 1/H$ is in fact \textit{necessary} to guarantee convergence to $x^*$. As a result, for such problem instances, no matter how many local steps $H$ each client performs, the error at the end of $T$ rounds remains bounded below by an $H$-independent quantity, as is apparent from \eqref{eqn:lbnd}. Perhaps surprisingly, we show in Appendix \ref{app:lowerbnd} that the lower bound in Theorem \ref{thm:lowerbnd} even applies to simple instances with non-identical quadratic losses (across clients) \textit{where every $f_i(x)$ has the same minimum!} This is particularly insightful since it highlights the limitations of exploiting stale gradient terms in the local update rule (as is done in both \texttt{FedLin} and \texttt{SCAFFOLD}), and suggests the need for more informed updating schemes that explicitly take into account the level of statistical heterogeneity.

\begin{remark}
Although the lower-bound in Theorem \ref{thm:lowerbnd} is derived for \texttt{FedLin}, we conjecture that unless additional assumptions are made on the level of statistical/objective heterogeneity, such a bound will apply to \textbf{any} federated learning algorithm that involves periodic communication with the server. After all, there is only so much one can hope to achieve by exploiting memory. 
\label{rem:conjecture}
\end{remark}

\section{Impact of Gradient Sparsification at Server}
\label{sec:serv_sparse}
In this section, we will address the following question. \textit{For strongly convex and smooth deterministic functions, and in the presence of both objective and systems heterogeneity, can we still hope for linear convergence to the global minimum when gradients are sparsified at the server?} 

Interestingly, we will show that not only is it possible to converge linearly to the global  minimum, it is possible to do so \textit{without any form of error-feedback}. Moreover, this claim holds regardless of how aggressive the server is in its sparsification scheme. In particular, the server might even choose to transmit just one single component of the aggregated gradient vector, i.e., our results hold even when $k=1$, implying $\delta_s=d$. 

To isolate the impact of server-level sparsification, we will assume throughout this section that gradients are not sparsified at the clients, i.e., $\delta_c=1$. Consequently, $h_{i,t+1}=\nabla f_i(\bar{x}_{t+1}), \forall i \in \mathcal{S}, \forall t \in \{1,\dots,T\}$. We will begin by considering a simpler variant of \texttt{FedLin} where $g_{t+1}$ in line 14 of \texttt{FedLin} is instead updated as follows
\begin{equation}
    g_{t+1}=\mathcal{C}_{\delta_s}\left(\frac{1}{m}\sum\limits_{i\in\mathcal{S}} \nabla f_i(\bar{x}_{t+1})\right)=\mathcal{C}_{\delta_s}\left(\nabla f(\bar{x}_{t+1})\right). 
\label{eqn:server_grad}
\end{equation}
Moreover, there is no use of error-feedback at the server side, i.e., line 15 of the algorithm is skipped. For this variant of \texttt{FedLin}, we have the following result. 
\begin{theorem} \label{thm:server_sprs1} (\textbf{Sparsification at server with no error-feedback}) Suppose each $f_i(x)$ is $L$-smooth and $\mu$-strongly convex. Moreover, suppose $\tau_i\geq 1,\forall i\in\mathcal{S}$, and $\delta_c=1$. Consider a variant of \texttt{FedLin} where line 14 is replaced by equation \eqref{eqn:server_grad}, and line 15 is skipped, i.e., there is no error-feedback.  Then, with $\eta_i=\frac{1}{2\left(2+\sqrt{\delta_s}\right)L\tau_i}, \forall i \in \mathcal{S}$, this variant of \texttt{FedLin} guarantees
\begin{equation}
    f(\bar{x}_{T+1})-f(x^*) \leq {\left(1-\frac{1}{2\delta_s \left(2+\sqrt{\delta_s}\right)\kappa}\right)}^T (f(\bar{x}_{1})-f(x^*)). 
\end{equation}
\end{theorem}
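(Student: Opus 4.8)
The plan is to reduce one round of this \texttt{FedLin} variant to a single \emph{inexact} gradient-descent step on $f$ and then run a strongly-convex/smooth descent argument, paralleling the structure of Theorem \ref{thm:strconv_dh} but tracking the distortion caused by the server-side \texttt{TOP-k} operator. First I would write out the global increment. With $\eta_i=\bar{\eta}/\tau_i$, $\bar{\eta}=1/\big(2(2+\sqrt{\delta_s})L\big)$, and $g_t=\mathcal{C}_{\delta_s}(\nabla f(\bar{x}_t))$ in line 5, summing the local updates causes the $g_t$ contributions to telescope to $\bar\eta g_t$, giving
$$\bar{x}_{t+1}-\bar{x}_t=-\bar\eta\,(g_t+\phi_t),\qquad \phi_t=\frac{1}{m}\sum_{i\in\mathcal{S}}\frac{1}{\tau_i}\sum_{\ell=0}^{\tau_i-1}\left(\nabla f_i(x^{(t)}_{i,\ell})-\nabla f_i(\bar{x}_t)\right),$$
so $\phi_t$ is precisely the gradient error induced by client drift.

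The two structural facts I would isolate are properties of $\mathcal{C}_{\delta_s}$ applied to $\nabla f(\bar{x}_t)$. Since \texttt{TOP-k} retains the coordinates in $\mathcal{E}_{\delta_s}$ and zeros the rest, $\langle\nabla f(\bar{x}_t),g_t\rangle=\|g_t\|^2$, and since it keeps the $k$ largest-magnitude coordinates, $\|g_t\|^2\geq(1/\delta_s)\|\nabla f(\bar{x}_t)\|^2$ while $\|g_t\|\leq\|\nabla f(\bar{x}_t)\|$. This correlation identity is the entire reason error-feedback is dispensable: the compressed vector, though biased, is positively aligned with the true gradient and retains a $1/\delta_s$ fraction of its energy.

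Next I would prove a drift lemma. Expressing $x^{(t)}_{i,\ell}-\bar{x}_t$ as a partial sum of the local steps, bounding $\|\nabla f_i(x^{(t)}_{i,j})-\nabla f_i(\bar{x}_t)\|\leq L\|x^{(t)}_{i,j}-\bar{x}_t\|$ by smoothness, and using $\eta_i\tau_i=\bar\eta$, a discrete Gr\"onwall-type induction on $\ell$ (the bound $\ell\leq\tau_i$ is what cancels the $1/\tau_i$) yields the $\tau_i$-independent estimate $\|x^{(t)}_{i,\ell}-\bar{x}_t\|\leq\frac{\bar\eta}{1-\bar\eta L}\|g_t\|$; averaging and applying smoothness once more gives $\|\phi_t\|\leq\gamma\|g_t\|$ with $\gamma=\frac{\bar\eta L}{1-\bar\eta L}=\frac{1}{3+2\sqrt{\delta_s}}$. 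The client-specific step-size is exactly what renders this bound independent of the $\tau_i$, and hence of systems heterogeneity.

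Finally I would invoke the $L$-smoothness descent inequality $f(\bar{x}_{t+1})\leq f(\bar{x}_t)-\bar\eta\langle\nabla f(\bar{x}_t),g_t+\phi_t\rangle+\frac{L\bar\eta^2}{2}\|g_t+\phi_t\|^2$. Using $\langle\nabla f(\bar{x}_t),g_t\rangle=\|g_t\|^2$, Cauchy--Schwarz on the cross term with $\|\phi_t\|\leq\gamma\|g_t\|$ and $\|\nabla f(\bar{x}_t)\|\leq\sqrt{\delta_s}\|g_t\|$, and $\|g_t+\phi_t\|^2\leq2(1+\gamma^2)\|g_t\|^2$, everything collapses to
$$f(\bar{x}_{t+1})-f(\bar{x}_t)\leq-\bar\eta\|g_t\|^2\left[1-\sqrt{\delta_s}\,\gamma-\bar\eta L(1+\gamma^2)\right].$$
Substituting the closed forms $\bar\eta L=\frac{1}{4+2\sqrt{\delta_s}}$ and $\gamma=\frac{1}{3+2\sqrt{\delta_s}}$, the bracket reduces to an explicit rational function of $\sqrt{\delta_s}$ that I would verify is at least $1/2$ for every $\delta_s\geq1$. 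Combining the surviving $-\tfrac{\bar\eta}{2}\|g_t\|^2$ with $\|g_t\|^2\geq(1/\delta_s)\|\nabla f(\bar{x}_t)\|^2$ and the strong-convexity bound $\|\nabla f(\bar{x}_t)\|^2\geq2\mu(f(\bar{x}_t)-f(x^*))$ gives the per-round contraction $f(\bar{x}_{t+1})-f(x^*)\leq\big(1-\tfrac{\bar\eta\mu}{\delta_s}\big)(f(\bar{x}_t)-f(x^*))$, and unrolling over $T$ rounds with $\bar\eta\mu/\delta_s=1/\big(2\delta_s(2+\sqrt{\delta_s})\kappa\big)$ yields the claim. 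The main obstacle is the interaction of the drift lemma with the cross term: the factor $\sqrt{\delta_s}$ lost when replacing $\|\nabla f(\bar{x}_t)\|$ by $\|g_t\|$ must be absorbed by the step-size so that $\sqrt{\delta_s}\,\gamma$ stays safely below $1$; this is exactly why $\bar\eta$ must scale as $1/(2+\sqrt{\delta_s})$ rather than the $1/6$ used in the uncompressed Theorem \ref{thm:strconv_dh}.
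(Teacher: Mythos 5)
Your proposal is correct and takes essentially the same route as the paper's proof in Appendix \ref{app:servsparse1}: the same decomposition of the round increment into $-\bar{\eta}\left(g_t+\phi_t\right)$ with $\phi_t$ the client-drift error, the same use of the \texttt{TOP-k} identity $\langle g_t,\nabla f(\bar{x}_t)\rangle={\Vert g_t\Vert}^2$ to make error-feedback unnecessary, the bound $\Vert \nabla f(\bar{x}_t)\Vert\leq\sqrt{\delta_s}\Vert g_t\Vert$ applied in the same two places (cross term and final conversion), and the concluding strong-convexity contraction. The only minor variation is your drift estimate, obtained by a smoothness-only Gr\"onwall induction and yielding $\frac{\bar{\eta}}{1-\bar{\eta}L}\Vert g_t\Vert$, where the paper instead invokes its convexity-based non-expansiveness argument (Lemma \ref{lemma:two_pt_bnd}, as in Lemma \ref{lemma:drift_conv_ct}) to get $\bar{\eta}\Vert g_t\Vert$; both suffice, since your bracket $1-\sqrt{\delta_s}\,\gamma-\bar{\eta}L(1+\gamma^2)$ is indeed at least $1/2$ for every $\delta_s\geq 1$.
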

\begin{proof}
See Appendix \ref{app:servsparse1}.
\end{proof}

\textbf{Main Takeaways:} From Theorem \ref{thm:server_sprs1}, we note that even without error-feedback, it is possible to linearly converge to the global  minimum; the rate of convergence, however, is inversely proportional to $\delta^{\frac{3}{2}}_s$.  {Thus, Theorem \ref{thm:server_sprs1} quantifies the trade-off between the level of sparsification at the server, and the rate of convergence.}  Observe that when there is no compression of information at the server, i.e., when $\delta_s=1$, we exactly recover Theorem \ref{thm:strconv_dh}. 

At this stage, one may ask: \textit{Is there any potential benefit to employing error-feedback when gradients are sparsified at the server?} Our next result answers this question in the affirmative. 

\begin{theorem} \label{thm:serv_sprs2} (\textbf{Sparsification at server with error-feedback}) Suppose each $f_i(x)$ is $L$-smooth and $\mu$-strongly convex. Moreover, suppose $\tau_i\geq 1,\forall i\in\mathcal{S}$, and $\delta_c=1$. Let the step-size for client $i$ be chosen as $\eta_i = \frac{1}{72L\delta_s \tau_i}$. Then, \texttt{FedLin}  guarantees:
\begin{equation}
 f(\bar{x}_{T+1}) - f(x^*)  \leq 2 \kappa { \left(1-\frac{1}{96 \delta_s \kappa} \right)}^T \left(f(\bar{x}_1)-f(x^*)\right).
\end{equation}
\end{theorem}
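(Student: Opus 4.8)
The plan is to reduce the dynamics to a clean gradient-descent recursion on a virtual iterate, and then control two error sources — the accumulated compression error and the intra-round client drift — inside a single Lyapunov function. Since $\delta_c=1$, we have $\frac{1}{m}\sum_{i\in\mathcal{S}}h_{i,t+1}=\nabla f(\bar{x}_{t+1})$, so the server recursions reduce to $g_t=\mathcal{C}_{\delta_s}(e_{t-1}+\nabla f(\bar{x}_t))$ and $e_t=e_{t-1}+\nabla f(\bar{x}_t)-g_t$, i.e. $g_t=\nabla f(\bar{x}_t)+e_{t-1}-e_t$. Writing $\eta_i=\bar{\eta}/\tau_i$ with $\bar{\eta}=1/(72L\delta_s)$ and summing the $\tau_i$ local steps, the global update collapses to $\bar{x}_{t+1}=\bar{x}_t-\bar{\eta}(g_t+d_t)$, where $d_t=\frac{1}{m}\sum_i\frac{1}{\tau_i}\sum_{\ell=0}^{\tau_i-1}(\nabla f_i(x^{(t)}_{i,\ell})-\nabla f_i(\bar{x}_t))$ is the drift term; the $g_t$-contribution telescopes to exactly $\bar{\eta}g_t$ because $\sum_{\ell}1/\tau_i=1$.

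First I would introduce the shifted sequence $\tilde{x}_t:=\bar{x}_t-\bar{\eta}e_{t-1}$ (so $\tilde{x}_1=\bar{x}_1$ since $e_0=0$). Substituting $g_t=\nabla f(\bar{x}_t)+e_{t-1}-e_t$ into the global update, the compression error cancels and one obtains the clean recursion $\tilde{x}_{t+1}=\tilde{x}_t-\bar{\eta}(\nabla f(\bar{x}_t)+d_t)$. This is the heart of the error-feedback trick: the virtual iterate evolves as though it ran exact gradient descent (plus a drift perturbation), while all the compression lives in the gap $\bar{x}_t-\tilde{x}_t=\bar{\eta}e_{t-1}$. Expanding $\|\tilde{x}_{t+1}-x^*\|^2$, invoking $\mu$-strong convexity and $L$-smoothness of $f$, and absorbing the gradient mismatch $\|\nabla f(\bar{x}_t)-\nabla f(\tilde{x}_t)\|\le L\bar{\eta}\|e_{t-1}\|$, yields a descent inequality of the form $\|\tilde{x}_{t+1}-x^*\|^2\le(1-\bar{\eta}\mu)\|\tilde{x}_t-x^*\|^2+(\text{drift and error residuals})$.

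I would then control the two residuals. For the drift I invoke the systems-heterogeneity drift bound (the analogue of Lemma \ref{lemma:drift_conv_ct}): smoothness gives $\|d_t\|\le L\max_{i,\ell}\|x^{(t)}_{i,\ell}-\bar{x}_t\|=O(\bar{\eta}\|g_t\|)$, and since $\|g_t\|\le\|\nabla f(\bar{x}_t)\|+\|e_{t-1}\|+\|e_t\|$, the drift feeds back both a gradient term and the errors. For the accumulated error I use the \texttt{TOP-k} contraction $\|\mathcal{C}_{\delta_s}(y)-y\|^2\le(1-1/\delta_s)\|y\|^2$ on $y=e_{t-1}+\nabla f(\bar{x}_t)$, which after Young's inequality gives the geometric recursion $\|e_t\|^2\le(1-\tfrac{1}{2\delta_s})\|e_{t-1}\|^2+2\delta_s\|\nabla f(\bar{x}_t)\|^2$; thus the error contracts at rate $\sim 1/\delta_s$ and is driven by the squared gradient. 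Packaging everything into $V_t=\|\tilde{x}_t-x^*\|^2+B\bar{\eta}^2\|e_{t-1}\|^2$ with $B$ chosen so that the positive $\|e\|^2$-contributions are dominated by the error's geometric decay, I would prove the one-step bound $V_{t+1}\le(1-\tfrac{1}{96\delta_s\kappa})V_t$; the scaling $\bar{\eta}=1/(72L\delta_s)$ is exactly what lets $\|\nabla f(\bar{x}_t)\|^2\le 2L(f(\bar{x}_t)-f(x^*))$ be absorbed by the $\bar{\eta}\mu$-contraction. Unrolling gives $V_{T+1}\le(1-\tfrac{1}{96\delta_s\kappa})^T\|\bar{x}_1-x^*\|^2$, and I convert endpoints via $\|\bar{x}_{T+1}-x^*\|^2\le 2V_{T+1}$ (from $\bar{x}_{T+1}=\tilde{x}_{T+1}+\bar{\eta}e_T$), $f(\bar{x}_{T+1})-f(x^*)\le\tfrac{L}{2}\|\bar{x}_{T+1}-x^*\|^2$, and $\|\bar{x}_1-x^*\|^2\le\tfrac{2}{\mu}(f(\bar{x}_1)-f(x^*))$, producing the advertised $2\kappa$ prefactor.

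The main obstacle is the simultaneous control of the two coupled error sources in the Lyapunov step: the drift $d_t$ depends on the compressed $g_t$ (hence on $e_{t-1},e_t$), while the error recursion is itself driven by $\|\nabla f(\bar{x}_t)\|^2$, which the drift perturbation has shifted away from $\nabla f(\tilde{x}_t)$. Closing this loop requires choosing $B$ and the step-size constant so that the cross terms cancel, and it is precisely the geometric $(1-1/(2\delta_s))$ decay of $\|e_t\|^2$ — unavailable without error feedback — that upgrades the rate from the $\delta_s^{3/2}$ of Theorem \ref{thm:server_sprs1} to the linear $\delta_s$ dependence claimed here.
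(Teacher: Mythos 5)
Your skeleton is the paper's own (Appendix \ref{app:servsparse2}): the shifted iterate $\tilde{x}_t=\bar{x}_t-\bar{\eta}e_{t-1}$, the clean recursion $\tilde{x}_{t+1}=\tilde{x}_t-\bar{\eta}(\nabla f(\bar{x}_t)+d_t)$, the \texttt{TOP-k}/Young recursion $\Vert e_t\Vert^2\le\left(1-\tfrac{1}{2\delta_s}\right)\Vert e_{t-1}\Vert^2+2\delta_s\Vert\nabla f(\bar{x}_t)\Vert^2$, the Lyapunov function $\Vert\tilde{x}_t-x^*\Vert^2+B\bar{\eta}^2\Vert e_{t-1}\Vert^2$ (the paper takes $B=1$), and the endpoint conversions producing the $2\kappa$ prefactor. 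However, there is a genuine gap at the descent-inequality step, and it is exactly the step where the paper needs its key lemma. You treat the drift $d_t$ purely as a norm-bounded perturbation ($\Vert d_t\Vert\lesssim L\bar{\eta}\Vert g_t\Vert$) and propose to absorb it, along with the gradient mismatch, against the strong-convexity contraction. But the cross term $-2\bar{\eta}\langle\tilde{x}_t-x^*,d_t\rangle$ can then only be estimated by Cauchy--Schwarz plus Young, and any split whose $\Vert\tilde{x}_t-x^*\Vert^2$ coefficient stays below the available $\bar{\eta}\mu$ contraction produces a companion term of order $\tfrac{\bar{\eta}}{\mu}\Vert d_t\Vert^2\sim\bar{\eta}^3 L\kappa\Vert g_t\Vert^2$ --- a factor $\kappa$ worse than the paper's residual. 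Feeding $\Vert g_t\Vert^2\lesssim\Vert e_{t-1}\Vert^2+L\,(f(\tilde{x}_t)-f(x^*))$ back in, the gradient part then requires $\bar{\eta}^2L^2\kappa\lesssim 1$, i.e., $\bar{\eta}\lesssim 1/(L\sqrt{\kappa})$, and the error part forces $B\gtrsim\delta_s\bar{\eta}L\kappa$, which collides with the ceiling $B\lesssim 1/(\bar{\eta}\delta_s L)$ needed so that the $2B\bar{\eta}^2\delta_s\Vert\nabla f(\bar{x}_t)\Vert^2$ feedback from the error recursion is absorbed by the descent term. For $\kappa$ large relative to $\delta_s^2$ these requirements are incompatible, so the loop you correctly flag as ``the main obstacle'' cannot be closed by choosing $B$: the claimed $\kappa$-independent step size $\bar{\eta}=1/(72L\delta_s)$ and rate $1-\tfrac{1}{96\delta_s\kappa}$ are out of reach along this route.

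The missing idea is the per-client decomposition of Lemma \ref{lemma:iterate_subopt_ct}, which the paper re-runs at $\tilde{x}_t$ to obtain \eqref{eqn:iteratebnd_servsprs2}: write the cross term as $-\tfrac{2}{m}\sum_i\eta_i\sum_{\ell}\langle\tilde{x}_t-x^*,\nabla f_i(x_{i,\ell})\rangle$, split each summand at $x_{i,\ell}$, apply $L$-smoothness of $f_i$ between $\tilde{x}_t$ and $x_{i,\ell}$ and convexity of $f_i$ between $x_{i,\ell}$ and $x^*$. The $f_i(x_{i,\ell})$ terms cancel, yielding the exact descent $-2\bar{\eta}(f(\tilde{x}_t)-f(x^*))$ plus a drift residual $\tfrac{L(1+2\bar{\eta}L)}{m}\sum_i\eta_i\sum_{\ell}\Vert x_{i,\ell}-\tilde{x}_t\Vert^2$ that is \emph{squared} with coefficient $L$ rather than $L/\mu$; no Cauchy--Schwarz against $\tilde{x}_t-x^*$ is ever performed. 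This is what keeps every residual $\kappa$-free, so that the single condition $\bar{\eta}L\delta_s\le 1/72$ closes the Lyapunov recursion at rate $1-\tfrac{1}{96\delta_s\kappa}$. If you substitute that lemma for your ``expand and absorb'' step, the rest of your outline (drift bound via the analogue of Lemma \ref{lemma:drift_conv_ct}, error recursion, unrolling, endpoint conversion) goes through exactly as in the paper.
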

\begin{proof}
See Appendix \ref{app:servsparse2}.
\end{proof}
\textbf{Main Takeaways}: Comparing the guarantee of Theorem \ref{thm:server_sprs1} with that of Theorem \ref{thm:serv_sprs2}, we note that the convergence rate is inversely proportional to $\delta^{\frac{3}{2}}_s$ in the former, and inversely proportional to $\delta_s$ in the latter. Thus, the main message here is that  \textit{employing error-feedback leads to a faster convergence rate by improving the dependence of the rate on the parameter $\delta_s$}. 
\section{Impact of Gradient Sparsification at Clients}
\label{sec:client_sparse}
In this section, we will turn our attention to the case when gradients are sparsified at the clients prior to being transmitted to the server. Throughout this section, we will assume that gradients are not compressed any further at the server side, i.e., $\delta_s=1$. To proceed, we will need to make the following bounded gradient dissimilarity assumption.
\begin{assumption}
\label{ass:bndgrad}
There exist constants $C \geq 1$ and $D \geq 0$ such that the following holds $\forall x\in\mathbb{R}^d$:
\begin{equation}
    \frac{1}{m}\sum_{i=1}^{m}{\Vert \nabla f_i(x) \Vert}^2 \leq C{\Vert \nabla f(x) \Vert}^2+D.
\end{equation}
\end{assumption}

The following is the main result of this section. 
\begin{theorem} \label{thm:client_sprs}  (\textbf{Sparsification at clients with error-feedback}) Suppose each $f_i(x)$ is $L$-smooth and $\mu$-strongly convex, and suppose  Assumption \ref{ass:bndgrad} holds.  Moreover, suppose $\tau_i\geq 1,\forall i\in\mathcal{S}$, and $\delta_s=1$. Let the step-size for client $i$ be chosen as $\eta_i=\frac{\bar{\eta}}{\tau_i}$, where $\bar{\eta}\in (0,1)$ satisfies $\bar{\eta}  
\leq \frac{1}{72L\delta_c C}$. Then, \texttt{FedLin} guarantees:
\begin{equation}
    {\Vert \bar{x}_{T+1} - x^* \Vert}^2 \leq 2 { \left(1-\frac{3}{4}\bar{\eta} \mu \right)}^T {\Vert \bar{x}_{1} - x^* \Vert}^2 + \frac{16}{3}\bar{\eta}\left(\frac{6}{\delta_c C}+\delta_c\right)\frac{D}{\mu}.
\label{eqn:clientsprs_guarantee}
\end{equation}
\end{theorem}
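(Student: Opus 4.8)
The plan is to analyze an \emph{error-corrected virtual iterate} and to build a Lyapunov function that couples the distance to $x^*$ with the accumulated client-side compression error.

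First I would unroll the global update. Summing the $\tau_i$ local steps of client $i$ and using $\eta_i\tau_i=\bar\eta$, the end-of-round model is $x^{(t)}_{i,\tau_i}=\bar{x}_t-\bar\eta g_t-(\bar\eta/\tau_i)\sum_{\ell=0}^{\tau_i-1}(\nabla f_i(x^{(t)}_{i,\ell})-\nabla f_i(\bar{x}_t))$, so averaging gives $\bar{x}_{t+1}=\bar{x}_t-\bar\eta g_t-\bar\eta\Delta_t$, where $\Delta_t=(1/m)\sum_i(1/\tau_i)\sum_\ell(\nabla f_i(x^{(t)}_{i,\ell})-\nabla f_i(\bar{x}_t))$ is the client-drift term. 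Since $\delta_s=1$, the server error vanishes and, unrolling lines 11--12, the broadcast gradient obeys the telescoping identity $g_t=\nabla f(\bar{x}_t)+\bar\rho_{t-1}-\bar\rho_t$ with $\bar\rho_t=(1/m)\sum_i\rho_{i,t}$ and the convention $\bar\rho_0=0$ (consistent with $g_1=\nabla f(\bar{x}_1)$). Substituting and defining the virtual iterate $\tilde{x}_t=\bar{x}_t-\bar\eta\bar\rho_{t-1}$ yields the clean recursion $\tilde{x}_{t+1}=\tilde{x}_t-\bar\eta(\nabla f(\bar{x}_t)+\Delta_t)$: gradient descent on $f$ evaluated at the \emph{true} iterate, perturbed only by the drift.

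Second I would bound the drift. From $x^{(t)}_{i,\ell+1}-\bar{x}_t=(x^{(t)}_{i,\ell}-\bar{x}_t)-\eta_i(\nabla f_i(x^{(t)}_{i,\ell})-\nabla f_i(\bar{x}_t))-\eta_i g_t$ and $L$-smoothness I get $\|x^{(t)}_{i,\ell}-\bar{x}_t\|\le(1/L)((1+\eta_i L)^\ell-1)\|g_t\|$; with $\eta_i=\bar\eta/\tau_i$ and $\ell\le\tau_i$ this is $O(\bar\eta\|g_t\|)$, crucially \emph{independent of} $\tau_i$ (the same mechanism that makes Theorem \ref{thm:strconv_dh} straggler-robust). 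Pushing this through smoothness gives $\|\Delta_t\|^2\lesssim L^2\bar\eta^2\|g_t\|^2$, and $\|g_t\|^2\le 3\|\nabla f(\bar{x}_t)\|^2+3\|\bar\rho_{t-1}\|^2+3\|\bar\rho_t\|^2$. I would then run the standard strongly-convex one-step descent on $\|\tilde{x}_{t+1}-x^*\|^2$: expand, write $\tilde{x}_t-x^*=(\bar{x}_t-x^*)-\bar\eta\bar\rho_{t-1}$, use strong convexity to extract $-\bar\eta\mu\|\bar{x}_t-x^*\|^2$ and a reserve term $-\bar\eta(f(\bar{x}_t)-f(x^*))$, and absorb the $\bar\rho$ and $\Delta_t$ cross terms by Young's inequality, exploiting the $\bar\eta^2$ smallness together with $\|\nabla f(\bar{x}_t)\|^2\le 2L(f(\bar{x}_t)-f(x^*))$ to keep a net negative budget.

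Third I would control the accumulated error. The \texttt{TOP-k} operator is $(1-1/\delta_c)$-contractive, so $\|\rho_{i,t+1}\|^2\le(1-1/\delta_c)\|\rho_{i,t}+\nabla f_i(\bar{x}_{t+1})\|^2$; a Young's split tuned to $\delta_c$ yields $\bar R_{t+1}\le(1-\Theta(1/\delta_c))\bar R_t+O(\delta_c)(1/m)\sum_i\|\nabla f_i(\bar{x}_{t+1})\|^2$, where $\bar R_t=(1/m)\sum_i\|\rho_{i,t}\|^2\ge\|\bar\rho_t\|^2$ by Jensen. Here Assumption \ref{ass:bndgrad} is essential: $(1/m)\sum_i\|\nabla f_i(\bar{x}_{t+1})\|^2\le C\|\nabla f(\bar{x}_{t+1})\|^2+D\le 2CL(f(\bar{x}_{t+1})-f(x^*))+D$, which is precisely what injects the additive $D$ and the factor $C$ into the final neighborhood. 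Finally I would combine the two recursions into $V_t=\|\tilde{x}_t-x^*\|^2+s\,\bar\eta^2\bar R_{t-1}$, with the weight $s$ and the one-step index shift chosen so the $\bar R$ cross terms telescope, proving $V_{t+1}\le(1-\tfrac34\bar\eta\mu)V_t+c\,\bar\eta^2 D$; unrolling the geometric recursion and converting back via $\|\bar{x}_{T+1}-x^*\|^2\le 2\|\tilde{x}_{T+1}-x^*\|^2+2\bar\eta^2\bar R_T$ (using $V_1=\|\bar{x}_1-x^*\|^2$ since $\bar\rho_0=0$) gives the stated bound, with its leading factor $2$ and the floor $\tfrac{16}{3}\bar\eta(\tfrac{6}{\delta_c C}+\delta_c)D/\mu$. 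The main obstacle is the two-way coupling bookkeeping: the drift feeds on $g_t$, which carries the compression error $\bar\rho$, while the error recursion feeds on the very gradient magnitudes that the descent step is driving down. Forcing both negative terms to simultaneously dominate their induced error contributions is exactly what pins the step-size constraint to $\bar\eta\le 1/(72L\delta_c C)$, and tuning constants so the contraction closes at precisely $1-\tfrac34\bar\eta\mu$ is the delicate part.
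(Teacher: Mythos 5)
Your proposal follows essentially the same route as the paper's proof: the perturbed (virtual) iterate $\tilde{x}_t = \bar{x}_t - \bar{\eta}\bar{\rho}_{t-1}$, a Lyapunov function coupling $\|\tilde{x}_t - x^*\|^2$ with $\bar{\eta}^2$ times the averaged squared compression error, a \texttt{TOP-k} contraction recursion for that error with the bounded-dissimilarity assumption injecting $C$ and $D$, and the final conversion $\|\bar{x}_{T+1}-x^*\|^2 \le 2\psi_{T+1}$ together with $\psi_1 = \|\bar{x}_1 - x^*\|^2$. Your deviations are cosmetic rather than structural — you bound the client drift using smoothness alone instead of the paper's convexity-based bound, control $\|g_t\|$ via the telescoping identity $g_t = \nabla f(\bar{x}_t)+\bar{\rho}_{t-1}-\bar{\rho}_t$ instead of non-expansiveness of the compressed $h_{i,t}$, and anchor the suboptimality terms at $\bar{x}_t$ rather than $\tilde{x}_t$ — so the argument is the paper's, up to constants and bookkeeping.
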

\begin{proof}
See Appendix \ref{app:clientsparse}.
\end{proof}

The next result is an immediate corollary of Theorem \ref{thm:client_sprs}. 

\begin{corollary}
Suppose the conditions stated in Theorem \ref{thm:client_sprs} hold. Then, \texttt{FedLin}  guarantees:
\begin{equation}
    f(\bar{x}_{T+1})-f(x^*) \leq 2\kappa { \left(1-\frac{3}{4}\bar{\eta} \mu \right)}^T \left(f(\bar{x}_1)-f(x^*)\right) + \frac{8L}{3}\bar{\eta}\left(\frac{6}{\delta_c C}+\delta_c\right)\frac{D}{\mu}.
\end{equation}
\end{corollary}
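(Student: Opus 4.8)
The plan is to convert the distance-to-optimum bound of Theorem \ref{thm:client_sprs} into the claimed function-value gap bound by invoking the two standard consequences of smoothness and strong convexity, and nothing more. Since the corollary explicitly assumes the hypotheses of Theorem \ref{thm:client_sprs}, I may take the guarantee \eqref{eqn:clientsprs_guarantee} as given and simply post-process it.

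First I would bound the left-hand side using $L$-smoothness of the global objective $f$ (which holds since $f$ is an average of $L$-smooth functions, hence itself $L$-smooth) together with the optimality condition $\nabla f(x^*)=0$. This yields $f(\bar{x}_{T+1})-f(x^*)\leq \tfrac{L}{2}{\Vert \bar{x}_{T+1}-x^*\Vert}^2$. Substituting \eqref{eqn:clientsprs_guarantee} then gives
\begin{equation}
f(\bar{x}_{T+1})-f(x^*) \leq L{\left(1-\tfrac{3}{4}\bar{\eta}\mu\right)}^T {\Vert \bar{x}_{1}-x^*\Vert}^2 + \frac{8L}{3}\bar{\eta}\left(\frac{6}{\delta_c C}+\delta_c\right)\frac{D}{\mu},
\end{equation}
where the factor of $2$ in the transient term of \eqref{eqn:clientsprs_guarantee} cancels against the $\tfrac{L}{2}$, and the additive term is rescaled from $\tfrac{16}{3}$ to $\tfrac{8L}{3}$.

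Next I would handle the remaining $\Vert\bar{x}_1-x^*\Vert^2$ in the transient term by applying $\mu$-strong convexity of $f$, again at the minimizer, which gives $\tfrac{\mu}{2}{\Vert\bar{x}_1-x^*\Vert}^2 \leq f(\bar{x}_1)-f(x^*)$, i.e. ${\Vert\bar{x}_1-x^*\Vert}^2 \leq \tfrac{2}{\mu}(f(\bar{x}_1)-f(x^*))$. Hence $L{\Vert\bar{x}_1-x^*\Vert}^2 \leq \tfrac{2L}{\mu}(f(\bar{x}_1)-f(x^*)) = 2\kappa(f(\bar{x}_1)-f(x^*))$ with $\kappa=L/\mu$, and substituting this into the transient term recovers exactly the stated bound.

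Honestly, there is no genuine obstacle here: the entire argument is the routine sandwiching ${\Vert\cdot-x^*\Vert}^2 \asymp f(\cdot)-f(x^*)$ afforded by a strongly convex, smooth objective, applied once on each side of \eqref{eqn:clientsprs_guarantee}. The only point deserving a word of care is that smoothness and strong convexity must be used for the \emph{global} $f$ (not the individual $f_i$), which is immediate since averaging preserves both the $L$-smoothness upper bound and the $\mu$-strong-convexity lower bound; the constants $2\kappa$ and $\tfrac{8L}{3}$ then follow by direct bookkeeping of the factors.
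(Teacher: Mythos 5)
Your proposal is correct and is exactly the argument the paper intends: the paper states this result as an ``immediate corollary'' of Theorem \ref{thm:client_sprs} without spelling out a proof, and the intended post-processing is precisely your sandwiching step, namely $f(\bar{x}_{T+1})-f(x^*)\leq \tfrac{L}{2}\Vert \bar{x}_{T+1}-x^*\Vert^2$ from $L$-smoothness of $f$ at $x^*$ and $\Vert\bar{x}_1-x^*\Vert^2\leq \tfrac{2}{\mu}\left(f(\bar{x}_1)-f(x^*)\right)$ from $\mu$-strong convexity, applied to \eqref{eqn:clientsprs_guarantee}. Your constant bookkeeping ($\tfrac{L}{2}\cdot 2 = L$, $\tfrac{L}{2}\cdot\tfrac{16}{3}=\tfrac{8L}{3}$, and $L\cdot\tfrac{2}{\mu}=2\kappa$) checks out and reproduces the stated bound exactly.
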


\textbf{Main Takeaways:} Intuitively, one would expect that sparsifying gradients at each client prior to aggregation at the server would inject more errors than when gradients are first accurately aggregated at the server, and then the aggregated gradient vector is sparsified: Theorems \ref{thm:server_sprs1} and \ref{thm:client_sprs} support this intuition. For the former, we neither required error-feedback nor Assumption \ref{ass:bndgrad} to guarantee linear convergence to the global  minimum; for the latter, even with error-feedback and the bounded gradient dissimilarity assumption, we can establish linear convergence to \textit{only a neighborhood of the global minimum}, in general.  From inspection of \eqref{eqn:clientsprs_guarantee}, we note that the size of this neighborhood scales linearly with $D$ - a measure of objective heterogeneity. In particular, when $D=0$,  the iterates $\bar{x}_t$ converge exactly to the global minimum $x^*$, which also happens to be the minimum of each $f_i(x)$. 

\textbf{Comparison with Related Work:} While there is a wealth of literature that explores the effect of \textit{unbiased random quantization} in distributed settings \cite{qsgd,wen,khirirat,DIANA,ADIANA,horvathq}, there are only a handful of papers \cite{FedPAQ,Fedcomgate} that also consider the effect of local steps in federated learning. Different from all these works, the \texttt{TOP-k} operator we consider here \textit{does not} provide an unbiased estimate of the true gradient, dictating the need for a new potential-function based proof technique that we develop in Appendix \ref{app:clientsparse}. Recently,  in \cite[Theorem 13]{beznosikov}, the authors show that in a single worker centralized setting, applying a \texttt{TOP-k} operator on the gradient (in standard gradient descent) leads to a linear convergence rate that exhibits a $1/\delta$ dependence on the compression parameter $\delta$, where $\delta=d/k$. Theorems \ref{thm:serv_sprs2} and \ref{thm:client_sprs} significantly generalize this result by showing that even in a multi-client federated setting with objective and systems  heterogeneity, one can still achieve convergence rates that have the same dependence on the compression parameter as in the centralized case.

\textbf{Extensions:} In Sections \ref{sec:serv_sparse} and \ref{sec:client_sparse}, we only analyzed the impact of gradient sparsification for the strongly convex and smooth setting. It should be fairly straightforward to extend these  results to the convex and non-convex cases following the proof techniques in Appendix \ref{app:proofs_CT}. Moreover, we studied the effect of compressing information at the server and at the clients separately, with the goal of identifying the key differences between each of these mechanisms; we anticipate that the combined effect of gradient sparsification at the server and at the clients should also be a simple extension of our results.  
\section{Experimental Results}
\label{sec:experiments}
In this section, we provide numerical results for \texttt{FedLin} on a least squares regression problem to confirm the developed theory. In particular, we generate various synthetic data-sets which correspond to different levels of heterogeneity in the clients' objective functions, and evaluate the proposed scheme in terms of its convergence rate and accuracy. This allows us to clearly characterize the effect of statistical heterogeneity on the convergence of \texttt{FedLin}. In addition, to account for systems heterogeneity, we randomly generate the number of local training steps at each client. In Appendix \ref{add_exp}, we  provide additional numerical results for \texttt{FedLin} on a logistic regression problem. Moreover, in Appendix \ref{app:Noisy_FedLin_exp}, we evaluate the performance of \texttt{FedLin} under a noisy oracle to confirm the findings of Theorem \ref{thm:noisy}. All the simulations are performed on a machine running Windows 8.1 with a 1.8 GHz Intel Core i7 processor, using \texttt{MATLAB} R2019.

\subsection{Problem Setup} 
For each client $i \in \mathcal{S}$, the design matrix $A_i \in \mathbb{R}^{m_i\times d}$ and the corresponding response vector $b_i \in \mathbb{R}^{m_i}$ assume a linear relationship of the form $A_ix+\varepsilon_i=b_i$, where $x \in \mathbb{R}^{d}$ is some weight vector, and $\varepsilon_i \in \mathbb{R}^{m_i}$ is a disturbance term. Hence, the least squares regression problem defining the global objective function is given by 

\begin{align}
\label{lsp}
\min_{x \in \mathbb{R}^d}f(x)=\min_{x \in \mathbb{R}^d} \dfrac{1}{m}\sum\limits_{i=1}^{m}f_i(x)=\min_{x \in \mathbb{R}^d} \dfrac{1}{m}\sum\limits_{i=1}^{m}\dfrac{1}{2}\|A_ix-b_i\|^2.
\end{align}
The client objective functions, $f_i(x)$, are both strongly convex and differentiable.  Assuming that all the design matrices are full column rank, problem \eqref{lsp} admits a unique minimizer given by
\begin{align}
\label{lsr_sol}
x^*=\bigg(\sum\limits_{i=1}^m A_i^TA_i\bigg)^{-1}\bigg(\sum\limits_{i=1}^m A_i^Tb_i\bigg).
\end{align}
\subsection{Synthetic Data} 
To generate synthetic data, for each client $i \in \mathcal{S}= \{1, \cdots,20\}$, we generate the design matrix $A_i$ and the response vector $b_i$ according to the model $b_i=A_ix_i+\varepsilon_i$,  where $x_i\in \mathbb{R}^{100}$, $A_i \in \mathbb{R}^{500 \times 100} $, $b_i \in \mathbb{R}^{500}$, and $\varepsilon_i \in \mathbb{R}^{500}$. In particular, the entries of the design matrix are modeled as $[A_i]_{jk}\stackrel{i.i.d.}{\sim}\mathcal{N}(0,1)$ for $j\in \{1, \cdots,500\}$ and $k \in \{1, \cdots, 100\}$. The disturbance vectors are drawn independently as $\varepsilon_i \sim \mathcal{N}(0,0.5I_{500})$, $\forall i \in \mathcal{S}$. In order to characterize the level  of objective heterogeneity in the clients' local loss  functions, we model the local true parameter of client $i$ as $x_i \sim \mathcal{N}(u_i,1)$, where $u_i \sim \mathcal{N}(0,\alpha)$ and $\alpha \geq 0$. Thus, $\alpha$ controls the amount of  heterogeneity across the clients' objectives. To model the effect of systems  heterogeneity, we allow the number of local steps to vary across clients. In particular, for each client $i \in \mathcal{S}$, the number of local iterations is drawn independently from a uniform distribution over the range $[2,100]$, i.e, $\tau_i \in [2,100]$, $\forall i \in \mathcal{S}$.

\begin{figure}[t]
  \centering
  \includegraphics[width=1\linewidth]{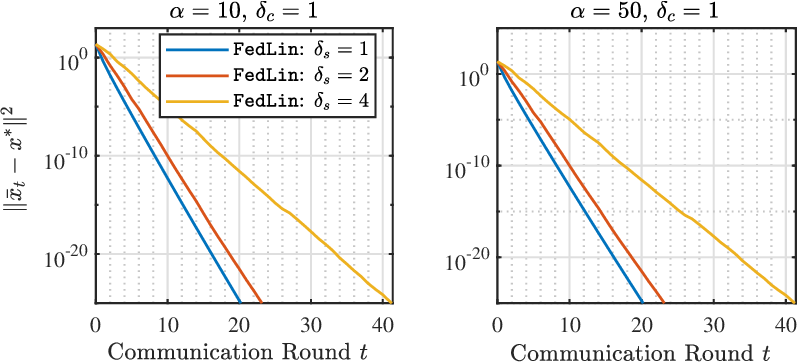}
  \caption{Simulation results for \texttt{FedLin} where gradient sparsification is implemented only at the server side (i.e. $\delta_c=1$) without any  error-feedback. The global objective function is an instance of \eqref{lsp} with $m=20$ clients. Each client performs $\tau_i$ local iterations per communication round, where $\tau_i$ is drawn independently and uniformly at random from $[2,100]$, $\forall i \in \mathcal{S}$. The constant $\bar{\eta}$ is fixed at $10^{-2}$ across all clients. We consider two levels of gradient sparsification, namely $\delta_s \in \{2,4\}$, which correspond to implementing a \texttt{TOP-50} and a \texttt{TOP-25} sparsification operator, respectively. For comparison, we also plot the case where no sparsification is implemented at the server side, i.e. $\delta_s=1$. \textit{\textbf{Left}}: The level of statistical heterogeneity is set to $\alpha=10$. As the value of $\delta_s$ increases, \texttt{FedLin} continues to guarantee linear convergence to the true minimizer \eqref{lsr_sol}, but at correspondingly slower rates. \textit{\textbf{Right}}: For a higher level of statistical heterogeneity $(\alpha=50)$, both the convergence speed and accuracy remain unchanged and unaffected for all the considered values of $\delta_s$.} 
\label{fig2_exp_server}
\end{figure}

\begin{figure}[t]
  \centering
  \includegraphics[width=1\linewidth]{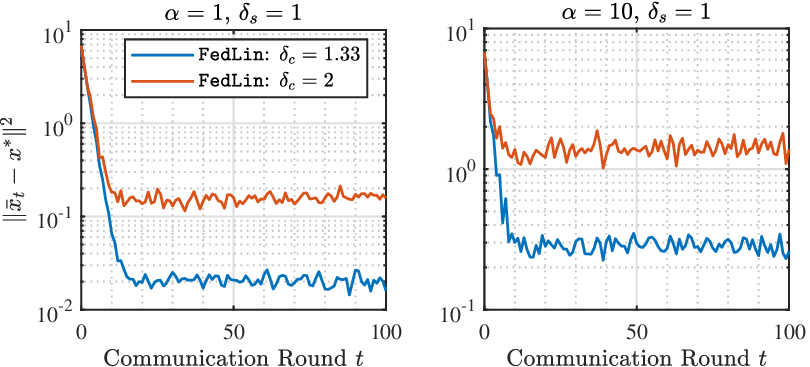}
  \caption{Simulation results for \texttt{FedLin} where gradient sparsification is implemented only at the clients' side, i.e. $\delta_s=1$.  The global objective function is an instance of \eqref{lsp} with $m=20$ clients. Each client performs $\tau_i$ local iterations per communication round, where $\tau_i$ is drawn independently and uniformly at random from $[2,100]$, $\forall i \in \mathcal{S}$. The constant $\bar{\eta}$ is fixed at $5\times 10^{-4}$ across all clients. We consider two levels of gradient sparsification, namely $\delta_c \in \{4/3,2\}$, which correspond to implementing a \texttt{TOP-75} and a \texttt{TOP-50} sparsification operator respectively. \textit{\textbf{Left}}: The level of statistical heterogeneity is set to $\alpha=1$. \texttt{FedLin} converges linearly with a non-vanishing convergence error which increases as the value of $\delta_c$ increases.  \textit{\textbf{Right}}: Keeping the level of gradient sparsification the same, increasing the level of statistical heterogeneity to  $\alpha=10$ causes the convergence error of \texttt{FedLin} to increase.}
\label{fig3_exp_client}
\end{figure}
\subsection{Simulation Results}
\textbf{Gradient Sparsification at Server:}
We first consider a variant of \texttt{FedLin} where gradient sparsification is implemented only at the server side without any error-feedback. In particular, we consider the cases where $\delta_s \in \{2,4\}$, which correspond to the implementation of a \texttt{TOP-50} and a \texttt{TOP-25} operator on the communicated gradients, respectively. For comparison, we also plot the resulting performance when no gradient sparsification is implemented at the server. To examine the effect of statistical heterogeneity on the performance of \texttt{FedLin}, we generate two synthetic datasets corresponding to two different levels of heterogeneity in the clients' local objectives, namely $\alpha=10$ and $\alpha=50$. As illustrated in Figure \ref{fig2_exp_server}, for any level of gradient sparsification on the server side, \texttt{FedLin} achieves linear convergence to the global  minimum in the presence of both objective and systems heterogeneity, conforming with Theorem \ref{thm:server_sprs1}.  Furthermore, for any level of gradient sparsification, both the convergence speed and accuracy of \texttt{FedLin} remain unaffected as the level of heterogeneity in the clients' objective functions increases.

\textbf{Gradient Sparsification at Clients:} Next, we implement gradient sparsification only at the clients' side, i.e. $\delta_s=1$.  In particular, we consider the cases where $\delta_c \in \{4/3, 2\}$, which correspond to the implementation of a \texttt{TOP-75} and a \texttt{TOP-50} operator on the communicated local gradients, respectively. Once again, we generate two synthetic datasets with different levels of objective  heterogeneity, namely $\alpha=1$ and $\alpha=10$. As illustrated in Figure \ref{fig3_exp_client}, unlike the server case, \texttt{FedLin} with sparsification at the clients' side converges linearly, but with a non-vanishing convergence error that increases as the value of $\delta_c$ increases.  Furthermore, the level of objective  heterogeneity has a direct impact on the convergence error. In particular, for the same level of gradient sparsification, higher levels of objective heterogeneity result in larger values of the convergence error. Our observations above align with the conclusions that were drawn from Theorem \ref{thm:client_sprs}. 
\section{Conclusions and Open Problems}
We studied the basic question of achieving linear convergence in a federated learning setup, subject to objective heterogeneity, systems heterogeneity, and  communication constraints. To do so, we proposed and systematically analyzed a new algorithmic framework called \texttt{FedLin}. In particular, we showed that \texttt{FedLin} (i) guarantees linear convergence to the global minimum under arbitrary objective and systems heterogeneity, and (ii) preserves linear convergence rates despite aggressive gradient sparsification. We also established a matching lower-bound for \texttt{FedLin}, thereby providing the first \textit{tight} linear convergence rate analysis in FL. We conclude with some open problems. 

\begin{itemize}
    \item Identifying classes of machine learning problems where performing local steps can provably help remains a major open research direction. For the finite-sum optimization problem that we considered in this work, our results were unable to quantify any benefits of local computation in the presence of objective heterogeneity. Whether this is a shortcoming of our algorithm, or something to be expected of \textit{any} federated learning algorithm, is unclear at this point. Deriving \textit{algorithm-independent} lower bounds will shed light on this topic. In this context, we aim to resolve the conjecture in Remark \ref{rem:conjecture} as part of future work.  
    
    \item Our results in Section \ref{sec:speed_acc} do not address a fundamental question: \textit{Do fast machines help or hurt?} In the homogeneous setting, the fact that fast client machines can only help is rather obvious, since each client seeks to optimize the same function, and hence, the faster a machine, the more progress it makes towards the global  minimum in each round. The situation is much more nuanced when the client objectives have different minima. Consider an example where the minimum $x^*_i$ of a certain client $i$'s local objective $f_i(x)$ is far away from the true minimum $x^*$. Now suppose client machine $i$ is fast, and performs several local steps in each round. In doing so, its local iterates may naturally drift towards its own local minimum $x^*_i$, which is far off from $x^*$. Clearly, this is undesirable. Thus, the fast machine $i$ does not contribute much towards minimizing the true objective in this case. This simple example illustrates that our understanding of objective heterogeneity coupled with device heterogeneity is still quite limited, and requires deeper exploration.
    
\item When gradients are compressed at each client, we can establish linear convergence only to a neighborhood of the global minimum based on our algorithm. Perhaps there are better techniques that can guarantee linear convergence \textit{exactly} to the global minimum? For instance, instead of transmitting the top-$k$ components of a gradient vector, what if we transmit those $k$ components of the gradient vector that have changed the most since the last round? Whether this scheme - which focuses on communicating ``innovations" - yields any provable improvements over our current approach remains to be seen. 

\item We plan to extend our framework to the finite-sum stochastic optimization setting, where one can hope for linear convergence guarantees. Also, given that privacy is a major concern in federated learning, it would be interesting to explore whether our algorithm can be coupled with techniques that guarantee differential-privacy. 
\end{itemize}

\bibliographystyle{unsrt} 
\bibliography{refs}

\begin{thebibliography}{10}

\bibitem{mcmahan}
Brendan McMahan, Eider Moore, Daniel Ramage, Seth Hampson, and Blaise~Aguera
  y~Arcas.
\newblock Communication-efficient learning of deep networks from decentralized
  data.
\newblock In {\em Artificial Intelligence and Statistics}, pages 1273--1282.
  PMLR, 2017.

\bibitem{surv1}
Peter Kairouz, H~Brendan McMahan, Brendan Avent, Aur{\'e}lien Bellet, Mehdi
  Bennis, Arjun~Nitin Bhagoji, Keith Bonawitz, Zachary Charles, Graham Cormode,
  Rachel Cummings, et~al.
\newblock Advances and open problems in federated learning.
\newblock {\em arXiv preprint arXiv:1912.04977}, 2019.

\bibitem{surv2}
Tian Li, Anit~Kumar Sahu, Ameet Talwalkar, and Virginia Smith.
\newblock Federated learning: Challenges, methods, and future directions.
\newblock {\em IEEE Signal Processing Magazine}, 37(3):50--60, 2020.

\bibitem{bonawitz}
Keith Bonawitz, Hubert Eichner, Wolfgang Grieskamp, Dzmitry Huba, Alex
  Ingerman, Vladimir Ivanov, Chloe Kiddon, Jakub Kone{\v{c}}n{\`y}, Stefano
  Mazzocchi, H~Brendan McMahan, et~al.
\newblock Towards federated learning at scale: System design.
\newblock {\em arXiv preprint arXiv:1902.01046}, 2019.

\bibitem{sahu}
Anit~Kumar Sahu, Tian Li, Maziar Sanjabi, Manzil Zaheer, Ameet Talwalkar, and
  Virginia Smith.
\newblock On the convergence of federated optimization in heterogeneous
  networks.
\newblock {\em arXiv preprint arXiv:1812.06127}, 3, 2018.

\bibitem{FedPAQ}
Amirhossein Reisizadeh, Aryan Mokhtari, Hamed Hassani, Ali Jadbabaie, and
  Ramtin Pedarsani.
\newblock Fedpaq: A communication-efficient federated learning method with
  periodic averaging and quantization.
\newblock In {\em International Conference on Artificial Intelligence and
  Statistics}, pages 2021--2031. PMLR, 2020.

\bibitem{scaffold}
Sai~Praneeth Karimireddy, Satyen Kale, Mehryar Mohri, Sashank Reddi, Sebastian
  Stich, and Ananda~Theertha Suresh.
\newblock Scaffold: Stochastic controlled averaging for federated learning.
\newblock In {\em International Conference on Machine Learning}, pages
  5132--5143. PMLR, 2020.

\bibitem{li}
Xiang Li, Kaixuan Huang, Wenhao Yang, Shusen Wang, and Zhihua Zhang.
\newblock On the convergence of fedavg on non-iid data.
\newblock {\em arXiv preprint arXiv:1907.02189}, 2019.

\bibitem{malinovsky}
Grigory Malinovskiy, Dmitry Kovalev, Elnur Gasanov, Laurent Condat, and Peter
  Richtarik.
\newblock From local sgd to local fixed-point methods for federated learning.
\newblock In {\em International Conference on Machine Learning}, pages
  6692--6701. PMLR, 2020.

\bibitem{charles}
Zachary Charles and Jakub Kone{\v{c}}n{\`y}.
\newblock On the outsized importance of learning rates in local update methods.
\newblock {\em arXiv preprint arXiv:2007.00878}, 2020.

\bibitem{charles2}
Zachary Charles and Jakub Kone{\v{c}}n{\`y}.
\newblock Convergence and accuracy trade-offs in federated learning and
  meta-learning.
\newblock In {\em International Conference on Artificial Intelligence and
  Statistics}, pages 2575--2583. PMLR, 2021.

\bibitem{fedsplit}
Reese Pathak and Martin~J Wainwright.
\newblock Fedsplit: An algorithmic framework for fast federated optimization.
\newblock {\em arXiv preprint arXiv:2005.05238}, 2020.

\bibitem{FedNova}
Jianyu Wang, Qinghua Liu, Hao Liang, Gauri Joshi, and H~Vincent Poor.
\newblock Tackling the objective inconsistency problem in heterogeneous
  federated optimization.
\newblock {\em Advances in Neural Information Processing Systems}, 33, 2020.

\bibitem{emp1}
Alham~Fikri Aji and Kenneth Heafield.
\newblock Sparse communication for distributed gradient descent.
\newblock {\em arXiv preprint arXiv:1704.05021}, 2017.

\bibitem{emp2}
Yujun Lin, Song Han, Huizi Mao, Yu~Wang, and William~J Dally.
\newblock Deep gradient compression: Reducing the communication bandwidth for
  distributed training.
\newblock {\em arXiv preprint arXiv:1712.01887}, 2017.

\bibitem{seide}
Frank Seide, Hao Fu, Jasha Droppo, Gang Li, and Dong Yu.
\newblock 1-bit stochastic gradient descent and its application to
  data-parallel distributed training of speech dnns.
\newblock In {\em Fifteenth Annual Conference of the International Speech
  Communication Association}, 2014.

\bibitem{stichsparse}
Sebastian~U Stich, Jean-Baptiste Cordonnier, and Martin Jaggi.
\newblock Sparsified sgd with memory.
\newblock In {\em Advances in Neural Information Processing Systems}, pages
  4447--4458, 2018.

\bibitem{alistarhsparse}
Dan Alistarh, Torsten Hoefler, Mikael Johansson, Nikola Konstantinov, Sarit
  Khirirat, and C{\'e}dric Renggli.
\newblock The convergence of sparsified gradient methods.
\newblock In {\em Advances in Neural Information Processing Systems}, pages
  5973--5983, 2018.

\bibitem{stich}
Sebastian~U Stich.
\newblock Local sgd converges fast and communicates little.
\newblock {\em arXiv preprint arXiv:1805.09767}, 2018.

\bibitem{wang}
Jianyu Wang and Gauri Joshi.
\newblock Cooperative sgd: A unified framework for the design and analysis of
  communication-efficient sgd algorithms.
\newblock {\em arXiv preprint arXiv:1808.07576}, 2018.

\bibitem{spiridonoff}
Artin Spiridonoff, Alex Olshevsky, and Ioannis~Ch Paschalidis.
\newblock Local sgd with a communication overhead depending only on the number
  of workers.
\newblock {\em arXiv preprint arXiv:2006.02582}, 2020.

\bibitem{haddadpourNIPS}
Farzin Haddadpour, Mohammad~Mahdi Kamani, Mehrdad Mahdavi, and Viveck Cadambe.
\newblock Local sgd with periodic averaging: Tighter analysis and adaptive
  synchronization.
\newblock In {\em Advances in Neural Information Processing Systems}, pages
  11082--11094, 2019.

\bibitem{woodworth1}
Blake Woodworth, Kumar~Kshitij Patel, Sebastian~U Stich, Zhen Dai, Brian
  Bullins, H~Brendan McMahan, Ohad Shamir, and Nathan Srebro.
\newblock Is local sgd better than minibatch sgd?
\newblock {\em arXiv preprint arXiv:2002.07839}, 2020.

\bibitem{khaled1}
Ahmed Khaled, Konstantin Mishchenko, and Peter Richt{\'a}rik.
\newblock First analysis of local gd on heterogeneous data.
\newblock {\em arXiv preprint arXiv:1909.04715}, 2019.

\bibitem{khaled2}
Ahmed Khaled, Konstantin Mishchenko, and Peter Richt{\'a}rik.
\newblock Tighter theory for local sgd on identical and heterogeneous data.
\newblock In {\em International Conference on Artificial Intelligence and
  Statistics}, pages 4519--4529. PMLR, 2020.

\bibitem{haddadpour}
Farzin Haddadpour and Mehrdad Mahdavi.
\newblock On the convergence of local descent methods in federated learning.
\newblock {\em arXiv preprint arXiv:1910.14425}, 2019.

\bibitem{koloskova}
Anastasia Koloskova, Nicolas Loizou, Sadra Boreiri, Martin Jaggi, and
  Sebastian~U Stich.
\newblock A unified theory of decentralized sgd with changing topology and
  local updates.
\newblock {\em arXiv preprint arXiv:2003.10422}, 2020.

\bibitem{SAG}
Mark Schmidt, Nicolas Le~Roux, and Francis Bach.
\newblock Minimizing finite sums with the stochastic average gradient.
\newblock {\em Mathematical Programming}, 162(1-2):83--112, 2017.

\bibitem{SAGA}
Aaron Defazio, Francis Bach, and Simon Lacoste-Julien.
\newblock Saga: A fast incremental gradient method with support for
  non-strongly convex composite objectives.
\newblock In {\em Advances in neural information processing systems}, pages
  1646--1654, 2014.

\bibitem{SVRG}
Rie Johnson and Tong Zhang.
\newblock Accelerating stochastic gradient descent using predictive variance
  reduction.
\newblock {\em Advances in neural information processing systems}, 26:315--323,
  2013.

\bibitem{gorbunov}
Eduard Gorbunov, Filip Hanzely, and Peter Richt{\'a}rik.
\newblock Local sgd: Unified theory and new efficient methods.
\newblock In {\em International Conference on Artificial Intelligence and
  Statistics}, pages 3556--3564. PMLR, 2021.

\bibitem{dutta}
Sanghamitra Dutta, Jianyu Wang, and Gauri Joshi.
\newblock Slow and stale gradients can win the race.
\newblock {\em arXiv preprint arXiv:2003.10579}, 2020.

\bibitem{adacomm}
Jianyu Wang and Gauri Joshi.
\newblock Adaptive communication strategies to achieve the best error-runtime
  trade-off in local-update sgd.
\newblock {\em arXiv preprint arXiv:1810.08313}, 2018.

\bibitem{SGC}
Rawad Bitar, Mary Wootters, and Salim El~Rouayheb.
\newblock Stochastic gradient coding for straggler mitigation in distributed
  learning.
\newblock {\em IEEE Journal on Selected Areas in Information Theory}, 2020.

\bibitem{ferdinand}
Nuwan Ferdinand and Stark~C Draper.
\newblock Anytime stochastic gradient descent: A time to hear from all the
  workers.
\newblock In {\em 2018 56th Annual Allerton Conference on Communication,
  Control, and Computing (Allerton)}, pages 552--559. IEEE, 2018.

\bibitem{reisizadeh}
Amirhossein Reisizadeh, Hossein Taheri, Aryan Mokhtari, Hamed Hassani, and
  Ramtin Pedarsani.
\newblock Robust and communication-efficient collaborative learning.
\newblock In {\em Advances in Neural Information Processing Systems}, pages
  8388--8399, 2019.

\bibitem{lee2017speeding}
Kangwook Lee, Maximilian Lam, Ramtin Pedarsani, Dimitris Papailiopoulos, and
  Kannan Ramchandran.
\newblock Speeding up distributed machine learning using codes.
\newblock {\em IEEE Transactions on Information Theory}, 64(3):1514--1529,
  2017.

\bibitem{FLANP}
Amirhossein Reisizadeh, Isidoros Tziotis, Hamed Hassani, Aryan Mokhtari, and
  Ramtin Pedarsani.
\newblock Straggler-resilient federated learning: Leveraging the interplay
  between statistical accuracy and system heterogeneity.
\newblock {\em arXiv preprint arXiv:2012.14453}, 2020.

\bibitem{qu}
Guannan Qu and Na~Li.
\newblock Harnessing smoothness to accelerate distributed optimization.
\newblock {\em IEEE Transactions on Control of Network Systems},
  5(3):1245--1260, 2017.

\bibitem{nedic}
Angelia Nedic, Alex Olshevsky, and Wei Shi.
\newblock Achieving geometric convergence for distributed optimization over
  time-varying graphs.
\newblock {\em SIAM Journal on Optimization}, 27(4):2597--2633, 2017.

\bibitem{xi}
Chenguang Xi, Ran Xin, and Usman~A Khan.
\newblock Add-opt: Accelerated distributed directed optimization.
\newblock {\em IEEE Transactions on Automatic Control}, 63(5):1329--1339, 2017.

\bibitem{pu}
Shi Pu and Angelia Nedi{\'c}.
\newblock Distributed stochastic gradient tracking methods.
\newblock {\em Mathematical Programming}, pages 1--49, 2020.

\bibitem{xin}
Ran Xin, Anit~Kumar Sahu, Usman~A Khan, and Soummya Kar.
\newblock Distributed stochastic optimization with gradient tracking over
  strongly-connected networks.
\newblock In {\em {Proc}. of the 58th IEEE Conference on Decision and Control
  (CDC)}, pages 8353--8358, 2019.

\bibitem{karimi}
Hamed Karimi, Julie Nutini, and Mark Schmidt.
\newblock Linear convergence of gradient and proximal-gradient methods under
  the polyak-{\l}ojasiewicz condition.
\newblock In {\em Joint European Conference on Machine Learning and Knowledge
  Discovery in Databases}, pages 795--811. Springer, 2016.

\bibitem{nemirovski}
Arkadi Nemirovski, Anatoli Juditsky, Guanghui Lan, and Alexander Shapiro.
\newblock Robust stochastic approximation approach to stochastic programming.
\newblock {\em SIAM Journal on optimization}, 19(4):1574--1609, 2009.

\bibitem{qsgd}
Dan Alistarh, Demjan Grubic, Jerry Li, Ryota Tomioka, and Milan Vojnovic.
\newblock Qsgd: Communication-efficient sgd via gradient quantization and
  encoding.
\newblock {\em Advances in Neural Information Processing Systems},
  30:1709--1720, 2017.

\bibitem{wen}
Wei Wen, Cong Xu, Feng Yan, Chunpeng Wu, Yandan Wang, Yiran Chen, and Hai Li.
\newblock Terngrad: Ternary gradients to reduce communication in distributed
  deep learning.
\newblock In {\em Advances in neural information processing systems}, pages
  1509--1519, 2017.

\bibitem{khirirat}
Sarit Khirirat, Hamid~Reza Feyzmahdavian, and Mikael Johansson.
\newblock Distributed learning with compressed gradients.
\newblock {\em arXiv preprint arXiv:1806.06573}, 2018.

\bibitem{DIANA}
Konstantin Mishchenko, Eduard Gorbunov, Martin Tak{\'a}{\v{c}}, and Peter
  Richt{\'a}rik.
\newblock Distributed learning with compressed gradient differences.
\newblock {\em arXiv preprint arXiv:1901.09269}, 2019.

\bibitem{ADIANA}
Zhize Li, Dmitry Kovalev, Xun Qian, and Peter Richt{\'a}rik.
\newblock Acceleration for compressed gradient descent in distributed and
  federated optimization.
\newblock {\em arXiv preprint arXiv:2002.11364}, 2020.

\bibitem{horvathq}
Samuel Horv{\'a}th, Dmitry Kovalev, Konstantin Mishchenko, Sebastian Stich, and
  Peter Richt{\'a}rik.
\newblock Stochastic distributed learning with gradient quantization and
  variance reduction.
\newblock {\em arXiv preprint arXiv:1904.05115}, 2019.

\bibitem{Fedcomgate}
Farzin Haddadpour, Mohammad~Mahdi Kamani, Aryan Mokhtari, and Mehrdad Mahdavi.
\newblock Federated learning with compression: Unified analysis and sharp
  guarantees.
\newblock {\em arXiv preprint arXiv:2007.01154}, 2020.

\bibitem{beznosikov}
Aleksandr Beznosikov, Samuel Horv{\'a}th, Peter Richt{\'a}rik, and Mher
  Safaryan.
\newblock On biased compression for distributed learning.
\newblock {\em arXiv preprint arXiv:2002.12410}, 2020.

\bibitem{strom}
Nikko Strom.
\newblock Scalable distributed dnn training using commodity gpu cloud
  computing.
\newblock In {\em Sixteenth Annual Conference of the International Speech
  Communication Association}, 2015.

\bibitem{reisizadeh2019exact}
Amirhossein Reisizadeh, Aryan Mokhtari, Hamed Hassani, and Ramtin Pedarsani.
\newblock An exact quantized decentralized gradient descent algorithm.
\newblock {\em IEEE Transactions on Signal Processing}, 67(19):4934--4947,
  2019.

\bibitem{reddySignSGD}
Sai~Praneeth Karimireddy, Quentin Rebjock, Sebastian~U Stich, and Martin Jaggi.
\newblock Error feedback fixes signsgd and other gradient compression schemes.
\newblock {\em arXiv preprint arXiv:1901.09847}, 2019.

\bibitem{reddystich}
Sebastian~U Stich and Sai~Praneeth Karimireddy.
\newblock The error-feedback framework: Better rates for sgd with delayed
  gradients and compressed communication.
\newblock {\em arXiv preprint arXiv:1909.05350}, 2019.

\bibitem{lin_comp}
Eduard Gorbunov, Dmitry Kovalev, Dmitry Makarenko, and Peter Richt{\'a}rik.
\newblock Linearly converging error compensated sgd.
\newblock {\em Advances in Neural Information Processing Systems}, 33, 2020.

\bibitem{nesterov}
Yurii Nesterov.
\newblock {\em Introductory lectures on convex optimization: A basic course},
  volume~87.
\newblock Springer Science \& Business Media, 2013.

\bibitem{bubeck}
S{\'e}bastien Bubeck.
\newblock Convex optimization: Algorithms and complexity.
\newblock {\em arXiv preprint arXiv:1405.4980}, 2014.

\bibitem{mania}
Horia Mania, Xinghao Pan, Dimitris Papailiopoulos, Benjamin Recht, Kannan
  Ramchandran, and Michael~I Jordan.
\newblock Perturbed iterate analysis for asynchronous stochastic optimization.
\newblock {\em SIAM Journal on Optimization}, 27(4):2202--2229, 2017.

\end{thebibliography}
\newpage
\appendix

\section{Related Works on Compression Techniques in Distributed Optimization and Machine Learning}
\label{sec:compression}
In order to alleviate the communication bottleneck in large-scale distributed computing systems, a complementary direction to reducing communication rounds is to compress the information being exchanged. Broadly speaking, compression either involves quantization \cite{seide,strom, wen,qsgd,khirirat,DIANA,ADIANA,horvathq,FedPAQ,Fedcomgate,reisizadeh2019exact} to reduce the precision of transmitted information, or biased sparsification \cite{emp1,emp2,alistarhsparse,stichsparse,reddySignSGD,reddystich,beznosikov,lin_comp} to transmit only a few components of a vector with the largest magnitudes. While early work on quantization focused on  quantizing gradients \cite{wen,qsgd,khirirat}, improved convergence guarantees were later obtained  in \cite{DIANA}, where the authors proposed \texttt{DIANA} - a new algorithm based on quantizing \textit{differences of gradients}. The \texttt{DIANA} technique was further generalized in \cite{horvathq}, and an accelerated variant was proposed recently in \cite{ADIANA}. It should be noted that neither \texttt{DIANA} nor its variants account for the effect of performing local steps.  

As far as gradient sparsification is concerned, although empirical studies \cite{emp1,emp2} have revealed the benefits of aggressive sparsification techniques, theoretical guarantees for such methods, especially in a distributed setting, are few and far between. In our work, we study one such sparsification technique - the \texttt{TOP-k} operator - by exploiting the error-feedback mechanism. Notably, the error-feedback idea that we use was first introduced in \cite{seide} to study \texttt{1-bit SGD}. Follow-up work has explored this idea for studying \texttt{SignSGD}  in \cite{reddySignSGD}, and sparse SGD in \cite{alistarhsparse,stichsparse,reddystich} - all for single-worker settings. Very recently, \cite{beznosikov} and \cite{lin_comp} provide theoretical results on biased sparsification for a master-worker type distributed architecture; however, their analysis does not account for the effect of local steps. In light of the above discussion, we contribute to the strand of literature on compressed distributed learning by (i) performing the first analysis of gradient sparsification in a federated setting that involves local steps, and (ii) accounting for both objective and systems heterogeneity in our analysis. 
\newpage
\section{Proof of Propositions 1 and 2} 
The analysis in this section follows the techniques introduced in \cite{charles}. 
\label{app:proofprop1_prox}
\subsection*{Proof of Proposition \ref{prop1:FedProx}: FedProx}
We consider a deterministic version of \texttt{FedProx} where clients perform $H$ local iterations according to the update rule in equation \eqref{eqn:FedProx}. Let
$$g_{i,\ell}^{(t)}\triangleq \nabla f_i(x_{i,\ell}^{(t)})+\beta(x_{i,\ell}^{(t)}-\bar{x}_t), q_i^{(t)}\triangleq\sum\limits_{\ell=0}^{H-1}g_{i,\ell}^{(t)}, \hspace{2mm} \textrm{and} \hspace{2mm} q^{(t)}\triangleq \dfrac{1}{m}\sum\limits_{i\in \mathcal{S}} q_i^{(t)}.$$ Then, Lemma \ref{lemma1_prox} provides a recursive relationship in terms of $g^{(t)}_{i,\ell}$.\\
\begin{lemma}\label{lemma1_prox}
For each $i\in \mathcal{S}$, the following holds for all $\ell \in \{0, \cdots, H-1\}$: 
\begin{align}
    g^{(t)}_{i,\ell+1}=[I-\eta (A_i+\beta I)]g^{(t)}_{i,\ell}.
\end{align}
\end{lemma}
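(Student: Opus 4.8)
The plan is to exploit the fact that for the quadratic model of Section~\ref{sec:Motivation}, both the local gradient and the full proximal gradient are \emph{affine} functions of the current iterate, so that the dynamics of $g^{(t)}_{i,\ell}$ decouple into a simple linear recursion. First I would substitute the explicit form of the gradient for the quadratic loss, namely $\nabla f_i(x)=A_i(x-c_i)$, into the definition of $g^{(t)}_{i,\ell}$, to obtain
$$
g^{(t)}_{i,\ell}=A_i(x^{(t)}_{i,\ell}-c_i)+\beta(x^{(t)}_{i,\ell}-\bar{x}_t)=(A_i+\beta I)x^{(t)}_{i,\ell}-A_ic_i-\beta\bar{x}_t.
$$
The crucial observation is that, within a fixed round $t$, the quantities $A_ic_i$ and $\beta\bar{x}_t$ are constants (independent of $\ell$), so $g^{(t)}_{i,\ell}$ is an affine function of $x^{(t)}_{i,\ell}$ with linear part $(A_i+\beta I)$.

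Next I would compute the one-step increment of $g^{(t)}_{i,\ell}$. Subtracting consecutive terms, the constant offsets cancel and I am left with
$$
g^{(t)}_{i,\ell+1}-g^{(t)}_{i,\ell}=(A_i+\beta I)\bigl(x^{(t)}_{i,\ell+1}-x^{(t)}_{i,\ell}\bigr).
$$
Then I would invoke the \texttt{FedProx} update rule \eqref{fedprox_a}, which can be written compactly as $x^{(t)}_{i,\ell+1}-x^{(t)}_{i,\ell}=-\eta\,g^{(t)}_{i,\ell}$, since $g^{(t)}_{i,\ell}$ is by definition precisely the (proximal) descent direction appearing in that update. Substituting this into the previous display gives
$$
g^{(t)}_{i,\ell+1}-g^{(t)}_{i,\ell}=-\eta(A_i+\beta I)g^{(t)}_{i,\ell},
$$
and rearranging yields the claimed recursion $g^{(t)}_{i,\ell+1}=[I-\eta(A_i+\beta I)]g^{(t)}_{i,\ell}$, valid for all $\ell\in\{0,\dots,H-1\}$ and each $i\in\mathcal{S}$.

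There is essentially no hard step here; the entire result is a direct consequence of the affine structure of the quadratic problem, and the only thing one must be careful about is recognizing that the increment $x^{(t)}_{i,\ell+1}-x^{(t)}_{i,\ell}$ is exactly $-\eta$ times the vector $g^{(t)}_{i,\ell}$ whose recursion we are tracking. I expect this lemma to serve purely as a stepping stone: once the recursion is established, the cumulative update $q^{(t)}_i=\sum_{\ell=0}^{H-1}g^{(t)}_{i,\ell}$ can be evaluated as a geometric-type sum $\bigl(\sum_{\ell=0}^{H-1}[I-\eta(A_i+\beta I)]^{\ell}\bigr)g^{(t)}_{i,0}$, with $g^{(t)}_{i,0}=A_i(\bar{x}_t-c_i)$, which is precisely the structure needed to identify the surrogate gradient in Proposition~\ref{prop1:FedProx} and thereby read off the surrogate objective \eqref{surr_problem_prox}.
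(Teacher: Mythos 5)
Your proof is correct, and it rests on the same two ingredients as the paper's: the affine relation $g^{(t)}_{i,\ell}=(A_i+\beta I)x^{(t)}_{i,\ell}-A_ic_i-\beta\bar{x}_t$ and the compact form $x^{(t)}_{i,\ell+1}=x^{(t)}_{i,\ell}-\eta g^{(t)}_{i,\ell}$ of the \texttt{FedProx} update \eqref{fedprox_a}. The assembly differs in one genuine respect: the paper solves the affine relation for $x^{(t)}_{i,\ell}$, which requires inverting $(A_i+\beta I)$ (justified there by positive-definiteness of $A_i$ and $\beta\geq 0$), substitutes this into the update rule to express $x^{(t)}_{i,\ell+1}$ as an affine function of $g^{(t)}_{i,\ell}$, and then re-applies the affine map to recover $g^{(t)}_{i,\ell+1}$. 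You instead difference consecutive iterates of $g$, so that the round-constant terms $A_ic_i$ and $\beta\bar{x}_t$ cancel and you get $g^{(t)}_{i,\ell+1}-g^{(t)}_{i,\ell}=(A_i+\beta I)\bigl(x^{(t)}_{i,\ell+1}-x^{(t)}_{i,\ell}\bigr)=-\eta(A_i+\beta I)g^{(t)}_{i,\ell}$ in one line. Your route is marginally more elementary, since it never needs $(A_i+\beta I)$ to be invertible, at the cost of nothing; otherwise the two arguments are the same computation. Your closing remark about how the lemma is consumed downstream, namely $q^{(t)}_i=\bigl(\sum_{\ell=0}^{H-1}[I-\eta(A_i+\beta I)]^{\ell}\bigr)\nabla f_i(\bar{x}_t)$ with $g^{(t)}_{i,0}=\nabla f_i(\bar{x}_t)$, also matches exactly how the paper uses the recursion in the proof of Proposition \ref{prop1:FedProx}.
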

\begin{proof}
By definition of $g_{i,\ell}^{(t)}$, we have
\begin{align}\label{l11p}
    g^{(t)}_{i,\ell}=(A_i +\beta I)x^{(t)}_{i,\ell}-A_ic_i-\beta \bar{x}_{t}.
\end{align}
Hence, we may write
\begin{align}\label{l12p}
    x^{(t)}_{i,\ell}=(A_i+\beta I)^{-1}\big[g^{(t)}_{i,\ell}+A_ic_i+\beta \bar{x}_{t}\big],
\end{align}
where $(A_i+\beta I)$ is positive-definite since $A_i$ is positive-definite and $\beta\geq 0$. Plugging equation \eqref{l12p} into the local update rule, we get
\begin{align}\label{l13p}
    x^{(t)}_{i,\ell+1}&=x^{(t)}_{i,\ell}-\eta g^{(t)}_{i,\ell} \nonumber\\ 
                 &=\big[(A_i+\beta I)^{-1}-\eta I\big]g^{(t)}_{i,\ell}+(A_i+\beta I)^{-1}\big(A_ic_i+\beta \bar{x}_{t}\big).
\end{align}
Combining equations \eqref{l11p} and \eqref{l13p}, we get
\begin{align*}
    g^{(t)}_{i,\ell+1}&=(A_i+\beta I)x^{(t)}_{i,\ell+1}-A_ic_i-\beta \bar{x}_{t} \\
    &=[I-\eta(A_i+\beta I)]g^{(t)}_{i,\ell}.
\end{align*}
\end{proof}
For each client $i\in \mathcal{S}$, define the client distortion matrix
\begin{align}\label{Qi_prox}
    Q_i=\sum\limits_{\ell=0}^{H-1}[I-\eta (A_i+\beta I)]^{\ell},
\end{align}
and the client surrogate function
\begin{align}\label{SurrFi_prox}
    \tilde{f}_i(x)=\dfrac{1}{2}\|(Q_iA_i)^{1/2}(x-c_i)\|^2.
\end{align}
The global surrogate function $\tilde{f}(x)$ is hence defined as
\begin{align}\label{SurrF_prox}
   \tilde{f}(x)=\dfrac{1}{m}\sum\limits_{i \in \mathcal{S}} \tilde{f}_i(x)=\dfrac{1}{m}\sum\limits_{i \in \mathcal{S}}\dfrac{1}{2}\|(Q_iA_i)^{1/2}(x-c_i)\|^2.
\end{align}
We now provide the proof of Proposition \ref{prop1:FedProx}.
\begin{proof}
From Lemma \ref{lemma1_prox}, we have
\begin{align}\label{l21p}
    g^{(t)}_{i,\ell}&=[I-\eta(A_i+\beta I)]^\ell g^{(t)}_{i,0} \nonumber \\
               &=[I-\eta(A_i+\beta I)]^\ell\nabla f_i(\bar{x}_{t}).
\end{align}
Hence, we may write
\begin{align}\label{l22p}
    q_i^{(t)}&=\sum\limits_{\ell=0}^{H-1} g^{(t)}_{i,\ell}\nonumber\\
    &=\sum\limits_{\ell=0}^{H-1}[I-\eta(A_i+\beta I)]^\ell \nabla f_i(\bar{x}_{t})\nonumber\\
    &=Q_i\nabla f_i(\bar{x}_{t})\nonumber \\
    &=\nabla \tilde{f}_i(\bar{x}_{t}).
\end{align}
Moreover, by definition,
\begin{align}\label{l23p}
    q(t)&=\dfrac{1}{m}\sum\limits_{i \in \mathcal{S}} q_i^{(t)} \nonumber \\
    &=\dfrac{1}{m}\sum\limits_{i \in \mathcal{S}} \nabla \tilde{f}_i(\bar{x}_{t}) \nonumber \\
    &=\nabla \tilde{f}(\bar{x}_{t}),
\end{align}
where the second equality follows from equation \eqref{l22p}, and the last one follows from the  definition of the global surrogate function. As per equation \eqref{eqn:FedProx}, the global iterates in \texttt{FedProx} are updated as follows
\begin{align}\label{l24p}
    \bar{x}_{t+1}&=\bar{x}_{t}-\eta q^{(t)}\nonumber \\
    &=\bar{x}_{t}-\eta \nabla \tilde{f}(\bar{x}_{t}).
\end{align}
Equation \eqref{l24p} is equivalent to performing one step of GD on the surrogate function $\tilde{f}(x)$ with a starting point $\bar{x}_{t}$. It follows that $T$ communication rounds of \texttt{FedProx} are equivalent to performing $T$ steps of parallel GD with $m$ workers on the global surrogate function $\tilde{f}(x)$.
\end{proof}
In what follows, we analyze some properties of the client surrogate functions $\tilde{f}_i(x)$.
\begin{lemma}\label{lemma2_prox}
For $\eta<\dfrac{1}{L+\beta}$, the client surrogate function $\tilde{f}_i(x)$ shares the same unique minimizer as the true local function $f_i(x)$, that is,  $\tilde{x}_i^*=x_i^*=c_i$, $\forall i \in \mathcal{S}$, where $\tilde{x}_i^*$ is the minimizer of $\tilde{f}_i^*(x)$.
\end{lemma}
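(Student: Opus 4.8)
The plan is to recognize that $\tilde{f}_i$ is a quadratic form centered at $c_i$, so its unique minimizer is forced to be $c_i$ \emph{provided} the defining matrix is positive-definite. Concretely, expanding the norm in \eqref{SurrFi_prox} and using symmetry of $(Q_iA_i)^{1/2}$ gives $\tilde{f}_i(x)=\frac{1}{2}(x-c_i)^T (Q_iA_i)(x-c_i)$. Hence the entire lemma reduces to the single claim that $Q_iA_i$ is symmetric positive-definite: once that holds, $\tilde{f}_i$ is a strictly convex quadratic that is nonnegative and vanishes only at $x=c_i$, so it has the unique minimizer $\tilde{x}_i^*=c_i=x_i^*$, as desired.

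First I would note that, by its definition \eqref{Qi_prox}, $Q_i$ is a matrix polynomial in $A_i$; therefore $Q_i$ is symmetric and is simultaneously diagonalizable with $A_i$ in the orthonormal eigenbasis of $A_i$, so in particular $Q_i$ and $A_i$ commute. This reduces everything to a scalar, eigenvalue-by-eigenvalue analysis. Let $\lambda$ be an arbitrary eigenvalue of $A_i$; since $\mu_i I \preceq A_i \preceq L_i I$ and $L_i \leq L$, we have $\lambda \in [\mu_i, L]$, and the corresponding eigenvalue of $M_i := I-\eta(A_i+\beta I)$ is $1-\eta(\lambda+\beta)$.

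The key step is controlling these eigenvalues, and this is exactly where the hypothesis $\eta < \frac{1}{L+\beta}$ is used: because $\lambda+\beta \le L+\beta$, we obtain $0 < \eta(\lambda+\beta) < 1$, so every eigenvalue of $M_i$ lies in $(0,1)$. Consequently each eigenvalue of $Q_i=\sum_{\ell=0}^{H-1}M_i^{\ell}$ equals $\sum_{\ell=0}^{H-1}\bigl(1-\eta(\lambda+\beta)\bigr)^{\ell}>0$, so $Q_i \succ 0$. Finally, since $Q_i$ and $A_i$ are \emph{commuting} symmetric positive-definite matrices, they share an eigenbasis and the eigenvalues of $Q_iA_i$ are the products of the (strictly positive) eigenvalues of the two factors; hence $Q_iA_i$ is symmetric positive-definite, supplying the missing ingredient.

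I expect the only genuine obstacle to be the positivity of $M_i$. Were the step-size too large, $M_i$ could acquire an eigenvalue of modulus at least one, in which case $Q_iA_i$ would no longer be positive-definite: this would break both the well-definedness of the square root in \eqref{SurrFi_prox} and the uniqueness of the minimizer. The bound $\eta<\frac{1}{L+\beta}$ is precisely what rules this out \emph{uniformly} over all $H$ and all eigenvalues $\lambda\in[\mu_i,L]$; everything else is a routine consequence of simultaneous diagonalizability and the strict convexity of a positive-definite quadratic.
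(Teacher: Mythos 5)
Your proposal is correct and follows essentially the same route as the paper's proof: an eigenvalue-by-eigenvalue analysis showing that the step-size bound $\eta<\frac{1}{L+\beta}$ forces every eigenvalue of $I-\eta(A_i+\beta I)$ into $(0,1)$, hence $Q_i\succ 0$, and then using commutativity/simultaneous diagonalizability of $Q_i$ and $A_i$ to conclude $Q_iA_i$ is symmetric positive-definite with unique minimizer $c_i$. The only cosmetic difference is that you obtain commutativity directly from $Q_i$ being a polynomial in $A_i$, whereas the paper phrases it via simultaneous diagonalizability; these are the same observation.
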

\begin{proof}
Let $(\lambda_i,v_i)$ be a tuple of eigenvalue $\lambda_i$, and corresponding eigenvector $v_i$, of $A_i$, where $A_i$ is symmetric positive-definite. Then, $(1-\eta (\lambda_i+\beta),v_i)$ is a tuple of eigenvalue and corresponding eigenvector of $D\triangleq [I-\eta (A_i+\beta)]$. For $0<\eta\leq \dfrac{1}{L+\beta}<\dfrac{1}{L_i+\beta}$, $\forall i \in  \mathcal{S}$, we have $0<1-\eta (\lambda_i+\beta)<1$, and hence,  $D$ is a symmetric positive-definite matrix. For any integer $\ell\geq1$, $([1-\eta (\lambda_i+\beta)]^\ell,v_i)$ is a tuple of eigenvalue and corresponding eigenvector of $D^\ell\triangleq [I-\eta (A_i+\beta)]^\ell$, and hence,  $D^\ell$ is also a symmetric positive-definite matrix since $0<\eta<\dfrac{1}{L_i+\beta}$. From equation \eqref{Qi_prox}, we have $Q_i=\sum\limits_{\ell=0}^{H-1} [I-\eta (A_i+\beta)]^\ell=\sum\limits_{\ell=0}^{H-1}D^\ell$. Hence, $(\sum\limits_{\ell=0}^{H-1}[1-\eta (\lambda_i+\beta)]^\ell,v_i)$ is a tuple of eigenvalue and corresponding eigenvector of $Q_i$ and, consequently, $Q_i$ is a symmetric positive-definite matrix. Based on the above  results, since both $A_i$ and $Q_i$ are symmetric positive-definite matrices that are simultaneously diagonalizable, it follows that they commute. Hence, for $0<\eta<\dfrac{1}{L+\beta}$, the product $Q_iA_i$ is a symmetric positive-definite matrix and the client surrogate function $\tilde{f}_i(x)$ admits a unique minimizer $\tilde{x}_i^*=x^*_i=c_i.$
\end{proof}

\subsection*{Proof of Proposition \ref{prop2:FedNova}: FedNova}
\label{app:proofprop2_nova}
The proof of Proposition \ref{prop2:FedNova} follows roughly the same steps as the proof of Proposition \ref{prop1:FedProx}. In particular, we consider a deterministic version of \texttt{FedNova} where client $i$ performs $\tau_i$ local iterations according to the update rule in equation \eqref{eqn:FedNova}. Let $g_{i,\ell}^{(t)}\triangleq\nabla f_i(x_{i,\ell}^{(t)})$, $q_i^{(t)}\triangleq\sum\limits_{\ell=0}^{\tau_i-1}\alpha_i g_{i,\ell}^{(t)}$, and $q(t)\triangleq\dfrac{1}{m}\sum\limits_{i \in \mathcal{S}} q_i^{(t)}$. Then, Lemma \ref{lemma1_nova} provides a recursive relationship in terms of $g^{(t)}_{i,\ell}$.
\begin{lemma}\label{lemma1_nova}
For every $i\in \mathcal{S}$, we have for all $\ell\in \{0, \cdots, \tau_i-1\}$: 
\begin{align}
    g^{(t)}_{i,\ell+1}=[I-\eta A_i]g^{(t)}_{i,\ell}.
\end{align}
\end{lemma}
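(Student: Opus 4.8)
The plan is to mimic the proof of Lemma~\ref{lemma1_prox}, exploiting the fact that the quadratic structure of each $f_i$ makes its gradient map affine, so that the per-step recursion on the gradients is exact. Since $f_i(x)=\frac{1}{2}\|A_i^{1/2}(x-c_i)\|^2$, the gradient is $\nabla f_i(x)=A_i(x-c_i)$, and hence by definition of $g^{(t)}_{i,\ell}$,
$$g^{(t)}_{i,\ell}=A_i x^{(t)}_{i,\ell}-A_i c_i.$$
This is the \texttt{FedNova} analog of the relation used in the \texttt{FedProx} case, now with no proximal term, i.e.\ effectively $\beta=0$.

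First I would write this same identity at index $\ell+1$, namely $g^{(t)}_{i,\ell+1}=A_i x^{(t)}_{i,\ell+1}-A_i c_i$. Next I would substitute the local update rule \eqref{fednova_a}, namely $x^{(t)}_{i,\ell+1}=x^{(t)}_{i,\ell}-\eta\, g^{(t)}_{i,\ell}$, to obtain
$$g^{(t)}_{i,\ell+1}=A_i\big(x^{(t)}_{i,\ell}-\eta\, g^{(t)}_{i,\ell}\big)-A_i c_i=\big(A_i x^{(t)}_{i,\ell}-A_i c_i\big)-\eta A_i g^{(t)}_{i,\ell}.$$
The bracketed term is exactly $g^{(t)}_{i,\ell}$, so the right-hand side collapses to $(I-\eta A_i)\,g^{(t)}_{i,\ell}$, which is the claimed recursion, valid for every $\ell\in\{0,\dots,\tau_i-1\}$.

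The main (and only mild) difficulty here is careful bookkeeping rather than any genuine obstacle: one must keep $c_i$ fixed across the step and recognize that $A_i x^{(t)}_{i,\ell}-A_i c_i$ reassembles into $g^{(t)}_{i,\ell}$ rather than introducing a spurious constant. The argument differs from Lemma~\ref{lemma1_prox} precisely in the absence of the proximal correction $\beta(x-\bar{x}_t)$, which there forces the transition matrix $I-\eta(A_i+\beta I)$ in place of $I-\eta A_i$. Once this recursion is established, it can be iterated to give $g^{(t)}_{i,\ell}=(I-\eta A_i)^{\ell}\nabla f_i(\bar{x}_t)$, which is exactly what the subsequent proof of Proposition~\ref{prop2:FedNova} will need in order to express the normalized aggregate $q^{(t)}$ as the gradient of the surrogate objective \eqref{surr_problem_nova}.
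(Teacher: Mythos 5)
Your proof is correct and matches the paper's approach: the paper omits an explicit proof of this lemma, noting it follows the same steps as Lemma~\ref{lemma1_prox}, and your argument is exactly that computation specialized to $\beta=0$. If anything, your direct substitution of the update rule into $g^{(t)}_{i,\ell+1}=A_i x^{(t)}_{i,\ell+1}-A_i c_i$ is slightly cleaner than the paper's Lemma~\ref{lemma1_prox} template, which first inverts $(A_i+\beta I)$ to express $x^{(t)}_{i,\ell}$ in terms of $g^{(t)}_{i,\ell}$ before recombining.
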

For each client $i\in \mathcal{S}$, define the client distortion matrix
\begin{align}\label{Qi_nova}
    Q_i=\sum\limits_{\ell=0}^{\tau_i-1}[I-\eta  A_i]^{\ell}\alpha_i.
\end{align}
Let the client and global surrogate functions, namely $\tilde{f}_i(x)$ and $\tilde{f}(x)$, be defined as before. Then, the proof of Proposition \ref{prop2:FedNova} follows directly from Lemma \ref{lemma1_nova}. Finally, Lemma \ref{lemma2_nova} provides additional insight on the client surrogate functions $\tilde{f}_i(x)$. Once again, the proof is omitted as it closely follows the steps of the proof of Lemma \ref{lemma2_prox}.
\begin{lemma}\label{lemma2_nova}
For $\eta<\dfrac{1}{L}$, the client surrogate function $\tilde{f}_i(x)$ shares the same unique minimizer as the true local function $f_i(x)$, that is,  $\tilde{x}_i^*=x_i^*=c_i$, $\forall i \in \mathcal{S}$.
\end{lemma}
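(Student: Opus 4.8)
The plan is to follow the template of Lemma \ref{lemma2_prox} almost verbatim, adapting it to the two structural changes present in the \texttt{FedNova} setting: the local iteration matrix is now $D\triangleq I-\eta A_i$ (there is no proximal shift, so $\beta$ is effectively $0$), and the distortion matrix in \eqref{Qi_nova} carries the extra positive scalar $\alpha_i=\tau_{eff}/\tau_i>0$. Since the surrogate takes the same quadratic form as \eqref{SurrFi_prox}, namely $\tilde{f}_i(x)=\tfrac12 (x-c_i)^{\top}(Q_iA_i)(x-c_i)$, it suffices to prove that $Q_iA_i$ is symmetric positive-definite for every $0<\eta<1/L$. Once this is established, $\tilde{f}_i$ is a strictly convex quadratic whose gradient $\nabla\tilde{f}_i(x)=Q_iA_i(x-c_i)$ vanishes only at $x=c_i$, giving the unique minimizer $\tilde{x}_i^*=c_i=x_i^*$.

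To show $Q_iA_i\succ0$, I would first diagonalize $A_i$. Let $(\lambda_i,v_i)$ be an eigenpair of the symmetric positive-definite matrix $A_i$, so $\lambda_i\in[\mu_i,L_i]$. Because $L_i\leq L$ and $0<\eta<1/L$, we have $0<\eta\lambda_i\leq \eta L_i<1$, whence $1-\eta\lambda_i\in(0,1)$; thus $D=I-\eta A_i$ is symmetric positive-definite with all eigenvalues in $(0,1)$, and consequently $D^{\ell}$ is symmetric positive-definite for every integer $\ell\geq0$ (with $D^0=I$). Summing over $\ell\in\{0,\dots,\tau_i-1\}$ (a nonempty range since $\tau_i\geq1$) and scaling by $\alpha_i>0$ then shows that $Q_i=\alpha_i\sum_{\ell=0}^{\tau_i-1}D^{\ell}$ is itself symmetric positive-definite, with eigenpairs $\big(\alpha_i\sum_{\ell=0}^{\tau_i-1}(1-\eta\lambda_i)^{\ell},\,v_i\big)$.

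The one point that needs care — and the only place the argument departs from a purely formal computation — is concluding that the \emph{product} $Q_iA_i$ is symmetric positive-definite, since the product of two symmetric positive-definite matrices is not symmetric in general. This is resolved by noting that $Q_i$ is a polynomial in $A_i$, so $Q_i$ and $A_i$ commute and are simultaneously diagonalizable through the common eigenbasis $\{v_i\}$. Hence $Q_iA_i$ is symmetric, and its eigenvalues are the products $\big(\alpha_i\sum_{\ell=0}^{\tau_i-1}(1-\eta\lambda_i)^{\ell}\big)\lambda_i>0$, so $Q_iA_i\succ0$. Strict positive-definiteness then yields $\tilde{f}_i(x)>0$ for all $x\neq c_i$ together with $\tilde{f}_i(c_i)=0$, establishing the claimed unique minimizer $\tilde{x}_i^*=x_i^*=c_i$ and completing the proof along the lines of Lemma \ref{lemma2_prox}.
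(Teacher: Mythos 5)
Your proposal is correct and is essentially the paper's own argument: the paper omits the proof of this lemma precisely because it follows the steps of Lemma \ref{lemma2_prox}, which is what you reproduce (eigenvalue analysis of $D=I-\eta A_i$, positive-definiteness of $Q_i$, commutation/simultaneous diagonalization of $Q_i$ and $A_i$, hence $Q_iA_i\succ 0$ and unique minimizer $c_i$). Your justification for commutation (that $Q_i$ is a polynomial in $A_i$) is a slightly cleaner phrasing than the paper's, but the proof is the same in substance.
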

It should be noted that while $f_i(x)$ and $\tilde{f}_i(x)$ share the same minimizer, the minimizer of the surrogate global function may be far off from the true minimizer $x^*$ (see the numerical results in Section \ref{numb_Prox_Nova}).
\newpage
\section{Useful Results and Facts}
In this section, we will compile some results that will prove to be useful later in our analysis. We start by assembling some well-known facts about convex and smooth functions \cite{nesterov,bubeck}. 

\begin{itemize}
    \item (\textbf{Smoothness}): Suppose $f(x)$ is $L$-smooth. Then, by definition, the following inequalities hold for any two points $x,y\in\mathbb{R}^d$:
\begin{equation}
    \Vert \nabla f(x)- \nabla f(y) \Vert \leq L \Vert x-y \Vert, \, \textrm{and}
\label{eqn:smooth1}
\end{equation}
\begin{equation}
    f(y)-f(x) \leq \langle y-x, \nabla f(x) \rangle +\frac{L}{2}{\Vert y-x \Vert}^2.
\label{eqn:smooth2}
\end{equation}
\item (\textbf{Smoothness}): Suppose $f(x)$ is $L$-smooth, and $x^*\in\argmin_{x\in\mathbb{R}^{d}} f(x)$. Then, we can upper-bound the magnitude of the gradient at any given point $y\in\mathbb{R}^d$ in terms of the objective sub-optimality at $y$, as follows:
\begin{equation}
    {\Vert \nabla f(y) \Vert}^2 \leq 2L (f(y)-f(x^*)).
\label{eqn:grad_upp}
\end{equation}
\item (\textbf{Smoothness and Convexity}): Suppose $f(x)$ is $L$-smooth and convex. Then, the following holds for any two points $x,y \in \mathbb{R}^{d}$:
\begin{equation}
    \langle \nabla f(y) - \nabla f(x), y-x \rangle \geq \frac{1}{L} {\Vert \nabla f(y) - \nabla f(x)\Vert}^2.
\label{eqn:smooth_conv}
\end{equation}
\item (\textbf{Strong convexity}): Suppose $f(x)$ is $\mu$-strongly convex. Then, by definition, the following inequality holds for any two points $x,y\in\mathbb{R}^d$:
\begin{equation}
    f(y)-f(x) \geq \langle y-x, \nabla f(x) \rangle +\frac{\mu}{2}{\Vert y-x \Vert}^2.
\label{eqn:strconv1}
\end{equation}
Using the above inequality, one can easily conclude that
\begin{equation}
    \langle \nabla f(y) - \nabla f(x), y-x \rangle \geq \mu {\Vert y-x \Vert}^2.
\label{eqn:strconv2}
\end{equation}

 \item (\textbf{Strong convexity}): Suppose $f(x)$ is $\mu$-strongly convex, and $x^*=\argmin_{x\in\mathbb{R}^{d}} f(x)$. Then, we can lower-bound the magnitude of the gradient at any given point $y\in\mathbb{R}^d$ in terms of the objective sub-optimality at $y$, as follows:
\begin{equation}
    {\Vert \nabla f(y) \Vert}^2 \geq 2\mu (f(y)-f(x^*)).
\label{eqn:grad_low}
\end{equation}
\end{itemize}

In addition to the above results, we will have occasion to make use of the following facts.

\begin{itemize}
    \item Given any two vectors $x,y\in\mathbb{R}^d$, the following holds for any $\gamma >0$:
\begin{equation}
    {\Vert x+y \Vert}^2 \leq (1+\gamma){\Vert x \Vert}^2 + \left(1+\frac{1}{\gamma}\right){\Vert y \Vert}^2.
\label{eqn:rel_triangle}
\end{equation}
 \item Given $m$ vectors $x_1,\ldots,x_m\in\mathbb{R}^d$, the following is a simple application of Jensen's inequality:
 \begin{equation}
     \norm[\bigg]{ \sum\limits_{i=1}^{m} x_i}^2 \leq m \sum\limits_{i=1}^{m} {\Vert x_i \Vert}^2.
\label{eqn:Jensens}
 \end{equation}
\end{itemize}

\section{Analysis under Objective Heterogeneity: \texttt{FedLin} resolves the Speed-Accuracy Conflict} 
\label{app:proofthm_speed_acc}
In this section, we start by presenting a convergence analysis for \texttt{FedLin} that focuses solely on the aspect of heterogeneity in the clients' local objective functions. We do so to set up the basic proof structure that we will later build on for analyzing more involved settings. With this in mind, we will assume throughout this section that all clients perform the same number of local updates, i.e., $\tau_i=H,\forall i\in\mathcal{S}$. Additionally, we will assume that there is no gradient sparsification, i.e., $\delta_{c}=\delta_{s}=1$. Based on the second assumption, observe that $\rho_{i,t}=0,e_t=0,\forall i\in\mathcal{S},\forall t\in\{1,\ldots,T\}.$ Thus, the local update rule for each client in line 5 of \texttt{FedLin} simplifies to
\begin{equation}
    x^{(t)}_{i,\ell+1} = x^{(t)}_{i,\ell}-\eta_i(\nabla f_i(x^{(t)}_{i,\ell})-\nabla f_i(\bar{x}_{t})+\nabla f(\bar{x}_t)).
\end{equation}
Let us denote by $\kappa=L/\mu$ the condition number of an $L$-smooth and $\mu$-strongly convex function. 
\begin{theorem} \label{thm:speed_acc}  (\textbf{Heterogeneous Setting}) Suppose each $f_i(x)$ is $L$-smooth and $\mu$-strongly convex. Moreover, suppose $\tau_i=H,\forall i\in\mathcal{S}$, and $\delta_c=\delta_s=1$. Then, with $\eta_i=\eta=\frac{1}{6LH}, \forall i \in \mathcal{S}$, \texttt{FedLin} guarantees:
\begin{equation}
    f(\bar{x}_{T+1})-f(x^*) \leq {\left(1-\frac{1}{6\kappa}\right)}^T (f(\bar{x}_{1})-f(x^*)). 
\label{eqn:speed_acc}
\end{equation}
\end{theorem}

 When all clients optimize the same loss function, we have the following result. 
 
\begin{proposition} \label{prop:homogeneous}(\textbf{Homogeneous setting}) Suppose all client objective functions are identical, i.e., $f_i(x)=f(x), \forall x\in\mathbb{R}^{d}, \forall i\in\mathcal{S}$, and $f(x)$ is $L$-smooth and $\mu$-strongly convex.  Moreover, suppose $\tau_i=H,\forall i\in\mathcal{S}$, and $\delta_c=\delta_s=1$. Then, with $\eta_i=\eta=\frac{1}{L}, \forall i \in \mathcal{S}$, \texttt{FedLin} guarantees:
\begin{equation}
    f(\bar{x}_{T+1})-f(x^*) \leq {\left(1-\frac{1}{\kappa}\right)}^{TH} (f(\bar{x}_{1})-f(x^*)). 
\end{equation}
\end{proposition}

We first provide the proof of Proposition \ref{prop:homogeneous}.
\begin{proof} 
The proof follows from two simple observations. First, note that since $\nabla f_i(x)=\nabla f(x), \forall x\in\mathbb{R}^d$, the local update rule of each client $i\in\mathcal{S}$ reduces to $x^{(t)}_{i,\ell+1} =  x^{(t)}_{i,\ell}-\eta\nabla f(x^{(t)}_{i,\ell})$. Second, based on the steps of \texttt{FedLin}, observe that the local iterates of the clients remain synchronized within each communication round, i.e., for any $t\in\{1,\ldots,T\}$, and any two clients $i,j\in\mathcal{S}$, it holds that $x^{(t)}_{i,\ell}=x^{(t)}_{j,\ell}, \forall \ell\in\{0,\ldots,H-1\}$. Thus, \texttt{FedLin} boils down to $m$ parallel and identical implementations of gradient descent on the global loss function $f(x)$, with $TH$ iterations performed by each client over $T$ rounds. The claim of the proposition then follows from standard results of  centralized gradient descent applied to strongly convex and smooth deterministic objectives.
\end{proof}

\subsection{Proof of Theorem \ref{thm:speed_acc}} 
Let us fix a communication round $t\in\{1,\ldots,T\}$. Our goal will be to derive an upper-bound on the change in the value of the objective function $f(\cdot)$ over round $t$, i.e., we will be interested in bounding the quantity $f(\bar{x}_{t+1})-f(\bar{x}_t)$. To lighten the notation, we will drop the superscript $t$ on the local iterates $x^{(t)}_{i,\ell}$, and simply refer to them as $x_{i,\ell}$; it should be understood from context that all such iterates pertain to round $t$. Given that $\delta_c=\delta_s=1$, and $\eta_i=\eta, \forall i\in\mathcal{S}$, the local update rule in line 5 of \texttt{FedLin} takes the following simplified form:
    \begin{equation}
    x_{i,\ell+1} = x_{i,\ell}-\eta(\nabla f_i(x_{i,\ell})-\nabla f_i(\bar{x}_{t})+\nabla f(\bar{x}_t)).
\label{eqn:local_update}
\end{equation}

Based on the above rule, and smoothness of each $f_i(x)$, the following lemma provides an upper-bound on $f(\bar{x}_{t+1})-f(\bar{x}_t)$. 
\begin{lemma}
Suppose each $f_i(x)$ is $L$-smooth. Moreover, suppose $\tau_i=H,\forall i\in\mathcal{S}$, and $\delta_c=\delta_s=1$. Then,   \texttt{FedLin} guarantees:
\begin{equation}
\begin{aligned}
    f(\bar{x}_{t+1})-f(\bar{x}_t) &\leq -\eta H\left(1-\eta LH\right){\Vert \nabla f(\bar{x}_t) \Vert}^2 + \left(\frac{\eta L}{m} \sum\limits_{i=1}^{m}\sum\limits_{\ell=0}^{H-1}\Vert x_{i,\ell} - \bar{x}_t \Vert \right) \Vert \nabla f(\bar{x}_t) \Vert\\ 
    & \hspace{5mm} + \frac{\eta^2 L^3 H}{m} \sum\limits_{i=1}^{m}\sum\limits_{\ell=0}^{H-1}{\Vert x_{i,\ell} - \bar{x}_t \Vert}^2.
\end{aligned}
\label{eqn:f_bound_thm1}
\end{equation}
\label{lemma:f_boundthm1}
\end{lemma}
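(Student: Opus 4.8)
The plan is to apply $L$-smoothness of $f$ — which is itself $L$-smooth as an average of $L$-smooth functions — to the pair $\bar{x}_t,\bar{x}_{t+1}$ via \eqref{eqn:smooth2}, and then control the two resulting terms separately:
\[
f(\bar{x}_{t+1}) - f(\bar{x}_t) \leq \langle \bar{x}_{t+1} - \bar{x}_t, \nabla f(\bar{x}_t) \rangle + \frac{L}{2}\Vert \bar{x}_{t+1} - \bar{x}_t \Vert^2 .
\]
The crucial preliminary step is to obtain a clean expression for the aggregate displacement $\bar{x}_{t+1} - \bar{x}_t$. Summing the local update \eqref{eqn:local_update} over $\ell=0,\ldots,H-1$ and using $x_{i,0}=\bar{x}_t$ gives $x_{i,H}-\bar{x}_t = -\eta\sum_{\ell}(\nabla f_i(x_{i,\ell}) - \nabla f_i(\bar{x}_t) + \nabla f(\bar{x}_t))$; averaging over $i$ and invoking $\frac{1}{m}\sum_i \nabla f_i(\bar{x}_t)=\nabla f(\bar{x}_t)$, the stale correction terms cancel \emph{exactly}, leaving
\[
\bar{x}_{t+1} - \bar{x}_t = -\frac{\eta}{m}\sum_{i=1}^{m}\sum_{\ell=0}^{H-1}\nabla f_i(x_{i,\ell}).
\]

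I would then rewrite this as $\bar{x}_{t+1}-\bar{x}_t = -\eta H \nabla f(\bar{x}_t) + \Delta$, where the drift residual $\Delta := -\frac{\eta}{m}\sum_i\sum_\ell(\nabla f_i(x_{i,\ell})-\nabla f_i(\bar{x}_t))$ measures how far the local gradients have moved from the round's anchor $\bar{x}_t$. This decomposition is the heart of the argument: it exposes the idealized $H$-step gradient-descent direction $-\eta H\nabla f(\bar{x}_t)$ and isolates the error due to client drift. For the inner-product term, substituting the decomposition yields $-\eta H\Vert\nabla f(\bar{x}_t)\Vert^2 + \langle \Delta, \nabla f(\bar{x}_t)\rangle$; I would bound the cross term by Cauchy--Schwarz, and bound $\Vert\Delta\Vert$ by the triangle inequality followed by the smoothness estimate \eqref{eqn:smooth1}, giving $\Vert\Delta\Vert \leq \frac{\eta L}{m}\sum_i\sum_\ell \Vert x_{i,\ell}-\bar{x}_t\Vert$ and hence the linear drift term in the claim.

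For the quadratic term I would apply $\Vert a+b\Vert^2 \leq 2\Vert a\Vert^2 + 2\Vert b\Vert^2$ (the $\gamma=1$ case of \eqref{eqn:rel_triangle}) to the same decomposition, then bound $\Vert\Delta\Vert^2$ using Jensen's inequality \eqref{eqn:Jensens} over the $mH$ summands together with \eqref{eqn:smooth1}, obtaining $\Vert\Delta\Vert^2 \leq \frac{\eta^2 L^2 H}{m}\sum_i\sum_\ell \Vert x_{i,\ell}-\bar{x}_t\Vert^2$. Multiplying through by $L/2$ produces precisely the $\eta^2 L H^2\Vert\nabla f(\bar{x}_t)\Vert^2$ and $\frac{\eta^2 L^3 H}{m}\sum_i\sum_\ell\Vert x_{i,\ell}-\bar{x}_t\Vert^2$ contributions; collecting the two $\Vert\nabla f(\bar{x}_t)\Vert^2$ coefficients into $-\eta H(1-\eta L H)$ gives the stated bound.

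Every inequality here is routine once the displacement identity is in hand, so the one step I would flag as the substantive content is the exact cancellation of the correction terms in the averaged update. This cancellation is exactly the mechanism by which exploiting past gradients forces the \emph{average} client to behave like centralized gradient descent on $f$, with the residual $\Delta$ as the sole source of degradation; the remaining work is just a mechanical smoothness-based bound on $\Delta$ in both its first-power and squared forms.
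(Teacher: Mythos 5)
Your proposal is correct and follows essentially the same route as the paper's proof: the same displacement identity with exact cancellation of the correction terms, the same smoothness inequality on $f$, Cauchy--Schwarz plus triangle inequality plus \eqref{eqn:smooth1} for the inner-product term, and the relaxed triangle inequality \eqref{eqn:rel_triangle} with Jensen's inequality \eqref{eqn:Jensens} and smoothness for the quadratic term. The only difference is cosmetic: you name the drift residual $\Delta$ and bound it once in each power, while the paper carries the same decomposition inline through the two bounds.
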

\begin{proof}
Based on \eqref{eqn:local_update} and the fact that $x_{i,0}=\bar{x}_t, \forall i\in\mathcal{S}$, we have:
\begin{equation}
    x_{i,H}=\bar{x}_t-\eta\sum\limits_{\ell=0}^{H-1}\nabla f_i(x_{i,\ell}) -\eta H (\nabla f(\bar{x}_t) - \nabla f_i(\bar{x}_t)), \forall i\in \mathcal{S}.
\end{equation}
Thus, 
\begin{equation}
\begin{aligned}
    \bar{x}_{t+1}=\frac{1}{m}\sum\limits_{i=1}^{m}x_{i,H}&=\bar{x}_t-\frac{\eta}{m}\sum\limits_{i=1}^{m} \sum\limits_{\ell=0}^{H-1} \nabla f_i(x_{i,\ell}) -\frac{\eta H}{m} \sum\limits_{i=1}^{m} \left(\nabla f(\bar{x}_t) - \nabla f_i(\bar{x}_t)\right)\\  
    &=\bar{x}_t-\frac{\eta}{m}\sum\limits_{i=1}^{m} \sum\limits_{\ell=0}^{H-1} \nabla f_i(x_{i,\ell}),
\label{eqn:xbart}
\end{aligned}
\end{equation}
where for the second equality, we used the fact that $\nabla f(y) = \frac{1}{m} \sum\limits_{i=1}^{m} \nabla f_i(y), \forall y\in\mathbb{R}^d$. 
Now since each $f_i(\cdot)$ is $L$-smooth, it is easy to verify that $f(\cdot)$ is also $L$-smooth. From \eqref{eqn:smooth2}, we then obtain:
\begin{equation}
    \begin{aligned}
    f(\bar{x}_{t+1})-f(\bar{x}_{t}) & \leq \langle \bar{x}_{t+1} - \bar{x}_{t}, \nabla f(\bar{x}_t) \rangle + \frac{L}{2} {\Vert \bar{x}_{t+1}-\bar{x}_t \Vert}^2\\
    & = -\frac{\eta}{m} \sum\limits_{i=1}^{m}\sum\limits_{\ell=0}^{H-1} \langle \nabla f_i(x_{i,\ell}), \nabla f(\bar{x}_{t}) \rangle + \frac{L}{2} \norm[\bigg]{\frac{\eta}{m}\sum\limits_{i=1}^{m}\sum\limits_{\ell=0}^{H-1} \nabla f_i(x_{i,\ell})}^2,
    \end{aligned}
\label{eqn:iterimbnd1}
\end{equation}
where in the last step we used \eqref{eqn:xbart}. To proceed, we will now separately bound each of the two terms that appear in \eqref{eqn:iterimbnd1}. For the first term, observe that
\begin{equation}
    \begin{aligned}
    -\frac{\eta}{m} \sum\limits_{i=1}^{m}\sum\limits_{\ell=0}^{H-1} \langle \nabla f_i(x_{i,\ell}), \nabla f(\bar{x}_{t}) \rangle &= - \eta \Big \langle \frac{1}{m}\sum\limits_{i=1}^{m}\sum\limits_{\ell=0}^{H-1} \left( \nabla f_i(x_{i,\ell})-\nabla f_i(\bar{x}_t) \right) + \frac{1}{m} \sum\limits_{i=1}^{m}\sum\limits_{\ell=0}^{H-1} \nabla f_i(\bar{x}_t), \nabla f(\bar{x}_{t}) \Big \rangle \\
    &= - \eta \Big \langle \frac{1}{m}\sum\limits_{i=1}^{m}\sum\limits_{\ell=0}^{H-1} \left( \nabla f_i(x_{i,\ell})-\nabla f_i(\bar{x}_t) \right) + H\nabla f(\bar{x}_t), \nabla f(\bar{x}_{t}) \Big \rangle \\
   &= -\frac{\eta}{m} \Big \langle \sum\limits_{i=1}^{m}\sum\limits_{\ell=0}^{H-1} \left( \nabla f_i(x_{i,\ell})-\nabla f_i(\bar{x}_t) \right), \nabla f(\bar{x}_{t}) \Big \rangle  - \eta H{\Vert \nabla f(\bar{x}_t) \Vert}^2 \\
  & \overset{(a)} \leq \frac{\eta}{m} \left(\norm[\bigg]{\sum\limits_{i=1}^{m}\sum\limits_{\ell=0}^{H-1} \left( \nabla f_i(x_{i,\ell})-\nabla f_i(\bar{x}_t) \right)}\right) \Vert \nabla f(\bar{x}_t) \Vert - \eta H{\Vert \nabla f(\bar{x}_t) \Vert}^2\\
 & \overset{(b)} \leq \frac{\eta}{m} \left( {\sum\limits_{i=1}^{m}\sum\limits_{\ell=0}^{H-1} \Vert \nabla f_i(x_{i,\ell})-\nabla f_i(\bar{x}_t)\Vert}\right) \Vert \nabla f(\bar{x}_t) \Vert - \eta H{\Vert \nabla f(\bar{x}_t) \Vert}^2\\
& \overset{(c)} \leq \frac{\eta L}{m} \left( {\sum\limits_{i=1}^{m}\sum\limits_{\ell=0}^{H-1} \Vert x_{i,\ell}-\bar{x}_t\Vert}\right) \Vert \nabla f(\bar{x}_t) \Vert - \eta H{\Vert \nabla f(\bar{x}_t) \Vert}^2.
    \end{aligned}
\label{eqn:T1_thm1}
\end{equation}
In the above steps, (a) follows from the Cauchy-Schwartz inequality, (b) follows from the triangle inequality, and (c) follows from the fact that each $f_i(\cdot)$ is $L$-smooth (see \eqref{eqn:smooth1}). Next, we bound the second term in \eqref{eqn:iterimbnd1} as follows.
\begin{equation}
    \begin{aligned}
 \frac{L}{2} \norm[\bigg]{\frac{\eta}{m}\sum\limits_{i=1}^{m}\sum\limits_{\ell=0}^{H-1} \nabla f_i(x_{i,\ell})}^2 &= \frac{\eta^2 L}{2}\norm[\bigg]{\frac{1}{m}\sum\limits_{i=1}^{m}\sum\limits_{\ell=0}^{H-1} \left( \nabla f_i(x_{i,\ell})-\nabla f_i(\bar{x}_t) \right) + \frac{1}{m} \sum\limits_{i=1}^{m}\sum\limits_{\ell=0}^{H-1} \nabla f_i(\bar{x}_t)}^2\\
 &=\frac{\eta^2 L}{2}\norm[\bigg]{\frac{1}{m}\sum\limits_{i=1}^{m}\sum\limits_{\ell=0}^{H-1} \left( \nabla f_i(x_{i,\ell})-\nabla f_i(\bar{x}_t) \right) + H \nabla f(\bar{x}_t)}^2\\
& \overset{(a)} \leq \eta^2 L \norm[\bigg]{\frac{1}{m}\sum\limits_{i=1}^{m}\sum\limits_{\ell=0}^{H-1} \left( \nabla f_i(x_{i,\ell})-\nabla f_i(\bar{x}_t) \right)}^2 + \eta^2 H^2 L {\Vert \nabla f(\bar{x}_t) \Vert}^2 \\
& \overset{(b)} \leq \frac{\eta^2 L}{m}  \sum\limits_{i=1}^{m} \norm[\bigg]{ \sum\limits_{\ell=0}^{H-1} \left( \nabla f_i(x_{i,\ell})-\nabla f_i(\bar{x}_t) \right)}^2 + \eta^2 H^2 L {\Vert \nabla f(\bar{x}_t) \Vert}^2 \\
& \overset{(c)} \leq \frac{\eta^2 L H}{m}  \sum\limits_{i=1}^{m}  \sum\limits_{\ell=0}^{H-1} {\Vert  \nabla f_i(x_{i,\ell})-\nabla f_i(\bar{x}_t) \Vert }^2 + \eta^2 H^2 L {\Vert \nabla f(\bar{x}_t) \Vert}^2 \\
& \overset{(d)} \leq \frac{\eta^2 L^3 H}{m}  \sum\limits_{i=1}^{m}  \sum\limits_{\ell=0}^{H-1} {\Vert  x_{i,\ell}-\bar{x}_t \Vert }^2 + \eta^2 H^2 L {\Vert \nabla f(\bar{x}_t) \Vert}^2.
    \end{aligned}
\label{eqn:T2boundthm1}
\end{equation}
In the above steps, (a) follows from \eqref{eqn:rel_triangle} with $\gamma=1$, (b) and (c) both follow from \eqref{eqn:Jensens}, and (d) is a consequence of the $L$-smoothness of $f_i(\cdot)$. Combining the bounds in equations \eqref{eqn:T1_thm1} and \eqref{eqn:T2boundthm1} immediately leads to the claim of the lemma. 
\end{proof}

To simplify the bound in \eqref{eqn:f_bound_thm1}, it is apparent that we need to estimate how much the client iterates $x_{i,\ell}$ drift off from their value $\bar{x}_t$ at the beginning of the communication round, i.e., we want to derive a bound on the quantity ${\Vert x_{i,\ell} - \bar{x}_t \Vert}$. To that end, we will make use of the following lemma. 
\begin{lemma}
Suppose $f(x)$ is $L$-smooth and convex. Then, for any $\eta\in(0,1)$ satisfying $\eta \leq \frac{1}{L}$, and any two points $x,y\in\mathbb{R}^{d}$, we have
\begin{equation}
    \Vert y-x-\eta(\nabla f(y) - \nabla f(x)) \Vert \leq \Vert y-x \Vert.
\label{eqn:two_pt_conv}
\end{equation}
If $f(x)$ is $\mu$-strongly convex, then the above inequality is strict, i.e., $\exists \lambda \in (0,1)$, such that
\begin{equation}
    \Vert y-x-\eta(\nabla f(y) - \nabla f(x)) \Vert \leq \lambda\Vert y-x \Vert. 
\label{eqn:two_pt_strconv}
\end{equation}
\label{lemma:two_pt_bnd}
\end{lemma}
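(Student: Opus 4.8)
The plan is to prove both inequalities by squaring the left-hand side and reducing each to a scalar inequality in $\eta$ via the co-coercivity estimate. Introduce the shorthand $u := y-x$ and $g := \nabla f(y) - \nabla f(x)$, so that \eqref{eqn:two_pt_conv} reads $\Vert u - \eta g \Vert \leq \Vert u \Vert$. Expanding the square gives $\Vert u - \eta g \Vert^2 = \Vert u \Vert^2 - 2\eta\langle u, g\rangle + \eta^2\Vert g \Vert^2$, so the target inequality is equivalent to $\eta^2\Vert g\Vert^2 \leq 2\eta\langle u,g\rangle$, and, dividing by $\eta>0$, to $\eta\Vert g\Vert^2 \leq 2\langle u,g\rangle$.

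For the first (merely convex) part, I would invoke the co-coercivity inequality \eqref{eqn:smooth_conv}, which requires only convexity and $L$-smoothness and gives $\langle u, g\rangle \geq \tfrac{1}{L}\Vert g\Vert^2$. Hence $2\langle u,g\rangle \geq \tfrac{2}{L}\Vert g\Vert^2 \geq \eta\Vert g\Vert^2$, where the last step uses $\eta \leq \tfrac{1}{L} < \tfrac{2}{L}$. This closes the reduced inequality and yields \eqref{eqn:two_pt_conv}.

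For the strongly convex part, I would keep both available lower bounds on $\langle u,g\rangle$ in play: co-coercivity \eqref{eqn:smooth_conv} and the strong-convexity estimate \eqref{eqn:strconv2}, i.e. $\langle u,g\rangle \geq \mu\Vert u\Vert^2$. The key move is to split the cross term $2\eta\langle u,g\rangle = \eta\langle u,g\rangle + \eta\langle u,g\rangle$ and apply one bound to each half, obtaining $2\eta\langle u,g\rangle \geq \eta\mu\Vert u\Vert^2 + \tfrac{\eta}{L}\Vert g\Vert^2$. Substituting into the expansion gives
$$\Vert u - \eta g\Vert^2 \leq (1 - \eta\mu)\Vert u\Vert^2 + \eta\left(\eta - \tfrac{1}{L}\right)\Vert g\Vert^2 \leq (1 - \eta\mu)\Vert u\Vert^2,$$
where the final step discards the nonpositive last term since $\eta \leq \tfrac{1}{L}$. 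Setting $\lambda = \sqrt{1 - \eta\mu}$ then yields \eqref{eqn:two_pt_strconv}.

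There is no serious obstacle here; the only delicate point I would flag is confirming $\lambda \in (0,1)$ rather than merely $[0,1)$. Since $f$ is simultaneously $\mu$-strongly convex and $L$-smooth we have $\mu \leq L$, so $\eta\mu \leq \mu/L \leq 1$, giving $1 - \eta\mu \geq 0$; and $\eta\mu > 0$ gives $1 - \eta\mu < 1$. The boundary case $1-\eta\mu = 0$ (forced only when $\mu = L$ and $\eta = \tfrac{1}{L}$) makes the left-hand side identically zero, so any $\lambda \in (0,1)$ serves. The crux of the whole argument is thus the decision to split the inner-product term so that strong convexity supplies the contraction factor $1-\eta\mu$ while co-coercivity absorbs the $\eta^2\Vert g\Vert^2$ overshoot.
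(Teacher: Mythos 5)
Your proof is correct and takes essentially the same route as the paper's: expand the square, control the terms via co-coercivity \eqref{eqn:smooth_conv} together with convexity/strong convexity \eqref{eqn:strconv2} under $\eta L \leq 1$, and conclude with $\lambda = \sqrt{1-\eta\mu}$. The only cosmetic difference is bookkeeping: the paper applies co-coercivity to absorb $\eta^2{\Vert \nabla f(y)-\nabla f(x)\Vert}^2$ into the cross term, yielding the (slightly tighter) intermediate factor $1-\eta\mu(2-\eta L)$ before relaxing it to $1-\eta\mu$, whereas you split the cross term in half and apply one bound to each piece, landing on $1-\eta\mu$ directly.
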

\begin{proof}
Given any two points $x,y\in\mathbb{R}^{d}$, we have
\begin{equation}
    \begin{aligned}
{\Vert y-x-\eta(\nabla f(y) - \nabla f(x)) \Vert}^2 &= {\Vert y-x \Vert}^2 - 2\eta \langle y-x, \nabla f(y) - \nabla f(x) \rangle + \eta^2 {\Vert \nabla f(y) - \nabla f(x) \Vert}^2\\
& \leq {\Vert y-x \Vert}^2 - \eta (2-\eta L) \langle y-x, \nabla f(y) - \nabla f(x) \rangle\\
& \leq {\Vert y-x \Vert}^2,
    \end{aligned}
\end{equation}
where the first inequality follows from \eqref{eqn:smooth_conv}, and the second from the fact that $\eta L \leq 1$, and  $\langle y-x, \nabla f(y) - \nabla f(x) \rangle \geq 0$; the latter is a consequence of the convexity of $f(\cdot)$. This establishes \eqref{eqn:two_pt_conv} for the convex, smooth case. When $f(\cdot)$ is $\mu$-strongly convex, we can further use \eqref{eqn:strconv2} to obtain \begin{equation}
{\Vert y-x-\eta(\nabla f(y) - \nabla f(x)) \Vert}^2 \leq \left(1-\eta \mu (2-\eta L) \right)  {\Vert y-x \Vert}^2.
\end{equation}
Noting that $\eta L \leq 1$ establishes \eqref{eqn:two_pt_strconv} with $\lambda=\sqrt{1-\eta \mu}$.
\end{proof}

With the above lemma in hand, we will now show that the drift $\Vert x_{i,\ell} - \bar{x}_t \Vert$ can be bounded in terms of $\Vert \nabla f(\bar{x}_t) \Vert$ - a measure of the sub-optimality at the beginning of the communication round.  
\begin{lemma}
Suppose each $f_i(x)$ is $L$-smooth and $\mu$-strongly convex. Moreover, suppose $\tau_i=H,\forall i\in\mathcal{S}$, $\delta_c=\delta_s=1$, and $\eta \leq \frac{1}{L}$. Then,  \texttt{FedLin} guarantees the following bound for each $i\in\mathcal{S}$, and $\forall \ell\in\{0,\ldots,H-1\}$: 
\begin{equation}
    \Vert{x_{i,\ell} - \bar{x}_t \Vert} \leq \eta H \Vert \nabla f(\bar{x}_t) \Vert.
\end{equation}
\label{lemma:drift_conv}
\end{lemma}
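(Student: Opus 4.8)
The plan is to establish the drift bound by a one-step recursion and unroll it, the crux being the non-expansiveness of a single gradient-descent step about a fixed reference point, which is exactly the content of Lemma \ref{lemma:two_pt_bnd}.

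First I would rewrite the local update rule \eqref{eqn:local_update} to track the drift directly. Subtracting $\bar{x}_t$ from both sides gives
\begin{equation}
x_{i,\ell+1} - \bar{x}_t = \big(x_{i,\ell} - \bar{x}_t\big) - \eta\big(\nabla f_i(x_{i,\ell}) - \nabla f_i(\bar{x}_t)\big) - \eta \nabla f(\bar{x}_t).
\end{equation}
The key structural observation is that the ``tracking'' term $\eta\nabla f(\bar{x}_t)$ is constant across all local steps within round $t$, while the remaining difference is precisely of the form treated in Lemma \ref{lemma:two_pt_bnd}.

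Next I would apply the triangle inequality to peel off the constant term $\eta\nabla f(\bar{x}_t)$, and then invoke \eqref{eqn:two_pt_conv} with $f=f_i$, $y=x_{i,\ell}$, and $x=\bar{x}_t$. Since each $f_i$ is $\mu$-strongly convex it is in particular convex, and by hypothesis $\eta\leq 1/L$, so Lemma \ref{lemma:two_pt_bnd} applies and yields $\big\|x_{i,\ell} - \bar{x}_t - \eta(\nabla f_i(x_{i,\ell}) - \nabla f_i(\bar{x}_t))\big\| \leq \|x_{i,\ell} - \bar{x}_t\|$. Combining this with the triangle inequality produces the recursion
\begin{equation}
\|x_{i,\ell+1} - \bar{x}_t\| \leq \|x_{i,\ell} - \bar{x}_t\| + \eta \|\nabla f(\bar{x}_t)\|.
\end{equation}

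Finally I would unroll this recursion from the base case $x_{i,0}=\bar{x}_t$, i.e.\ $\|x_{i,0}-\bar{x}_t\|=0$, to obtain $\|x_{i,\ell}-\bar{x}_t\| \leq \ell\,\eta\,\|\nabla f(\bar{x}_t)\| \leq \eta H \|\nabla f(\bar{x}_t)\|$ for all $\ell\in\{0,\ldots,H-1\}$, which is the claim. I do not anticipate any genuine obstacle here: the entire difficulty is absorbed into Lemma \ref{lemma:two_pt_bnd}, which is why that non-expansiveness estimate was isolated and proved first. I would note in passing that the strong-convexity refinement \eqref{eqn:two_pt_strconv} could upgrade the non-expansive step to a strict contraction with factor $\lambda\in(0,1)$, replacing the arithmetic sum by a geometric one; but the crude telescoping above already delivers the $\eta H$ bound, which is exactly what Lemma \ref{lemma:f_boundthm1} consumes downstream, so the sharper version is unnecessary here.
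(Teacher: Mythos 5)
Your proposal is correct and takes essentially the same approach as the paper: subtract $\bar{x}_t$, use the triangle inequality to peel off the constant term $\eta\nabla f(\bar{x}_t)$, invoke Lemma \ref{lemma:two_pt_bnd} for the non-expansive step, and unroll from $x_{i,0}=\bar{x}_t$. The only (immaterial) difference is that the paper first applies the strong-convexity contraction \eqref{eqn:two_pt_strconv} with $\lambda=\sqrt{1-\eta\mu}$ and then relaxes to $\lambda\leq 1$ in the unrolling---precisely the convex bound \eqref{eqn:two_pt_conv} you use directly, as the paper's own footnote acknowledges.
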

\begin{proof}
Fix any client $i\in\mathcal{S}$. From \eqref{eqn:local_update}, we have
\begin{equation}
    \begin{aligned}
\Vert x_{i,\ell+1} - \bar{x}_t \Vert & = \Vert x_{i,\ell} - \bar{x}_t - \eta \left(\nabla f_i(x_{i,\ell})-\nabla f_i(\bar{x}_t)\right) -\eta \nabla f(\bar{x}_t) \Vert\\
& \leq \Vert x_{i,\ell} - \bar{x}_t - \eta \left(\nabla f_i(x_{i,\ell})-\nabla f_i(\bar{x}_t)\right) \Vert + \eta \Vert \nabla f(\bar{x}_t) \Vert \\
& \leq \lambda \Vert x_{i,\ell} - \bar{x}_t \Vert + \eta \Vert \nabla f(\bar{x}_t) \Vert,
    \end{aligned}
\label{eqn:driftbnd1}
\end{equation}
where $\lambda=\sqrt{1-\eta \mu} < 1$. The last inequality follows from Lemma \ref{lemma:two_pt_bnd} with $x=x_{i,\ell}$ and $y=\bar{x}_t$. Rolling out the final inequality in \eqref{eqn:driftbnd1} from $\ell=0$ yields:
\begin{equation}
  \begin{aligned}
\Vert x_{i,\ell} - \bar{x}_t \Vert & \leq \lambda^{\ell} \Vert x_{i,0} - \bar{x}_t \Vert + \left(\sum\limits_{j=0}^{\ell-1}\lambda^j \right) \eta \Vert \nabla f(\bar{x}_t) \Vert \\
& \overset{(a)}= \left(\sum\limits_{j=0}^{\ell-1}\lambda^j \right) \eta \Vert \nabla f(\bar{x}_t) \Vert \\
& \overset{(b)}\leq \eta \ell \Vert \nabla f(\bar{x}_t) \Vert\\
& \overset{(c)}\leq \eta H \Vert \nabla f(\bar{x}_t) \Vert.
    \end{aligned}  
\end{equation}
For (a), we used the fact that $x_{i,0}=\bar{x}_t, \forall i\in\mathcal{S}$; for (b), we used the fact that $\lambda <1$\footnote{Note that actually, in step (b), we used the looser bound for the convex setting, namely $\lambda \leq 1$. It seems unlikely that the tighter bound $\lambda <1$ will buy us anything other than a  potential improvement upon  the constant $\frac{1}{6}$ appearing in the exponent in equation \eqref{eqn:speed_acc}.}; and for (c), we note that $\ell\leq H-1$. 
\end{proof}

We are now equipped with all the ingredients needed to complete the proof of Theorem \ref{thm:speed_acc}. 

\textbf{Completing the proof of Theorem \ref{thm:speed_acc}}: Combining the bounds in Lemma's \ref{lemma:f_boundthm1} and \ref{lemma:drift_conv}, we obtain
\begin{equation}
\begin{aligned}
    f(\bar{x}_{t+1})-f(\bar{x}_t) &\leq -\eta H\left(1-\eta LH\right){\Vert \nabla f(\bar{x}_t) \Vert}^2 + \left(\frac{\eta L}{m} \sum\limits_{i=1}^{m}\sum\limits_{\ell=0}^{H-1} \eta H \Vert \nabla f(\bar{x}_t) \Vert \right) \Vert \nabla f(\bar{x}_t) \Vert\\ 
    & \hspace{5mm} + \frac{\eta^2 L^3 H}{m} \sum\limits_{i=1}^{m}\sum\limits_{\ell=0}^{H-1}{\left(  \eta H \Vert \nabla f(\bar{x}_t) \Vert \right)}^2\\
& = -\eta H\left(1-\eta LH\right){\Vert \nabla f(\bar{x}_t) \Vert}^2 + \eta^2 L H^2 {\Vert \nabla f(\bar{x}_t) \Vert}^2 + \eta^4 L^3 H^4 {\Vert \nabla f(\bar{x}_t) \Vert}^2 \\
& \leq -\eta H\left(1-3\eta LH\right){\Vert \nabla f(\bar{x}_t) \Vert}^2,
\end{aligned}
\label{eqn:thm1_final}
\end{equation}
where in the last step, we used the fact that the step-size $\eta$ satisfies $\eta L H \leq 1$. From  \eqref{eqn:thm1_final} and \eqref{eqn:grad_low}, we obtain
\begin{equation}
    f(\bar{x}_{t+1})-f(x^*) \leq \left(1-2\eta \mu H\left(1-3\eta L H\right) \right) \left( f(\bar{x}_t) - f(x^*) \right).
\end{equation}
With $\eta=\frac{1}{6LH}$, the above inequality takes the form
\begin{equation}
    f(\bar{x}_{t+1})-f(x^*) \leq \left(1-\frac{1}{6\kappa} \right) \left( f(\bar{x}_t) - f(x^*) \right), \textrm{where} \hspace{1mm} \kappa=\frac{L}{\mu}.
\end{equation}
Using the above inequality recursively leads to the claim of the theorem. 
\newpage
\section{Proof of Theorem \ref{thm:lowerbnd}: Lower bound for \texttt{FedLin}}
\label{app:lowerbnd}
To prove Theorem \ref{thm:lowerbnd},   we will construct an example involving two clients. Let us start by defining the objective functions of the clients as follows.

\begin{equation}
    f_1(x)=\frac{1}{2}x'\underbrace{\begin{bmatrix}1&0\\0&1\end{bmatrix}}_{\mathbf{A}_1}x + b'x; \hspace{3mm} f_2(x)=\frac{1}{2}x'\underbrace{\begin{bmatrix}L&0\\0&1\end{bmatrix}}_{\mathbf{A}_2}x - b'x,
\nonumber
\end{equation}
where $b\in\mathbb{R}^2$ is any arbitrary vector, and $L \geq 14$. Here, we have used the notation $x'$ to indicate the transpose of a column vector $x$. From inspection, it is clear that each of the client objective functions is $1$-strongly convex. Also, $f_1(x)$ and $f_2(x)$ are both $L$-smooth.\footnote{Strictly speaking, $f_1(x)$ is $1$-smooth, but since $L > 1$, it is also $L$-smooth.} The global loss function is then given by
   $$ f(x)=\frac{1}{2} (f_1(x)+f_2(x)) = \frac{1}{2}x'\begin{bmatrix}\frac{L+1}{2}&0\\0&1\end{bmatrix}x. $$

It is easy to see that the minimum of $f(x)$ is $x^*={\begin{bmatrix} 0&0\end{bmatrix}}'$. Let us fix a communication round $t\in\{1,\ldots,T\}$. Now given that $\delta_c=\delta_s=1$, and $\eta_i=\eta, i\in\{1,2\}$, the local update rule for each of the clients takes the following form:
\begin{equation}
    x_{i,\ell+1} = x_{i,\ell}-\eta(\nabla f_i(x_{i,\ell})-\nabla f_i(\bar{x}_{t})+\nabla f(\bar{x}_t)).
\nonumber
\end{equation}

Simple calculations then lead to the following recursions:
\begin{equation}
    \begin{aligned}
        x_{1,\ell+1}&=\left(\mathbf{I}-\eta \mathbf{A}_1\right)x_{1,\ell}-\eta \begin{bmatrix} \frac{L-1}{2} & 0 \\
        0 & 0 \end{bmatrix} \bar{x}_t,\\
        x_{2,\ell+1}&=\left(\mathbf{I}-\eta \mathbf{A}_2\right)x_{2,\ell}-\eta \begin{bmatrix} \frac{1-L}{2} & 0 \\
        0 & 0 \end{bmatrix} \bar{x}_t.\\
    \end{aligned}
\nonumber
\end{equation}
Given the diagonal structure of $\mathbf{A}_1$ and $\mathbf{A}_2$, we can easily roll out the above recursions. 
Accordingly, for client 1, we obtain
\begin{equation}
    \begin{aligned}
        x_{1,H} &= {\left(\mathbf{I}-\eta \mathbf{A}_1\right)}^H \bar{x}_t - \eta \left(\sum_{j=0}^{H-1} {\left(\mathbf{I}-\eta \mathbf{A}_1\right)}^j \right) \begin{bmatrix} \frac{L-1}{2} & 0 \\
        0 & 0 \end{bmatrix} \bar{x}_t\\
        &=\begin{bmatrix} {(1-\eta)}^H \left(\frac{L+1}{2}\right)- \frac{L-1}{2} & 0\\ \\ 
        0 & {(1-\eta)}^H \end{bmatrix}\bar{x}_t.
    \end{aligned}
\nonumber
\end{equation}
Similarly, for client 2, we have
\begin{equation}
    \begin{aligned}
        x_{2,H} &= {\left(\mathbf{I}-\eta \mathbf{A}_2\right)}^H \bar{x}_t - \eta \left(\sum_{j=0}^{H-1} {\left(\mathbf{I}-\eta \mathbf{A}_2\right)}^j \right) \begin{bmatrix} \frac{1-L}{2} & 0 \\
        0 & 0 \end{bmatrix} \bar{x}_t\\
        &=\begin{bmatrix} {(1-\eta L)}^H \left(\frac{L+1}{2L}\right)- \frac{1-L}{2L} & 0\\ \\
        0 & {(1-\eta)}^H \end{bmatrix} \bar{x}_t.
    \end{aligned}
\nonumber
\end{equation}
Thus,
\begin{equation}
    \bar{x}_{t+1}=\frac{1}{2} (x_{1,H}+x_{2,H})= \underbrace{\begin{bmatrix}\left( {(1-\eta)}^H + \frac{{(1-\eta L)}^H}{L}\right) \left(\frac{L+1}{4}\right)- \frac{{(L-1)}^2}{4L} & 0\\ \\
        0 & {(1-\eta)}^H \end{bmatrix}}_{\mathbf{M}} \bar{x}_t.
\label{eqn:recursion}
\end{equation}
In the rest of the proof, we will argue that for $\bar{x}_t$ to converge to $x^*$ based on the above recursion, $\eta$ must be chosen inversely proportional to $H$; the lower bound will then naturally follow. Let us start by noting that $\bar{x}_{t+1}=\mathbf{M}\bar{x}_{t}$ can be viewed as a discrete-time linear time-invariant (LTI) system where $\mathbf{M}$ is the state transition matrix. Since $x^*={\begin{bmatrix} 0&0\end{bmatrix}}'$, guaranteeing $\bar{x}_t$ converges to $x^*$ regardless of the initial condition $\bar{x}_1$ is equivalent to arguing that $\mathbf{M}$ is a Schur stable matrix, i.e., all the eigenvalues of $\mathbf{M}$ lie strictly inside the unit circle. It is easy to see that the eigenvalues $\lambda_1(\eta,H,L)$ and $\lambda_2(\eta,H)$ of $\mathbf{M}$ are
$$ \lambda_1(\eta,H,L) = \left( {(1-\eta)}^H + \frac{{(1-\eta L)}^H}{L}\right) \left(\frac{L+1}{4}\right)- \frac{{(L-1)}^2}{4L}; \hspace{2mm} \lambda_2(\eta,H) =  {(1-\eta)}^H.$$
In order for $\mathbf{M}$ to be Schur stable, $\lambda_1(\eta,H,L) > -1$ is a necessary condition. We will now show that to satisfy this necessary condition, $\eta$ must scale inversely with $H$. Observe that
\begin{equation}
    \begin{aligned}
      \lambda_1(\eta,H,L) > -1 & \implies \left( {(1-\eta)}^H + \frac{{(1-\eta L)}^H}{L}\right) \left(\frac{L+1}{4}\right)- \frac{{(L-1)}^2}{4L} > -1\\
      & \implies {(1-\eta)}^H \left(\frac{{(L+1)}^2}{4L}\right)- \frac{{(L-1)}^2}{4L} > -1 \\
      & \implies {(1-\eta)}^H > \frac{L^2-6L+1}{{(L+1)}^2}\\
      &  \implies {(1-\eta)}^H > \frac{1}{2}\\
     & \implies \frac{1}{1+\eta H} > \frac{1}{2}\\
     & \implies \eta < \frac{1}{H}. 
    \end{aligned}
\end{equation}
For the second implication, we used the fact that $L >1$. For the fourth implication, we note that 
$$g(z)=\frac{z^2-6z+1}{{(z+1)}^2}$$
is a monotonically increasing function of its argument for all $z>1$. The claim then follows by noting that $g(14) > \frac{1}{2}$, and $L \geq 14$. For the second-last implication, we used the fact that for any $z\in(0,1)$, and any positive integer $r$, the following is true
$${(1-z)}^r \leq \frac{1}{1+rz}.$$ 
We conclude that $\eta < \frac{1}{H}$ is a necessary condition for $\mathbf{M}$ to be Schur stable.\footnote{Notice that when $H=1$, this necessary condition translates to the trivial condition $\eta <1$.} Now let us look at the implication of this necessary condition on the eigenvalue $\lambda_2(\eta,H)={(1-\eta)}^H$. Since $\lambda_2(\eta,H)$ is  monotonically decreasing in $\eta$, the following holds for any $\eta$ that stabilizes $\mathbf{M}$:\footnote{That a stabilizing $\eta$ exists follows from Theorem \ref{thm:speed_acc}. In particular, $\eta=\frac{1}{6LH}$ will render $\mathbf{M}$ Schur stable. However, our main goal is to show that even if there do exist values of $\eta$ larger than $\frac{1}{6LH}$ that guarantee convergence of $\bar{x}_t$ to $x^*$, such step-size values must obey $\eta < \frac{1}{H}$, which in turn leads to the $H$-independent lower bound in equation \eqref{eqn:lbnd}.}
$$ \lambda_2(\eta,H) > \lambda_2(\eta_c,H) = {\left(1-\frac{1}{H}\right)}^H \geq {\left(1-\frac{1}{2}\right)}^2 > \exp(-2), \forall H \geq 2,$$
where $\eta_c=\frac{1}{H}$. Here, we have used the fact that ${\left(1-\frac{1}{z}\right)}^z$ is a monotonically increasing function in $z$ for $z>1$, and that $H\geq 2$. Now suppose \texttt{FedLin} is initialized with $\bar{x}_1 = {\begin{bmatrix} 0 & \beta \end{bmatrix}}'$, where $\beta >0$. Based on \eqref{eqn:recursion}, for any $T\geq1$, we then have
$$ \bar{x}_{T+1}= \begin{bmatrix} 0\\ \lambda^T_2 \beta\end{bmatrix} \geq \exp(-2T) \begin{bmatrix} 0\\ \beta\end{bmatrix} = \exp(-2T)  \bar{x}_1 \implies {\Vert \bar{x}_{T+1}-x^* \Vert}^2 \geq \exp{(-4T)} {\Vert \bar{x}_{1}-x^* \Vert}^2.$$ 
Moreover, substituting the value of $\bar{x}_{T+1}$ above in the expression for $f(x)$, we obtain
$$
f(\bar{x}_{T+1})=\frac{1}{2}{\lambda}^{2T}_2 \beta^2 = {\lambda}^{2T}_2 f(\bar{x}_1) \geq \exp(-4T) f(\bar{x}_1) \implies f(\bar{x}_{T+1}) - f(x^*) \geq \exp(-4T) (f(\bar{x}_{1}) - f(x^*)),
$$
since $f(x^*)=0$. We have thus established the desired lower bounds. Finally, note that all our arguments above hold for any $L \geq 14$, any $H\geq 2$, and any $T \geq 1$. This concludes the proof. 
\newline
\newline
\begin{remark}
Note that the quantity $b$ in the objective functions of the clients does not feature anywhere in the above analysis. As such, one possible choice of $b$ is simply $b={\begin{bmatrix} 0&0\end{bmatrix}}'$. For this choice of $b$, it is clear from inspection that both $f_1(x)$ and $f_2(x)$ have the same minimum, namely $x^*={\begin{bmatrix} 0&0\end{bmatrix}}$. Our analysis thus reveals that even when the client local objectives share the same minimum, the lower bound in equation  \eqref{eqn:lbnd} continues to hold. 
\label{rem:lowerbnd}
\end{remark}

\begin{remark}
Let us look more closely at the role played by $L$ in the above example. From inspection, larger the value of $L$, the less smooth the overall function $f(x)$, and more the objective heterogeneity. Specifically,  increasing $L$ increases the heterogeneity between the client local objectives by increasing $\Vert \nabla f_1(x) - \nabla f_2 (x) \Vert$. In terms of the impact of $L$ on the dynamics of $\bar{x}_t$, we note that $T_2$ in the expression for $\lambda_1(\eta,H,L)$ below becomes larger as we increase $L$.
$$ \lambda_1(\eta,H,L) = \underbrace{\left( {(1-\eta)}^H + \frac{{(1-\eta L)}^H}{L}\right) \left(\frac{L+1}{4}\right)}_{T_1}- \underbrace{\frac{{(L-1)}^2}{4L}}_{T_2}$$
To keep $\lambda_1(\eta,H,L) > - 1$, and thereby ensure stability of the recursion \eqref{eqn:recursion}, we then need the coefficient of $(L+1)/4$ in $T_1$ above to be adequately large, despite potentially large values of $H$: this is precisely what necessitates $\eta$ to scale inversely with $H$. To summarize, in the above example, large $L$ leads to less smoothness, more objective heterogeneity, and small step-size. 
\end{remark}
\newpage
\section{Proofs pertaining to Systems Heterogeneity in Section \ref{sec:speed_acc}}
\label{app:proofs_CT}
As in Appendix \ref{app:proofthm_speed_acc}, we will focus on a fixed communication round $t\in\{1,\ldots,T\}$; all our subsequent arguments will apply identically to  each such round. Given that $\delta_c=\delta_s=1$, the local update rule that will be of relevance to us throughout this section is 
    \begin{equation}
    x_{i,\ell+1} = x_{i,\ell}-\eta_i(\nabla f_i(x_{i,\ell})-\nabla f_i(\bar{x}_{t})+\nabla f(\bar{x}_t)),
\label{eqn:local_update_ct}
\end{equation}
where we have dropped the superscript of $t$ on the local iterates $x_{i,\ell}$ to simplify notation. Based on the discussion in Section \ref{sec:algo}, recall that the client step-sizes are chosen as follows:
$$
\eta_i=\frac{\bar{\eta}}{\tau_i}, \forall i\in\mathcal{S},
$$
where $\bar{\eta}\in(0,1)$ is a flexible design parameter that we will specify based on the context.

\subsection{Proof of Theorem \ref{thm:strconv_dh}: Analysis for Strongly Convex loss functions}
The proof of Theorem \ref{thm:strconv_dh} largely mirrors that of Theorem \ref{thm:speed_acc}. We start with the following analogue of Lemma \ref{lemma:f_boundthm1}. \begin{lemma}
Suppose each $f_i(x)$ is $L$-smooth. Moreover, suppose $\tau_i \geq 1, \forall i\in\mathcal{S}$,  $\delta_c=\delta_s=1$, and $\eta_i=\frac{\bar{\eta}}{\tau_i}, \forall i \in \mathcal{S}$, where $\bar{\eta}\in (0,1)$. Then, \texttt{FedLin} guarantees:
\begin{equation}
\begin{aligned}
    f(\bar{x}_{t+1})-f(\bar{x}_t) &\leq -\bar{\eta} \left(1-\bar{\eta} L\right){\Vert \nabla f(\bar{x}_t) \Vert}^2 + \left(\frac{L}{m} \sum\limits_{i=1}^{m} \eta_i \sum\limits_{\ell=0}^{\tau_i-1}\Vert x_{i,\ell} - \bar{x}_t \Vert \right) \Vert \nabla f(\bar{x}_t) \Vert\\ 
    & \hspace{5mm} + \frac{\bar{\eta} L^3 }{m} \sum\limits_{i=1}^{m} \eta_i  \sum\limits_{\ell=0}^{\tau_i-1}{\Vert x_{i,\ell} - \bar{x}_t \Vert}^2.
\end{aligned}
\label{eqn:f_bound_ct}
\end{equation}
\label{lemma:f_bound_ct}
\end{lemma}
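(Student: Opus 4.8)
The plan is to mirror the proof of Lemma \ref{lemma:f_boundthm1}, but to carry the client-specific step-sizes $\eta_i$ through every step and repeatedly exploit the identity $\eta_i\tau_i=\bar{\eta}$, which holds for every client by the choice $\eta_i=\bar{\eta}/\tau_i$. First I would unroll the local recursion \eqref{eqn:local_update_ct} from $\ell=0$ to $\tau_i-1$, using $x_{i,0}=\bar{x}_t$, to obtain
\[x_{i,\tau_i}=\bar{x}_t-\eta_i\sum_{\ell=0}^{\tau_i-1}\nabla f_i(x_{i,\ell})-\bar{\eta}\left(\nabla f(\bar{x}_t)-\nabla f_i(\bar{x}_t)\right),\]
where the coefficient collapses to $\bar{\eta}$ precisely because $\eta_i\tau_i=\bar{\eta}$. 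Averaging over $i$ and noting that $\frac{1}{m}\sum_{i=1}^{m}\left(\nabla f(\bar{x}_t)-\nabla f_i(\bar{x}_t)\right)=0$ eliminates the tracking term, leaving the clean global update $\bar{x}_{t+1}=\bar{x}_t-\frac{1}{m}\sum_{i=1}^{m}\eta_i\sum_{\ell=0}^{\tau_i-1}\nabla f_i(x_{i,\ell})$.

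Next I would invoke the smoothness descent inequality \eqref{eqn:smooth2} for $f$ (itself $L$-smooth as an average of $L$-smooth functions) and substitute this expression for $\bar{x}_{t+1}-\bar{x}_t$, producing an inner-product term and a squared-norm term. For the inner-product term, I would split each $\nabla f_i(x_{i,\ell})$ as $\left(\nabla f_i(x_{i,\ell})-\nabla f_i(\bar{x}_t)\right)+\nabla f_i(\bar{x}_t)$. The second piece contributes exactly $-\bar{\eta}{\Vert\nabla f(\bar{x}_t)\Vert}^2$, again via $\eta_i\tau_i=\bar{\eta}$ together with $\frac{1}{m}\sum_i\nabla f_i=\nabla f$, while the first piece is controlled by Cauchy--Schwarz, the triangle inequality, and $L$-smoothness \eqref{eqn:smooth1}, yielding the mixed term $\left(\frac{L}{m}\sum_i\eta_i\sum_\ell\Vert x_{i,\ell}-\bar{x}_t\Vert\right)\Vert\nabla f(\bar{x}_t)\Vert$.

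For the squared-norm term I would apply the same split, then use \eqref{eqn:rel_triangle} with $\gamma=1$ to separate the drift part from the $\bar{\eta}\nabla f(\bar{x}_t)$ part (the latter producing $\bar{\eta}^2L{\Vert\nabla f(\bar{x}_t)\Vert}^2$), and then apply Jensen's inequality \eqref{eqn:Jensens} twice, once over the $m$ clients and once over the $\tau_i$ local steps of each client, followed by $L$-smoothness. The step I expect to be the real crux is combining the weights after the two Jensen applications: the outer step over clients leaves a factor $\eta_i^2$, and the inner step over the $\tau_i$ terms leaves a factor $\tau_i$, so that $\eta_i^2\tau_i=\eta_i\bar{\eta}$ is exactly what generates the $\frac{\bar{\eta}L^3}{m}\sum_i\eta_i\sum_\ell{\Vert x_{i,\ell}-\bar{x}_t\Vert}^2$ term in the statement. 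Finally I would merge the two ${\Vert\nabla f(\bar{x}_t)\Vert}^2$ contributions into $-\bar{\eta}\left(1-\bar{\eta}L\right){\Vert\nabla f(\bar{x}_t)\Vert}^2$, completing the bound. The only genuine difference from Lemma \ref{lemma:f_boundthm1} is that the inner sums now have client-dependent length $\tau_i$, so the Jensen factors and the $\eta_i\tau_i=\bar{\eta}$ cancellation must be tracked per client rather than factored out as the global constants $H$ and $\eta$; careful bookkeeping of these weights is the main thing to get right.
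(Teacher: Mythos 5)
Your proposal is correct and follows essentially the same route as the paper's proof: unrolling the local recursion so that $\eta_i\tau_i=\bar{\eta}$ collapses the tracking term, averaging to get the clean update for $\bar{x}_{t+1}$, applying the smoothness descent inequality, and then handling the inner-product term via Cauchy--Schwarz and the squared-norm term via the relaxed triangle inequality plus two applications of Jensen's inequality. In particular, the weight bookkeeping you flag as the crux, namely $\eta_i^2\tau_i=\eta_i\bar{\eta}$ after the two Jensen steps, is exactly step (d) in the paper's bound \eqref{eqn:T2bound_ct}, so no gap remains.
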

\begin{proof}
Using \eqref{eqn:local_update_ct} and the fact that $x_{i,0}=\bar{x}_t, \forall i\in\mathcal{S}$, we have:
\begin{equation}
\begin{aligned}
    x_{i,\tau_i}&=\bar{x}_t-\eta_i\sum\limits_{\ell=0}^{\tau_i-1}\nabla f_i(x_{i,\ell}) -\eta_i \tau_i (\nabla f(\bar{x}_t) - \nabla f_i(\bar{x}_t))\\
    &=\bar{x}_t-\eta_i\sum\limits_{\ell=0}^{\tau_i-1}\nabla f_i(x_{i,\ell}) - \bar{\eta} (\nabla f(\bar{x}_t) - \nabla f_i(\bar{x}_t)), \forall i\in\mathcal{S},
\end{aligned}
\end{equation}
where we used $\eta_i \tau_i =\bar{\eta}$ in the second step. Averaging the above iterates across clients, we obtain: 
\begin{equation}
\begin{aligned}
    \bar{x}_{t+1}=\frac{1}{m}\sum\limits_{i=1}^{m}x_{i,\tau_i}&=\bar{x}_t-\frac{1}{m} \sum\limits_{i=1}^{m} \eta_i \sum\limits_{\ell=0}^{\tau_i-1} \nabla f_i(x_{i,\ell}) -\frac{\bar{\eta}}{m} \sum\limits_{i=1}^{m} \left(\nabla f(\bar{x}_t) - \nabla f_i(\bar{x}_t)\right)\\  
    &=\bar{x}_t-\frac{1}{m}  \sum\limits_{i=1}^{m} \eta_i \sum\limits_{\ell=0}^{\tau_i-1} \nabla f_i(x_{i,\ell}),
\label{eqn:xbart_ct}
\end{aligned}
\end{equation}
where for the second equality, we used the fact that $\nabla f(y) = \frac{1}{m} \sum\limits_{i=1}^{m} \nabla f_i(y), \forall y\in\mathbb{R}^d$. 
Based on the $L$-smoothness of $f(\cdot)$ and \eqref{eqn:xbart_ct}, we then have
\begin{equation}
    \begin{aligned}
    f(\bar{x}_{t+1})-f(\bar{x}_{t}) & \leq \langle \bar{x}_{t+1} - \bar{x}_{t}, \nabla f(\bar{x}_t) \rangle + \frac{L}{2} {\Vert \bar{x}_{t+1}-\bar{x}_t \Vert}^2\\
    & = - \Big \langle \frac{1}{m} \sum\limits_{i=1}^{m} \eta_i \sum\limits_{\ell=0}^{\tau_i-1}  \nabla f_i(x_{i,\ell}), \nabla f(\bar{x}_{t}) \Big \rangle + \frac{L}{2} \norm[\bigg]{\frac{1}{m}\sum\limits_{i=1}^{m} \eta_i \sum\limits_{\ell=0}^{\tau_i-1} \nabla f_i(x_{i,\ell})}^2.
    \end{aligned}
\label{eqn:iterimbnd1_ct}
\end{equation}
Just as in the proof of Theorem \ref{thm:speed_acc}, we now proceed to separately bound each of the two terms above, starting with the first term, as follows. 
\begin{equation}
    \begin{aligned}
    - \Big \langle \frac{1}{m} \sum\limits_{i=1}^{m} \eta_i \sum\limits_{\ell=0}^{\tau_i-1}  \nabla f_i(x_{i,\ell}), \nabla f(\bar{x}_{t}) \Big \rangle &= - \Big \langle \frac{1}{m}\sum\limits_{i=1}^{m} \eta_i \sum\limits_{\ell=0}^{\tau_i-1} \left( \nabla f_i(x_{i,\ell})-\nabla f_i(\bar{x}_t) \right) + \frac{1}{m} \sum\limits_{i=1}^{m} \eta_i \sum\limits_{\ell=0}^{\tau_i-1} \nabla f_i(\bar{x}_t), \nabla f(\bar{x}_{t}) \Big \rangle \\
    &= - \Big \langle \frac{1}{m}\sum\limits_{i=1}^{m} \eta_i \sum\limits_{\ell=0}^{\tau_i-1} \left( \nabla f_i(x_{i,\ell})-\nabla f_i(\bar{x}_t) \right) + \frac{1}{m} \sum\limits_{i=1}^{m} \underbrace{\eta_i \tau_i}_{\bar{\eta}} \nabla f_i(\bar{x}_t), \nabla f(\bar{x}_{t}) \Big \rangle \\
    &= - \Big \langle \frac{1}{m}\sum\limits_{i=1}^{m} \eta_i \sum\limits_{\ell=0}^{\tau_i-1} \left( \nabla f_i(x_{i,\ell})-\nabla f_i(\bar{x}_t) \right), \nabla f(\bar{x}_{t}) \Big \rangle  - \bar{\eta} {\Vert \nabla f(\bar{x}_t) \Vert}^2 \\ 
  & \overset{(a)} \leq \frac{1}{m} \left(\norm[\bigg]{\sum\limits_{i=1}^{m} \eta_i \sum\limits_{\ell=0}^{\tau_i-1} \left( \nabla f_i(x_{i,\ell})-\nabla f_i(\bar{x}_t) \right)}\right) \Vert \nabla f(\bar{x}_t) \Vert - \bar{\eta} {\Vert \nabla f(\bar{x}_t) \Vert}^2\\
 & \overset{(b)} \leq \frac{1}{m} \left( {\sum\limits_{i=1}^{m} \eta_i \sum\limits_{\ell=0}^{\tau_i-1} \Vert \nabla f_i(x_{i,\ell})-\nabla f_i(\bar{x}_t)\Vert}\right) \Vert \nabla f(\bar{x}_t) \Vert - \bar{\eta} {\Vert \nabla f(\bar{x}_t) \Vert}^2\\
& \overset{(c)} \leq \frac{ L}{m} \left( {\sum\limits_{i=1}^{m} \eta_i \sum\limits_{\ell=0}^{\tau_i-1} \Vert x_{i,\ell}-\bar{x}_t\Vert}\right) \Vert \nabla f(\bar{x}_t) \Vert - \bar{\eta} {\Vert \nabla f(\bar{x}_t) \Vert}^2,
    \end{aligned}
\label{eqn:T1_thm_ct}
\end{equation}
where (a) follows from the Cauchy-Schwartz inequality, (b) follows from the triangle inequality, and (c) follows from the $L$-smoothness of each $f_i(\cdot)$. For the second term in \eqref{eqn:iterimbnd1_ct}, observe that 
\begin{equation}
    \begin{aligned}
 \frac{L}{2} \norm[\bigg]{\frac{1}{m}\sum\limits_{i=1}^{m} \eta_i \sum\limits_{\ell=0}^{\tau_i-1} \nabla f_i(x_{i,\ell})}^2 &= \frac{L}{2}\norm[\bigg]{\frac{1}{m}\sum\limits_{i=1}^{m} \eta_i \sum\limits_{\ell=0}^{\tau_i-1} \left( \nabla f_i(x_{i,\ell})-\nabla f_i(\bar{x}_t) \right) + \frac{1}{m} \sum\limits_{i=1}^{m} \eta_i \sum\limits_{\ell=0}^{\tau_i-1} \nabla f_i(\bar{x}_t)}^2\\
 &=\frac{ L}{2}\norm[\bigg]{\frac{1}{m}\sum\limits_{i=1}^{m} \eta_i \sum\limits_{\ell=0}^{\tau_i-1} \left( \nabla f_i(x_{i,\ell})-\nabla f_i(\bar{x}_t) \right) + \bar{\eta} \nabla f(\bar{x}_t)}^2\\
& \overset{(a)} \leq L \norm[\bigg]{\frac{1}{m}\sum\limits_{i=1}^{m}\eta_i \sum\limits_{\ell=0}^{\tau_i-1} \left( \nabla f_i(x_{i,\ell})-\nabla f_i(\bar{x}_t) \right)}^2 + {\bar{\eta}}^2 L {\Vert \nabla f(\bar{x}_t) \Vert}^2 \\
& \overset{(b)} \leq \frac{L}{m}  \sum\limits_{i=1}^{m} {\eta_i}^2 \norm[\bigg]{ \sum\limits_{\ell=0}^{\tau_i-1} \left( \nabla f_i(x_{i,\ell})-\nabla f_i(\bar{x}_t) \right)}^2 + {\bar{\eta}}^2 L {\Vert \nabla f(\bar{x}_t) \Vert}^2\\
& \overset{(c)} \leq \frac{L }{m}  \sum\limits_{i=1}^{m} {\eta_i}^2 \tau_i \sum\limits_{\ell=0}^{\tau_i-1} {\Vert  \nabla f_i(x_{i,\ell})-\nabla f_i(\bar{x}_t) \Vert }^2 + {\bar{\eta}}^2 L {\Vert \nabla f(\bar{x}_t) \Vert}^2 \\
& \overset{(d)} \leq \frac{\bar{\eta} L^3}{m}  \sum\limits_{i=1}^{m} \eta_i \sum\limits_{\ell=0}^{\tau_i-1} {\Vert  x_{i,\ell}-\bar{x}_t \Vert }^2 + {\bar{\eta}}^2 L {\Vert \nabla f(\bar{x}_t) \Vert}^2.
    \end{aligned}
\label{eqn:T2bound_ct}
g\end{equation}
For (a), we used \eqref{eqn:rel_triangle} with $\gamma=1$; for (b) and (c), we used Jensen's inequality \eqref{eqn:Jensens}; and for (d), we used $\eta_i \tau_i =\bar{\eta}$, and the $L$-smoothness of $f_i(\cdot)$. Plugging in the bounds \eqref{eqn:T1_thm_ct} and \eqref{eqn:T2bound_ct} in \eqref{eqn:iterimbnd1_ct}, and simplifying, we obtain \eqref{eqn:f_bound_ct}. 
\end{proof}

\begin{remark} Note that the proof of Lemma \ref{lemma:f_bound_ct} made no use of convexity. Thus, the same analysis will carry over when we will later  study the non-convex setting.
\end{remark} 

For both the strongly convex and convex settings, the following lemma provides a bound on the drift at client $i$. 

\begin{lemma}
Suppose each $f_i(x)$ is $L$-smooth and convex. Moreover, suppose $\tau_i \geq 1,\forall i\in\mathcal{S}$, $\delta_c=\delta_s=1$, and $\eta_i \leq \frac{1}{L}, \forall i \in \mathcal{S}$. Then,  \texttt{FedLin} guarantees the following bound for each $i\in\mathcal{S}$, and $\forall  \ell\in\{0,\ldots,\tau_i-1\}$: 
\begin{equation}
    \Vert{x_{i,\ell} - \bar{x}_t \Vert} \leq \eta_i \tau_i \Vert \nabla f(\bar{x}_t) \Vert.
\end{equation}
\label{lemma:drift_conv_ct}
\end{lemma}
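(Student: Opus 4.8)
The plan is to mirror the proof of Lemma \ref{lemma:drift_conv} from the equal-local-step setting, adapting it to accommodate client-specific step-sizes $\eta_i$ and differing local-step counts $\tau_i$, and to rely only on convexity rather than strong convexity. The key technical tool is Lemma \ref{lemma:two_pt_bnd}, whose convex variant \eqref{eqn:two_pt_conv} establishes that the map $z \mapsto z - \eta_i (\nabla f_i(z) - \nabla f_i(\bar{x}_t))$ is non-expansive whenever $\eta_i \leq 1/L$.

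First I would fix a client $i \in \mathcal{S}$ and use the local update rule \eqref{eqn:local_update_ct} to write the one-step drift recursion. Splitting the increment $x_{i,\ell+1} - \bar{x}_t$ into two pieces, namely the gradient-step term $x_{i,\ell} - \bar{x}_t - \eta_i(\nabla f_i(x_{i,\ell}) - \nabla f_i(\bar{x}_t))$ and the stale global-gradient term $-\eta_i \nabla f(\bar{x}_t)$, and then applying the triangle inequality followed by \eqref{eqn:two_pt_conv} (with $y = x_{i,\ell}$, $x = \bar{x}_t$, and noting $\eta_i \leq 1/L$ by hypothesis), yields the contraction-free recursion
\[
\Vert x_{i,\ell+1} - \bar{x}_t \Vert \leq \Vert x_{i,\ell} - \bar{x}_t \Vert + \eta_i \Vert \nabla f(\bar{x}_t) \Vert.
\]

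Next I would unroll this recursion starting from $\ell = 0$. Since every client is initialized at $x_{i,0} = \bar{x}_t$, the leading term vanishes, and after $\ell$ steps one accumulates exactly $\ell$ copies of $\eta_i \Vert \nabla f(\bar{x}_t) \Vert$, giving $\Vert x_{i,\ell} - \bar{x}_t \Vert \leq \ell \, \eta_i \Vert \nabla f(\bar{x}_t) \Vert$. Because $\ell \leq \tau_i - 1 < \tau_i$, the bound $\ell \eta_i \leq \tau_i \eta_i$ delivers the claimed inequality. The only subtlety relative to Lemma \ref{lemma:drift_conv} is that here we invoke the convex version \eqref{eqn:two_pt_conv} of the non-expansiveness lemma, for which the contraction factor is $\lambda = 1$ rather than $\lambda = \sqrt{1-\eta\mu} < 1$; consequently the geometric sum $\sum_{j=0}^{\ell-1}\lambda^j$ that appeared in the strongly convex argument is simply replaced by the plain count $\ell$, and no strong convexity is needed. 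Everything else is a routine bookkeeping change from $(\eta, H)$ to $(\eta_i, \tau_i)$, so I do not anticipate any genuine obstacle; the proof is essentially a one-to-one transcription of the earlier drift lemma.
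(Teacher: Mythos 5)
Your proposal is correct and follows essentially the same route as the paper's own proof: triangle inequality to peel off the stale-gradient term, the convex non-expansiveness bound \eqref{eqn:two_pt_conv} from Lemma \ref{lemma:two_pt_bnd} applied with $y=x_{i,\ell}$, $x=\bar{x}_t$, and then unrolling the recursion from $x_{i,0}=\bar{x}_t$ with $\ell \leq \tau_i$. No gaps.
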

\begin{proof}
Based on \eqref{eqn:local_update_ct}, for any client $i\in\mathcal{S}$, we have
\begin{equation}
    \begin{aligned}
\Vert x_{i,\ell+1} - \bar{x}_t \Vert & = \Vert x_{i,\ell} - \bar{x}_t - \eta_i \left(\nabla f_i(x_{i,\ell})-\nabla f_i(\bar{x}_t)\right) -\eta_i  \nabla f(\bar{x}_t) \Vert\\
& \leq \Vert x_{i,\ell} - \bar{x}_t - \eta_i \left(\nabla f_i(x_{i,\ell})-\nabla f_i(\bar{x}_t)\right) \Vert + \eta_i \Vert \nabla f(\bar{x}_t) \Vert \\
& \leq \Vert x_{i,\ell} - \bar{x}_t \Vert + \eta_i \Vert \nabla f(\bar{x}_t) \Vert,
    \end{aligned}
\label{eqn:driftbnd1_ct}
\end{equation}
where in the last step, we used the bound \eqref{eqn:two_pt_conv} from Lemma \ref{lemma:two_pt_bnd} that applies to both the convex and strongly convex settings. From \eqref{eqn:driftbnd1_ct}, and the fact that $x_{i,0}=\bar{x}_t, \forall i\in \mathcal{S}$, we immediately obtain
\begin{equation}
    \begin{aligned}
    \Vert x_{i,\ell} - \bar{x}_t \Vert & \leq \Vert x_{i,0} - \bar{x}_t \Vert + \eta_i \ell \Vert \nabla f(\bar{x}_t) \Vert \\
& \leq \eta_i \tau_i \Vert \nabla f(\bar{x}_t) \Vert,
    \end{aligned}
\end{equation}
which is the desired conclusion. 
\end{proof}

\textbf{Completing the proof of Theorem \ref{thm:strconv_dh}:} To complete the proof of Theorem \ref{thm:strconv_dh}, let us substitute the bound on the drift term from Lemma \ref{lemma:drift_conv_ct} in equation \eqref{eqn:f_bound_ct}.  This yields:
\begin{equation}
\begin{aligned}
    f(\bar{x}_{t+1})-f(\bar{x}_t) &\leq -\bar{\eta} \left(1-\bar{\eta} L\right){\Vert \nabla f(\bar{x}_t) \Vert}^2 + \left(\frac{L}{m} \sum\limits_{i=1}^{m} \eta_i \sum\limits_{\ell=0}^{\tau_i-1}\eta_i \tau_i \Vert \nabla f(\bar{x}_t) \Vert \right) \Vert \nabla f(\bar{x}_t) \Vert\\ 
    & + \frac{\bar{\eta} L^3 }{m} \sum\limits_{i=1}^{m} \eta_i  \sum\limits_{\ell=0}^{\tau_i-1}{\left(\eta_i \tau_i \Vert \nabla f(\bar{x}_t) \Vert\right)}^2\\
     &= -\bar{\eta} \left(1-\bar{\eta} L\right){\Vert \nabla f(\bar{x}_t) \Vert}^2 + \frac{L}{m} \sum\limits_{i=1}^{m} {(\eta_i \tau_i)}^2 { \Vert \nabla f(\bar{x}_t) \Vert}^2  + \frac{\bar{\eta} L^3 }{m} \sum\limits_{i=1}^{m} {(\eta_i \tau_i)}^3 {\Vert \nabla f(\bar{x}_t) \Vert}^2 \\
& = -\bar{\eta} \left(1-\bar{\eta} L\right){\Vert \nabla f(\bar{x}_t) \Vert}^2 + {\bar{\eta}}^2 L {\Vert \nabla f(\bar{x}_t) \Vert}^2 + {\bar{\eta}}^4 L^3 {\Vert \nabla f(\bar{x}_t) \Vert}^2\\
& \leq -\bar{\eta} \left(1-3\bar{\eta} L\right){\Vert \nabla f(\bar{x}_t) \Vert}^2,
\end{aligned}
\label{eqn:finalbnd_strconv_ct}
\end{equation}
where in the third step, we used $\eta_i \tau_i =\bar{\eta}$, and for the last inequality, we set the flexible parameter $\bar{\eta}$ to satisfy $\bar{\eta} L \leq 1$. In particular, setting  $\bar{\eta}=\frac{1}{6L}$, and using the fact that $f(\cdot)$ is $\mu$-strongly convex, we immediately obtain
    \begin{equation}
    f(\bar{x}_{t+1})-f(x^*) \leq \left(1-\frac{1}{6\kappa} \right) \left( f(\bar{x}_t) - f(x^*) \right), \textrm{where} \hspace{1mm} \kappa=\frac{L}{\mu}.
\end{equation}
Finally, note that $\bar{\eta}=\frac{1}{6L}$ implies that the step-size for client $i$ is $\eta_i=\frac{1}{6L\tau_i}$, which satisfies the requirement $\eta_i L \leq 1$ for Lemma \ref{lemma:drift_conv_ct} to hold, and is precisely the choice of step-size in the statement of the theorem. This completes the proof of Theorem \ref{thm:strconv_dh}. 
\subsection{Proof of Theorem \ref{thm:conv_dh}: Analysis for Convex loss functions}
In order to prove Theorem \ref{thm:conv_dh}, we will use a slightly different approach than that in Theorem \ref{thm:strconv_dh}. Instead of focusing on the quantity $f(\bar{x}_{t+1})-f(\bar{x}_{t})$, we will be interested in bounding  the distance to the optimal point $x^*$ at the end of the $t$-th communication round, namely ${\Vert \bar{x}_{t+1} - x^* \Vert}^2$. The following lemma is the starting point of our analysis.   

\begin{lemma}
Suppose each $f_i(x)$ is $L$-smooth and convex. Moreover, suppose $\tau_i \geq 1, \forall i\in\mathcal{S}$,  $\delta_c=\delta_s=1$, and $\eta_i=\frac{\bar{\eta}}{\tau_i}, \forall i \in \mathcal{S}$, where $\bar{\eta}\in (0,1)$. Then, \texttt{FedLin} guarantees:
\begin{equation}
\begin{aligned}
{\Vert \bar{x}_{t+1}-x^* \Vert}^2  \leq {\Vert \bar{x}_{t}-x^* \Vert}^2 -2 \bar{\eta} \left(f(\bar{x}_{t})-f(x^*) \right) + \frac{L(1
+2\bar{\eta} L)}{m} \sum\limits_{i=1}^{m} \eta_i \sum\limits_{\ell=0}^{\tau_i-1}{\Vert x_{i,\ell} - \bar{x}_t \Vert}^2 +  2{\bar{\eta}}^2 {\Vert \nabla f(\bar{x}_t) \Vert}^2.
\end{aligned}
\label{eqn:iterate_subopt_ct}
\end{equation}
\label{lemma:iterate_subopt_ct}
\end{lemma}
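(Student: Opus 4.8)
The plan is to expand the squared distance around the averaged update. Starting from the identity $\bar{x}_{t+1}=\bar{x}_t-\frac{1}{m}\sum_{i=1}^m\eta_i\sum_{\ell=0}^{\tau_i-1}\nabla f_i(x_{i,\ell})$ established in \eqref{eqn:xbart_ct}, I would write $\Delta_t \triangleq \bar{x}_{t+1}-\bar{x}_t$ and use
\[
{\Vert \bar{x}_{t+1}-x^* \Vert}^2 = {\Vert \bar{x}_{t}-x^* \Vert}^2 + 2\langle \Delta_t, \bar{x}_t-x^* \rangle + {\Vert \Delta_t \Vert}^2 .
\]
The whole proof then reduces to upper-bounding the cross term $2\langle \Delta_t, \bar{x}_t-x^*\rangle$ so that it produces exactly $-2\bar{\eta}(f(\bar{x}_t)-f(x^*))$ plus a drift remainder, and upper-bounding ${\Vert \Delta_t \Vert}^2$ so that it produces the $2\bar{\eta}^2{\Vert \nabla f(\bar{x}_t)\Vert}^2$ term plus a drift remainder.

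For the cross term, the key device is to split $\bar{x}_t-x^* = (\bar{x}_t - x_{i,\ell}) + (x_{i,\ell}-x^*)$ inside each inner product $-\langle \nabla f_i(x_{i,\ell}), \bar{x}_t-x^*\rangle$. On the $(x_{i,\ell}-x^*)$ piece I would invoke convexity of $f_i$, namely $\langle \nabla f_i(x_{i,\ell}), x_{i,\ell}-x^*\rangle \geq f_i(x_{i,\ell})-f_i(x^*)$, and on the $(\bar{x}_t-x_{i,\ell})$ piece I would apply the descent lemma \eqref{eqn:smooth2} with $x=x_{i,\ell}$, $y=\bar{x}_t$, which yields $\langle \nabla f_i(x_{i,\ell}), x_{i,\ell}-\bar{x}_t\rangle \leq f_i(x_{i,\ell})-f_i(\bar{x}_t)+\tfrac{L}{2}{\Vert x_{i,\ell}-\bar{x}_t\Vert}^2$. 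Summing over $\ell$ and $i$ with the weights $\eta_i$, the $f_i(x_{i,\ell})$ contributions cancel between the two pieces, and the identity $\eta_i\tau_i=\bar{\eta}$ together with $f=\tfrac{1}{m}\sum_i f_i$ collapses the remaining function-value terms into $-2\bar{\eta}(f(\bar{x}_t)-f(x^*))$, leaving a drift term with coefficient $L$. For ${\Vert \Delta_t\Vert}^2$, I would write $\Delta_t=-\bar{\eta}\nabla f(\bar{x}_t)-r_t$ with $r_t=\tfrac{1}{m}\sum_i\eta_i\sum_\ell(\nabla f_i(x_{i,\ell})-\nabla f_i(\bar{x}_t))$ (again using $\eta_i\tau_i=\bar{\eta}$), apply \eqref{eqn:rel_triangle} with $\gamma=1$ to get ${\Vert \Delta_t\Vert}^2\leq 2\bar{\eta}^2{\Vert \nabla f(\bar{x}_t)\Vert}^2+2{\Vert r_t\Vert}^2$, and bound ${\Vert r_t\Vert}^2$ by twice applying Jensen's inequality \eqref{eqn:Jensens} and then smoothness \eqref{eqn:smooth1}, which gives a drift term of coefficient $2\bar{\eta}L^2$ after simplifying $\eta_i^2\tau_i=\eta_i\bar{\eta}$. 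Adding the two drift contributions produces the stated $L(1+2\bar{\eta}L)$ coefficient.

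The main obstacle is obtaining the tight constant $L$ (rather than a larger multiple) on the cross-term drift: this hinges on applying the descent lemma \eqref{eqn:smooth2} \emph{directly} to $\nabla f_i(x_{i,\ell})$ rather than first replacing it by $\nabla f_i(\bar{x}_t)$ and paying an extra smoothness penalty, and on verifying that the $f_i(x_{i,\ell})$ terms cancel exactly so that no spurious $\Vert\bar{x}_t-x^*\Vert$ factor survives. The remaining steps are routine bookkeeping with the substitutions $\eta_i\tau_i=\bar{\eta}$ and $\eta_i^2\tau_i=\eta_i\bar{\eta}$. Note that this lemma is only the per-round descent inequality; it would subsequently be combined with the drift bound of Lemma \ref{lemma:drift_conv_ct} and telescoped (using different choices of $\bar{\eta}$) to derive the two convergence guarantees \eqref{eqn:conv_ct_1} and \eqref{eqn:conv_ct_2} of Theorem \ref{thm:conv_dh}.
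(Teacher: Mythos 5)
Your proposal is correct and follows essentially the same route as the paper's proof: the same expansion of ${\Vert \bar{x}_{t+1}-x^*\Vert}^2$ around \eqref{eqn:xbart_ct}, the same split of the cross term into a convexity piece on $x_{i,\ell}-x^*$ and a descent-lemma piece on $\bar{x}_t-x_{i,\ell}$ with exact cancellation of the $f_i(x_{i,\ell})$ terms, and the same bound on ${\Vert \Delta_t\Vert}^2$ (the paper simply reuses the estimate \eqref{eqn:T2bound_ct} from Lemma \ref{lemma:f_bound_ct} rather than re-deriving it inline as you do). All constants, including the $L(1+2\bar{\eta}L)$ drift coefficient, come out identically.
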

\begin{proof}
From \eqref{eqn:xbart_ct}, recall that
\begin{equation}
   \bar{x}_{t+1} =  \bar{x}_t-\frac{1}{m}  \sum\limits_{i=1}^{m} \eta_i \sum\limits_{\ell=0}^{\tau_i-1} \nabla f_i(x_{i,\ell}).
\end{equation}
Thus, we have
\begin{equation}
    {\Vert \bar{x}_{t+1}-x^* \Vert}^2 = {\Vert \bar{x}_{t}-x^* \Vert}^2 - \frac{2}{m} \langle \bar{x}_t - x^*, \sum\limits_{i=1}^{m} \eta_i \sum\limits_{\ell=0}^{\tau_i-1} \nabla f_i (x_{i,\ell}) \rangle + \norm[\bigg]{\frac{1}{m}\sum\limits_{i=1}^{m} \eta_i \sum\limits_{\ell=0}^{\tau_i-1} \nabla f_i (x_{i,\ell})}^2. 
\label{eqn:iteratebound1_ct}
\end{equation}
To proceed, let us first bound the cross-term as follows.
\begin{equation}
    \begin{aligned}
    - \frac{2}{m} \langle \bar{x}_t - x^*, \sum\limits_{i=1}^{m} \eta_i \sum\limits_{\ell=0}^{\tau_i-1} \nabla f_i (x_{i,\ell}) \rangle &= - \frac{2}{m} \sum\limits_{i=1}^{m} \eta_i \sum\limits_{\ell=0}^{\tau_i-1} \bigg(\langle \bar{x}_t - x_{i,\ell},   \nabla f_i (x_{i,\ell}) \rangle + \langle {x}_{i,\ell} - x^*,   \nabla f_i (x_{i,\ell}) \rangle \bigg) \\
    & \overset{(a)} \leq - \frac{2}{m} \sum\limits_{i=1}^{m} \eta_i \sum\limits_{\ell=0}^{\tau_i-1} \bigg(f_i(\bar{x}_t) - f_i(x_{i,\ell})-\frac{L}{2}{\Vert x_{i,\ell} -\bar{x}_t \Vert}^2 + \langle {x}_{i,\ell} - x^*,   \nabla f_i (x_{i,\ell}) \rangle \bigg)\\
  & \overset{(b)} \leq - \frac{2}{m} \sum\limits_{i=1}^{m} \eta_i \sum\limits_{\ell=0}^{\tau_i-1} \bigg(f_i(\bar{x}_t) - f_i(x_{i,\ell}) + f_i(x_{i,\ell}) -f_i(x^*)\bigg)\\
  & + \frac{L}{m} \sum\limits_{i=1}^{m} \eta_i \sum\limits_{\ell=0}^{\tau_i-1} {\Vert x_{i,\ell} -\bar{x}_t \Vert}^2 \\
& = - \frac{2}{m} \sum\limits_{i=1}^{m} \eta_i \tau_i \left(f_i(\bar{x}_t) - f_i(x^*)\right) + \frac{L}{m} \sum\limits_{i=1}^{m} \eta_i \sum\limits_{\ell=0}^{\tau_i-1} {\Vert x_{i,\ell} -\bar{x}_t \Vert}^2 \\
& = -2 \bar{\eta} \left(f(\bar{x}_t) - f(x^*)\right) + \frac{L}{m} \sum\limits_{i=1}^{m} \eta_i \sum\limits_{\ell=0}^{\tau_i-1} {\Vert x_{i,\ell} -\bar{x}_t \Vert}^2. 
    \end{aligned}
\label{eqn:T1_conv_ct}
\end{equation}
In the above steps, for (a) we used the fact that $f_i(\cdot)$ is $L$-smooth; see equation \eqref{eqn:smooth2}. For (b), we used the fact that $\langle {x}_{i,\ell} - x^*,   \nabla f_i (x_{i,\ell}) \rangle \geq f_i(x_{i,\ell}) - f_i(x^*)$ - a consequence of the convexity of $f_i(\cdot)$. Finally, to arrive at the last step, we used $\eta_i \tau_i = \bar{\eta}$. 

Now observe that while deriving the bound in Eq.  \eqref{eqn:T2bound_ct} of Lemma \ref{lemma:f_bound_ct}, we only used the fact that each $f_i(\cdot)$ is $L$-smooth. Thus, the same bound applies in our current setting. In particular, we have
\begin{equation}
    \begin{aligned}
  \norm[\bigg]{\frac{1}{m}\sum\limits_{i=1}^{m} \eta_i \sum\limits_{\ell=0}^{\tau_i-1} \nabla f_i(x_{i,\ell})}^2 \leq \frac{2 \bar{\eta} L^2}{m}  \sum\limits_{i=1}^{m} \eta_i \sum\limits_{\ell=0}^{\tau_i-1} {\Vert  x_{i,\ell}-\bar{x}_t \Vert }^2 + 2 {\bar{\eta}}^2 {\Vert \nabla f(\bar{x}_t) \Vert}^2.
    \end{aligned}
\end{equation}
Combining the above bound with that in \eqref{eqn:T1_conv_ct} immediately leads to \eqref{eqn:iterate_subopt_ct}. 
\end{proof}

\textbf{Establishing equation  \eqref{eqn:conv_ct_1} in Theorem \ref{thm:conv_dh}:} Suppose $\eta_i L \leq 1$. Then, based on Lemma \ref{lemma:drift_conv_ct}, recall that 
\begin{equation}
    \Vert{x_{i,\ell} - \bar{x}_t \Vert} \leq \eta_i \tau_i \Vert \nabla f(\bar{x}_t) \Vert, \forall i \in \mathcal{S}, \forall \ell \in \{0, \ldots \tau_i-1\}.
\end{equation}
Substituting the above bound on the drift term in \eqref{eqn:iterate_subopt_ct} yields:
\begin{equation}
\begin{aligned}
{\Vert \bar{x}_{t+1}-x^* \Vert}^2  & \leq {\Vert \bar{x}_{t}-x^* \Vert}^2 -2 \bar{\eta} \left(f(\bar{x}_{t})-f(x^*) \right) + \frac{L(1
+2\bar{\eta} L)}{m} \sum\limits_{i=1}^{m} \eta_i \sum\limits_{\ell=0}^{\tau_i-1} {(\eta_i \tau_i)}^2 {\Vert \nabla f(\bar{x}_t) \Vert}^2 +  2{\bar{\eta}}^2 {\Vert \nabla f(\bar{x}_t) \Vert}^2\\
& \overset{(a)} \leq {\Vert \bar{x}_{t}-x^* \Vert}^2 -2 \bar{\eta} \left(f(\bar{x}_{t})-f(x^*) \right) + \frac{3 L}{m} \sum\limits_{i=1}^{m} {(\eta_i \tau_i)}^3 {\Vert \nabla f(\bar{x}_t) \Vert}^2 +  2{\bar{\eta}}^2 {\Vert \nabla f(\bar{x}_t) \Vert}^2\\
& \overset{(b)} = {\Vert \bar{x}_{t}-x^* \Vert}^2 -2 \bar{\eta} \left(f(\bar{x}_{t})-f(x^*) \right) + 3{\bar{\eta}}^3 L {\Vert \nabla f(\bar{x}_t) \Vert}^2 +
2{\bar{\eta}}^2 {\Vert \nabla f(\bar{x}_t) \Vert}^2\\
& \overset{(c)} \leq {\Vert \bar{x}_{t}-x^* \Vert}^2 -2 \bar{\eta} \left(f(\bar{x}_{t})-f(x^*) \right) +
5{\bar{\eta}}^2 {\Vert \nabla f(\bar{x}_t) \Vert}^2\\
& \overset{(d)} \leq {\Vert \bar{x}_{t}-x^* \Vert}^2 -2 \bar{\eta} \left(1-5\bar{\eta} L \right) \left(f(\bar{x}_{t})-f(x^*) \right).
\end{aligned}
\label{eqn:conv_finalbd}
\end{equation}
In the above steps, for (a), we set $\bar{\eta}$ to satisfy $\bar{\eta} L \leq 1$; for (b), we used $\eta_i \tau_i =\bar{\eta}$; for (c), we once again used $\bar{\eta} L \leq 1$; and finally, for (d), we used the fact that $f(\cdot)$ is $L$-smooth; refer to equation \eqref{eqn:grad_upp}. Now rearranging terms in \eqref{eqn:conv_finalbd}, we obtain
\begin{equation}
 2 \bar{\eta} \left(1-5\bar{\eta} L \right) \left(f(\bar{x}_{t})-f(x^*) \right) \leq {\Vert \bar{x}_{t}-x^* \Vert}^2 - {\Vert \bar{x}_{t+1}-x^* \Vert}^2.
\end{equation}
Now let $\bar{\eta}=\frac{1}{10L}$, implying $\eta_i=\frac{1}{10L \tau_i}, \forall i\in\mathcal{S}$. Clearly, the conditions $\eta_i L \leq 1$ and $\bar{\eta} L \leq 1$ are then  satisfied, and we have
\begin{equation}
  f(\bar{x}_{t})-f(x^*)  \leq 10L \left({\Vert \bar{x}_{t}-x^* \Vert}^2 - {\Vert \bar{x}_{t+1}-x^* \Vert}^2 \right).
\end{equation}
Summing the above inequality from $t=1$ to $t=T$ leads to a telescoping sum on the R.H.S., and we obtain
\begin{equation} f\left(\frac{1}{T} \sum\limits_{t=1}^{T} \bar{x}_t \right) - f(x^*) \overset{(a)} \leq 
   \frac{1}{T} \sum\limits_{t=1}^{T}\big(f(\bar{x}_{t})-f(x^*)\big) \leq \frac{10L}{T} \left({\Vert \bar{x}_{1}-x^* \Vert}^2 - {\Vert \bar{x}_{T+1}-x^* \Vert}^2 \right),
\end{equation}
where (a) follows from the convexity of $f(\cdot)$. This establishes equation \eqref{eqn:conv_ct_1} in Theorem \ref{thm:conv_dh}. 

\textbf{Establishing equation  \eqref{eqn:conv_ct_2} in Theorem \ref{thm:conv_dh}:} In order to justify the claim made in equation \eqref{eqn:conv_ct_2}, let us revisit the proof of Theorem \ref{thm:strconv_dh} and note that for arriving at \eqref{eqn:finalbnd_strconv_ct}, we only require each $f_i(\cdot)$ to be $L$-smooth and convex, and $\eta_i$ and $\bar{\eta}$ to satisfy $\eta_i L \leq 1$, $\bar{\eta} L \leq 1$. Accordingly, if $\eta_i L \leq 1$ and $\bar{\eta} L \leq 1$, then the following inequality holds: 
\begin{equation}
  f(\bar{x}_{t+1}) - f(\bar{x}_t)  \leq -\bar{\eta} \left(1-3\bar{\eta} L\right){\Vert \nabla f(\bar{x}_t) \Vert}^2.
\label{eqn:claim_2_thm_conv}
\end{equation}
Now observe that if $\bar{\eta} \leq \frac{1}{5L}$, then based on \eqref{eqn:conv_finalbd}, the sequence $a_t\triangleq {\Vert \bar{x}_t - x^* \Vert}^2$ is non-increasing. We thus obtain:
\begin{equation}
    f(\bar{x}_t) - f(x^*) \overset{(a)} \leq  \langle \bar{x}_t - x^*, \nabla f(\bar{x}_t) \rangle \overset{(b)} \leq \Vert \bar{x}_t - x^* \Vert \Vert \nabla f(\bar{x}_t) \Vert \overset{(c)} \leq \Vert \bar{x}_1 - x^* \Vert \Vert \nabla f(\bar{x}_t) \Vert,
\end{equation}
where (a) follows from convexity of $f(\cdot)$, (b) follows from the Cauchy-Schwartz inequality, and (c) follows from the fact that $a_t$ is non-increasing. We can now lower-bound  $\Vert \nabla f(\bar{x}_t) \Vert$ as follows:\footnote{To avoid trivialities, we have assumed that $\bar{x}_1 \neq x^*.$}
\begin{equation}
     \Vert \nabla f(\bar{x}_t) \Vert \geq \frac{f(\bar{x}_t) - f(x^*)}{\Vert \bar{x}_1 - x^* \Vert}.
\end{equation}
Plugging in the above bound in \eqref{eqn:claim_2_thm_conv} yields:
\begin{equation}
    f(\bar{x}_{t+1}) - f(\bar{x}_t)  \leq -\bar{\eta} \left(1-3\bar{\eta} L\right) {\bigg(\frac{f(\bar{x}_t) - f(x^*)}{\Vert \bar{x}_1 - x^* \Vert}\bigg)}^2.
\end{equation}
Let $\bar{\eta}=\frac{1}{6L}$ and  $r_t \triangleq f(\bar{x}_t)-f(x^*)$. This leads to the following recursion:
\begin{equation}
    r_{t+1} \leq r_t - c r^2_t, \, \textrm{where} \hspace{1.5mm} c=\frac{1}{12{\Vert \bar{x}_1 - x^* \Vert}^2 L}.
\end{equation}
Thus, we have
\begin{equation}
    \frac{1}{r_t} \leq \frac{1}{r_{t+1}} - c \frac{r_t}{r_{t+1}} \implies \frac{1}{r_t} \leq \frac{1}{r_{t+1}} - c \implies c \leq \frac{1}{r_{t+1}}- \frac{1}{r_t},  
\end{equation}
where we used $r_{t+1} \leq r_{t}$. Summing the final inequality from $t=1$ to $t=T-1$ yields
\begin{equation}
    c(T-1) \leq \frac{1}{r_T} - \frac{1}{r_1}.
\end{equation}
Rearranging and simplifying the above inequality, we obtain \eqref{eqn:conv_ct_2}. 
\subsection{Proof of Theorem \ref{thm:nonconv_dh}: Analysis for Non-convex loss functions}
To analyze the non-convex setting, the first key observation that we make is that the claim in Lemma \ref{lemma:f_bound_ct} relies only on smoothness of the client loss functions $f_i(\cdot)$, and requires no assumptions of convexity. Thus, the bound in \eqref{eqn:f_bound_ct} applies to the non-convex setting as well. However, the bound on the drift term $\Vert x_{i,\ell} - \bar{x}_t \Vert$ that we derived in Lemma \ref{lemma:drift_conv_ct} did make use of convexity of each $f_i(\cdot)$, and hence, is no longer applicable. We thus need a way to bound $\Vert x_{i,\ell} - \bar{x}_t \Vert$ without making any assumptions of convexity; this is precisely the subject of the following lemma.
\begin{lemma}
Suppose each $f_i(x)$ is $L$-smooth. Moreover, suppose $\tau_i \geq 1,\forall i\in\mathcal{S}$, $\delta_c=\delta_s=1$, and $\eta_i \leq \frac{1}{L \tau_i}, \forall i \in \mathcal{S}$. Then,  \texttt{FedLin} guarantees the following bound for each $i\in\mathcal{S}$, and $\forall \ell\in\{0,\ldots,\tau_i-1\}$: 
\begin{equation}
    \Vert{x_{i,\ell} - \bar{x}_t \Vert} \leq 3 \eta_i \tau_i \Vert \nabla f(\bar{x}_t) \Vert.
\end{equation}
\label{lemma:drift_nonconv_ct}
\end{lemma}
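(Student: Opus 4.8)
The plan is to mirror the structure of Lemma~\ref{lemma:drift_conv_ct}, but since we can no longer invoke the nonexpansiveness bound \eqref{eqn:two_pt_conv} (which hinged on convexity), I would bound the gradient deviation $\nabla f_i(x_{i,\ell}) - \nabla f_i(\bar{x}_t)$ crudely via $L$-smoothness alone. Writing $d_\ell \triangleq \Vert x_{i,\ell} - \bar{x}_t \Vert$ and $G \triangleq \Vert \nabla f(\bar{x}_t) \Vert$, I would start from the update rule \eqref{eqn:local_update_ct}, apply the triangle inequality to peel off the $\eta_i \nabla f(\bar{x}_t)$ term, and then use \eqref{eqn:smooth1} to bound $\Vert \nabla f_i(x_{i,\ell}) - \nabla f_i(\bar{x}_t)\Vert \leq L d_\ell$. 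This yields the affine recursion
$$d_{\ell+1} \leq (1+\eta_i L)\, d_\ell + \eta_i G,$$
with initial condition $d_0 = 0$ since $x_{i,0} = \bar{x}_t$.

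Note the crucial qualitative difference from the convex case: the multiplicative factor is now $1+\eta_i L > 1$ rather than $\lambda \leq 1$, so the drift can amplify geometrically across local steps rather than stay contracted. I would unroll the recursion to obtain a geometric sum, giving
$$d_\ell \leq \eta_i G \sum_{j=0}^{\ell-1}(1+\eta_i L)^j = \frac{G}{L}\left((1+\eta_i L)^\ell - 1\right).$$

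The heart of the argument, and the main obstacle, is to keep this exponential growth under control using the sharpened step-size condition $\eta_i \leq 1/(L\tau_i)$; this is tighter by a factor of $\tau_i$ than the requirement $\eta_i \leq 1/L$ used in the convex setting, and it is precisely what compensates for the loss of nonexpansiveness. Since $\ell \leq \tau_i - 1$, this condition guarantees $\eta_i L \ell \leq \eta_i L \tau_i \leq 1$. I would then combine $(1+\eta_i L)^\ell \leq \exp(\eta_i L \ell)$ with the elementary inequality $e^z - 1 \leq (e-1)z$, valid for $z \in [0,1]$ (which holds because $(e^z-1)/z$ is increasing on $(0,1]$ and equals $e-1$ at $z=1$). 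Setting $z = \eta_i L \ell$ yields $(1+\eta_i L)^\ell - 1 \leq (e-1)\,\eta_i L \tau_i$, and substituting back gives $d_\ell \leq (e-1)\,\eta_i \tau_i\, G$.

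Finally, since $e - 1 < 3$, the desired bound $d_\ell \leq 3\eta_i \tau_i \Vert \nabla f(\bar{x}_t) \Vert$ follows immediately, completing the proof. The extra constant factor of roughly $3$ (versus the clean factor of $1$ obtained in the convex case of Lemma~\ref{lemma:drift_conv_ct}) is exactly the price paid for the one-round geometric amplification of the drift, and it propagates into the larger constants appearing in the step-size and rate of Theorem~\ref{thm:nonconv_dh}.
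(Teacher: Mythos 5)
Your proposal is correct and follows essentially the same route as the paper's proof: both derive the affine recursion $d_{\ell+1} \leq (1+\eta_i L)d_\ell + \eta_i \Vert \nabla f(\bar{x}_t)\Vert$ from the update rule via the triangle inequality and $L$-smoothness alone, unroll it with $d_0=0$, and control the resulting geometric sum using the sharpened condition $\eta_i L \tau_i \leq 1$. The only (cosmetic) difference is the final estimate: the paper substitutes $\eta_i L \leq 1/\tau_i$ into the multiplier and uses $\sum_{j=0}^{\ell-1}(1+1/\tau_i)^j \leq \tau_i (1+1/\tau_i)^{\tau_i} \leq e\,\tau_i$, while you bound the sum by $\frac{1}{\eta_i L}\left(e^{\eta_i L \ell}-1\right)$ and invoke $e^z - 1 \leq (e-1)z$ on $[0,1]$, which even yields the marginally tighter constant $e-1$ in place of $e$ before both round up to $3$.
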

\begin{proof}
From \eqref{eqn:local_update_ct}, we have
\begin{equation}
    \begin{aligned}
\Vert x_{i,\ell+1} - \bar{x}_t \Vert & = \Vert x_{i,\ell} - \bar{x}_t - \eta_i \left(\nabla f_i(x_{i,\ell})-\nabla f_i(\bar{x}_t)\right) -\eta_i  \nabla f(\bar{x}_t) \Vert\\
& \leq \Vert x_{i,\ell} - \bar{x}_t \Vert +  \eta_i \Vert \nabla f_i(x_{i,\ell})-\nabla f_i(\bar{x}_t) \Vert + \eta_i \Vert \nabla f(\bar{x}_t) \Vert \\
& \overset{(a)} \leq \left(1+\eta_i L \right) \Vert x_{i,\ell} - \bar{x}_t \Vert + \eta_i \Vert \nabla f(\bar{x}_t) \Vert\\
& \overset{(b)} \leq \left(1+\frac{1}{\tau_i} \right) \Vert x_{i,\ell} - \bar{x}_t \Vert + \eta_i \Vert \nabla f(\bar{x}_t) \Vert,
\end{aligned}
\end{equation}
where (a) follows from the  $L$-smoothness of $f_i(\cdot)$, and (b) follows by noting $\eta_i\leq \frac{1}{L\tau_i}$. Let $\alpha_i=1+\frac{1}{\tau_i}$. We then have
\begin{equation}
  \begin{aligned}
\Vert x_{i,\ell} - \bar{x}_t \Vert & \leq \alpha^{\ell}_i \underbrace{\Vert x_{i,0} - \bar{x}_t \Vert}_{= 0} + \left(\sum\limits_{j=0}^{\ell-1}\alpha^j_i \right) \eta_i \Vert \nabla f(\bar{x}_t) \Vert \\
&= \left(\sum\limits_{j=0}^{\ell-1} \alpha^j_i \right) \eta_i \Vert \nabla f(\bar{x}_t) \Vert \\
& = \left(\frac{\alpha^{\ell}_i-1}{\alpha_i-1}\right)\eta_i \Vert \nabla f(\bar{x}_t) \Vert\\
& \leq \eta_i \tau_i {\left(1+\frac{1}{\tau_i} \right)}^{\ell} \Vert \nabla f(\bar{x}_t) \Vert \\
& \leq \eta_i \tau_i {\left(1+\frac{1}{\tau_i} \right)}^{\tau_i} \Vert \nabla f(\bar{x}_t) \Vert \\
& \leq \exp{(1)} \eta_i \tau_i \Vert \nabla f(\bar{x}_t) \Vert \leq 3 \eta_i \tau_i \Vert \nabla f(\bar{x}_t) \Vert,
    \end{aligned}  
\end{equation}
which is the desired conclusion.
\end{proof}

\begin{remark}
In comparison with Lemma \ref{lemma:drift_conv_ct} where we derived bounds on the drift for the strongly convex and convex settings, the requirement on the step-size for bounding the drift in Lemma \ref{lemma:drift_nonconv_ct} is more stringent: whereas $\eta_i\leq \frac{1}{L}$ sufficed for Lemma \ref{lemma:drift_conv_ct} to hold, we need $\eta_i\leq \frac{1}{L\tau_i}$ for the non-convex setting in Lemma \ref{lemma:drift_nonconv_ct}. Also, note that the bound in Lemma \ref{lemma:drift_nonconv_ct} is worse than that in Lemma \ref{lemma:drift_conv_ct} by a factor of $3$. 
\end{remark}

\textbf{Completing the proof of Theorem \ref{thm:nonconv_dh}:} By substituting the bound on the drift from Lemma \ref{lemma:drift_nonconv_ct} in equation \eqref{eqn:f_bound_ct}, we obtain 
\begin{equation}
\begin{aligned}
    f(\bar{x}_{t+1})-f(\bar{x}_t) &\leq -\bar{\eta} \left(1-\bar{\eta} L\right){\Vert \nabla f(\bar{x}_t) \Vert}^2 + \left(\frac{L}{m} \sum\limits_{i=1}^{m} \eta_i \sum\limits_{\ell=0}^{\tau_i-1} 3 \eta_i \tau_i \Vert \nabla f(\bar{x}_t) \Vert \right) \Vert \nabla f(\bar{x}_t) \Vert\\ 
    & + \frac{\bar{\eta} L^3 }{m} \sum\limits_{i=1}^{m} \eta_i  \sum\limits_{\ell=0}^{\tau_i-1}{\left(3 \eta_i \tau_i \Vert \nabla f(\bar{x}_t) \Vert\right)}^2\\
     &= -\bar{\eta} \left(1-\bar{\eta} L\right){\Vert \nabla f(\bar{x}_t) \Vert}^2 + \frac{3 L}{m} \sum\limits_{i=1}^{m} {(\eta_i \tau_i)}^2 { \Vert \nabla f(\bar{x}_t) \Vert}^2  + \frac{9 \bar{\eta} L^3 }{m} \sum\limits_{i=1}^{m} {(\eta_i \tau_i)}^3 {\Vert \nabla f(\bar{x}_t) \Vert}^2 \\
& = -\bar{\eta} \left(1-\bar{\eta} L\right){\Vert \nabla f(\bar{x}_t) \Vert}^2 + 3 {\bar{\eta}}^2 L {\Vert \nabla f(\bar{x}_t) \Vert}^2 + 9 {\bar{\eta}}^4 L^3 {\Vert \nabla f(\bar{x}_t) \Vert}^2\\
& \leq -\bar{\eta} \left(1-13\bar{\eta} L\right){\Vert \nabla f(\bar{x}_t) \Vert}^2,
\end{aligned}
\label{eqn:nonconv_finalbd}
\end{equation}
where we used $\eta_i \tau_i =\bar{\eta}$ and $\bar{\eta} L \leq 1$ in the above steps. Now suppose $\bar{\eta}=\frac{1}{26L}$, implying $\eta_i=\frac{1}{26L\tau_i} < \frac{1}{L \tau_i}$. Observe that this choice of $\bar{\eta}$ fulfils the step-size requirements for Lemma \ref{lemma:drift_nonconv_ct} to hold. Plugging $\bar{\eta}=\frac{1}{26L}$ in \eqref{eqn:nonconv_finalbd} yields
\begin{equation}
{\Vert \nabla f(\bar{x}_t) \Vert}^2 \leq  52L \big(f(\bar{x}_t)-f(\bar{x}_{t+1})\big). 
\end{equation}
Summing the above inequality from $t=1$ to $t=T$, we obtain as desired
\begin{equation}
    \min_{t\in[T]} {\Vert \nabla f(\bar{x}_t) \Vert}^2 \leq \frac{1}{T} \sum\limits_{t=1}^{T} {\Vert \nabla f(\bar{x}_t) \Vert}^2 \leq \frac{52L}{T} \big(f(\bar{x}_1)-f(\bar{x}_{T+1})\big). 
\end{equation}

\textbf{Proof of Theorem \ref{thm:nonconv_PL}:} The proof of Theorem \ref{thm:nonconv_PL} is exactly the same as that of Theorem \ref{thm:nonconv_dh} up to equation \eqref{eqn:nonconv_finalbd}. The rest simply follows from the PL condition. 
\newpage
\subsection{Proof of Theorem \ref{thm:noisy}: Analysis for Strongly Convex loss functions with Noise}
In this section, we focus on analyzing the performance of \texttt{FedLin} under a general stochastic oracle model, subject to arbitrary objective and systems heterogeneity. For each $i\in \mathcal{S}$ and $x\in\mathbb{R}^d$, let $q_i(x)$ be an unbiased estimate of the gradient $\nabla f_i(x)$ with variance bounded above by $\sigma^2$. Our goal is to then analyze the following noisy update rule for \texttt{FedLin}: 
\begin{equation}
x^{(t)}_{i,\ell+1} = x^{(t)}_{i,\ell}-\eta_i(q_i(x^{(t)}_{i,\ell})- q_i(\bar{x}_{t})+q(\bar{x}_t)),
\label{eqn:noisyFedLin}
\end{equation}
where $q(x) \triangleq 1/m \sum_{i\in\mathcal{S}} q_i(x), \forall x\in\mathbb{R}^d.$ For our subsequent analysis, we will use $\mathcal{F}^{(t)}_{i,\ell}$ to denote the filtration that captures all the randomness up to the $\ell$-th local step of client $i$ in round $t$. We will also use $\mathcal{F}^{(t)}$ to represent the filtration capturing all the randomness up to the end of round $t-1$. With a slight abuse of notation, $\mathcal{F}^{(t)}_{i,-1}$ is to be interpreted as $\mathcal{F}^{(t)}, \forall i \in \mathcal{C}$. 

We begin our analysis of Theorem \ref{thm:noisy} with the following lemma.
\begin{lemma}
\label{thm4:lemma1}
Suppose each $f_i(x)$ is $L$-smooth. Moreover, suppose $\tau_i \geq 1$, $\forall i \in \mathcal{S}$, $\delta_c=\delta_s=1$ and $\eta_i=\dfrac{\bar{\eta}}{\tau_i}$, $\forall i \in \mathcal{S}$, where $\bar{\eta} \in (0,1)$. Under the stochastic oracle model defined in Section \ref{sec:speed_acc}, \texttt{FedLin} guarantees:
\begin{equation}
\label{thm4:driftbound}
\begin{aligned}
    \mathbb{E}\Big[\|\bar{x}_{t+1}-x^*\|^2\Big] &\leq \mathbb{E}\Big[\|\bar{x}_{t}-x^*\|^2\Big] -2\bar{\eta} \mathbb{E}\Big[f(\bar{x}_{t})-f(x^*)\Big] + 4 \bar{\eta}^2\mathbb{E}\Big[\|\nabla f(\bar{x}_t)\|^2\Big] \\
    & +\dfrac{L(1+6\bar{\eta}L)}{m}\sum \limits_{i=1}^m \eta_i \sum \limits_{\ell=0}^{\tau_i-1}\mathbb{E}\Big[\|\bar{x}_{t}-x_{i,\ell}\|^2\Big]+16\bar{\eta}^2 \sigma^2.
\end{aligned}
\end{equation}
\end{lemma}
\begin{proof}
From the noisy local update rule of \texttt{FedLin} in Eq. \eqref{eqn:noisyFedLin}, we have:
\begin{equation}
    \begin{aligned}
        \bar{x}_{t+1}&=\bar{x}_t-\dfrac{1}{m}\sum \limits_{i=1}^m \eta_i \sum \limits_{\ell=0}^{\tau_i-1}\Big[q_i(x_{i,\ell})-q_i(\bar{x}_t)+q(\bar{x}_t)\Big] \\
        &=\bar{x}_t-\dfrac{1}{m}\sum \limits_{i=1}^m \eta_i \sum \limits_{\ell=0}^{\tau_i-1}q_i(x_{i,\ell}).
    \end{aligned}
\end{equation}
Hence, we obtain
\begin{equation}
\label{lemma12:eqn1}
    \begin{aligned}
        \|\bar{x}_{t+1}-x^*\|^2&=\|\bar{x}_{t}-x^*\|^2+\underbrace{\Big\|\dfrac{1}{m}\sum \limits_{i=1}^m \eta_i \sum \limits_{\ell=0}^{\tau_i-1}q_i(x_{i,\ell})\Big\|^2}_{\mathcal{A}_1}\\
        &\underbrace{-2 \big\langle \bar{x}_t-x^*,\dfrac{1}{m}\sum \limits_{i=1}^m \eta_i \sum \limits_{\ell=0}^{\tau_i-1}q_i(x_{i,\ell}) \big\rangle}_{\mathcal{A}_2}.
    \end{aligned}
\end{equation}
We begin by bounding the term $\mathcal{A}_1$ in \eqref{lemma12:eqn1} as follows:
\begin{equation}
\label{lemma12:eqn2}
    \begin{aligned}
        \mathcal{A}_1&=\Big\|\dfrac{1}{m}\sum \limits_{i=1}^m \eta_i \sum \limits_{\ell=0}^{\tau_i-1}q_i(x_{i,\ell})\Big\|^2\\
        &=\Big\|\dfrac{1}{m}\sum \limits_{i=1}^m \eta_i \sum \limits_{\ell=0}^{\tau_i-1}(q_i(x_{i,\ell})-q_i(\bar{x}_t)+q_i(\bar{x}_t))\Big\|^2\\
        &=\Big\|\dfrac{1}{m}\sum \limits_{i=1}^m \eta_i \sum \limits_{\ell=0}^{\tau_i-1}(q_i(x_{i,\ell})-q_i(\bar{x}_t))+\bar{\eta}q(\bar{x}_t)\Big\|^2\\
        &\leq\underbrace{2\Big\|\dfrac{1}{m}\sum \limits_{i=1}^m \eta_i \sum \limits_{\ell=0}^{\tau_i-1}(q_i(x_{i,\ell})-q_i(\bar{x}_t))\Big\|^2}_{\mathcal{T}_1}+\underbrace{2\bar{\eta}^2\|q(\bar{x}_t)\|^2}_{\mathcal{T}_2}.
    \end{aligned}
\end{equation}
The term $\mathcal{T}_1$ in \eqref{lemma12:eqn2} can be upper bounded as follows:
\begin{equation}
\label{lemma12:eqn3}
    \begin{aligned}
        \mathcal{T}_1&=2\Big\|\dfrac{1}{m}\sum \limits_{i=1}^m \eta_i \sum \limits_{\ell=0}^{\tau_i-1}\big(q_i(x_{i,\ell})-q_i(\bar{x}_t)\big)\Big\|^2 \\
        & {\leq} \dfrac{2}{m}\sum \limits_{i=1}^m \eta_i^2 \tau_i \sum \limits_{\ell=0}^{\tau_i-1}\|q_i(x_{i,\ell})-q_i(\bar{x}_t)\|^2\\
        &= \dfrac{2\bar{\eta}}{m}\sum \limits_{i=1}^m \eta_i\sum \limits_{\ell=0}^{\tau_i-1}\|q_i(x_{i,\ell})-q_i(\bar{x}_t)\|^2 \\
        &= \dfrac{2\bar{\eta}}{m}\sum \limits_{i=1}^m \eta_i\sum \limits_{\ell=0}^{\tau_i-1}\|q_i(x_{i,\ell})-\nabla f_i(x_{i,\ell})+\nabla f_i(x_{i,\ell})-\nabla f_i(\bar{x}_t)+\nabla f_i(\bar{x}_t)-q_i(\bar{x}_t)\|^2\\
        &\overset{(b)}\leq \dfrac{6\bar{\eta}}{m}\sum \limits_{i=1}^m \eta_i\sum \limits_{\ell=0}^{\tau_i-1}\Big(\|q_i(x_{i,\ell})-\nabla f_i(x_{i,\ell})\|^2+\|\nabla f_i(x_{i,\ell})-\nabla f_i(\bar{x}_t)\|^2+\|\nabla f_i(\bar{x}_t)-q_i(\bar{x}_t)\|^2\Big),
    \end{aligned}
\end{equation}
where steps $(a)$ and $(b)$ follow from an application of equation \eqref{eqn:Jensens}. 
Taking expectation on both sides of equation \eqref{lemma12:eqn3}, we obtain:
\begin{equation}
\label{lemma12:eqn6}
    \begin{aligned}
    \mathbb{E}[\mathcal{T}_1]&\leq \dfrac{6\bar{\eta}}{m}\sum \limits_{i=1}^m \eta_i\sum \limits_{\ell=0}^{\tau_i-1}\bigg(\mathbb{E}\bigg[\mathbb{E}\Big[\|q_i(x_{i,\ell})-\nabla f_i(x_{i,\ell})\|^2|\mathcal{F}^{(t)}_{i,\ell-1}\Big]\bigg]+\mathbb{E}\Big[\|\nabla f_i(x_{i,\ell})-\nabla f_i(\bar{x}_t)\|^2\Big]\\
    &+\mathbb{E}\bigg[\mathbb{E}\Big[\|\nabla f_i(\bar{x}_t)-q_i(\bar{x}_t)\|^2|\mathcal{F}^{(t)}\Big]\bigg]\bigg)\\
    &\overset{(a)}\leq \dfrac{6\bar{\eta}}{m}\sum \limits_{i=1}^m \eta_i\sum \limits_{\ell=0}^{\tau_i-1}\Big(2\sigma^2+L^2\mathbb{E}\Big[\|x_{i,\ell}-\bar{x}_t\|^2\Big]\Big) \\
    &\leq 12\bar{\eta}^2\sigma^2+\dfrac{6\bar{\eta}L^2}{m}\sum \limits_{i=1}^m \eta_i\sum \limits_{\ell=0}^{\tau_i-1}\mathbb{E}\Big[\|x_{i,\ell}-\bar{x}_t\|^2\Big].
    \end{aligned}
\end{equation}
For $(a)$, we used the following facts: (i) $x^{(t)}_{i,\ell}$ is $\mathcal{F}^{(t)}_{i,\ell-1}$-adapted; (ii) $\bar{x}_t$ is $\mathcal{F}^{(t)}$-adapted; (iii) the variance of the noise model is bounded above by $\sigma^2$, and (iv) the loss functions are $L$-smooth.

Next, the term $\mathcal{T}_2$ in \eqref{lemma12:eqn2} can be upper bounded as follows:
\begin{equation}
\label{lemma12:eqn4}
    \begin{aligned}
        \mathcal{T}_2&=2\bar{\eta}^2\|q(\bar{x}_t)\|^2\\
                    & =  2\bar{\eta}^2\|q(\bar{x}_t)-\nabla f(\bar{x}_t)+\nabla f(\bar{x}_t)\|^2\\
                    &=2\bar{\eta}^2\Big\|\dfrac{1}{m}\sum \limits_{i=1}^m (q_i(\bar{x}_t)-\nabla f_i(\bar{x}_t))+\nabla f(\bar{x}_t)\Big\|^2\\
                    &\leq 4\bar{\eta}^2\Big\|\dfrac{1}{m}\sum \limits_{i=1}^m (q_i(\bar{x}_t)-\nabla f_i(\bar{x}_t))\Big\|^2+4\bar{\eta}^2\|\nabla f(\bar{x}_t)\|^2\\
                    &\overset{(a)} \leq \dfrac{4 \bar{\eta}^2}{m}\sum \limits_{i=1}^m\|q_i(\bar{x}_t)-\nabla f_i(\bar{x}_t)\|^2+4\bar{\eta}^2\|\nabla f(\bar{x}_t)\|^2,
    \end{aligned}
\end{equation}
where step $(a)$ follows from an application of equation \eqref{eqn:Jensens}.
Taking expectation on both sides of equation \eqref{lemma12:eqn4}, and using the bounded-variance property, we obtain:
\begin{equation}
\label{lemma12:eqn7}
    \begin{aligned}
        \mathbb{E}[\mathcal{T}_2]&\leq 4\bar{\eta}^2\bigg(\sigma^2+\mathbb{E}\Big[\|\nabla f(\bar{x}_t)\|^2\Big]\bigg).
    \end{aligned}
\end{equation}
We now proceed to bound the expectation of the term $\mathcal{A}_2$ in \eqref{lemma12:eqn1} as follows:
\begin{equation}
\label{lemma12:eqn5}
    \begin{aligned}
       \mathbb{E}[\mathcal{A}_2]&= \mathbb{E}\bigg[-2 \big\langle \bar{x}_t-x^*,\dfrac{1}{m}\sum \limits_{i=1}^m \eta_i \sum \limits_{\ell=0}^{\tau_i-1}q_i(x_{i,\ell})
       \big\rangle\bigg]\\
        &=\dfrac{-2}{m}\sum \limits_{i=1}^m \eta_i \sum \limits_{\ell=0}^{\tau_i-1}\mathbb{E}\Big[\langle \bar{x}_t-x^*,q_i(x_{i,\ell})\rangle \Big]\\
       &=\dfrac{-2}{m}\sum \limits_{i=1}^m \eta_i \sum \limits_{\ell=0}^{\tau_i-1}\mathbb{E}\bigg[\mathbb{E}\Big[\langle \bar{x}_t-x^*,q_i(x_{i,\ell})\rangle| \mathcal{F}^{(t)}_{i,\ell-1}\Big] \bigg]\\
       &\overset{(a)}=\dfrac{-2}{m}\sum \limits_{i=1}^m \eta_i \sum \limits_{\ell=0}^{\tau_i-1}\mathbb{E}\Big[\langle \bar{x}_t-x^*,\nabla f_i(x_{i,\ell})\rangle \Big] \\
       &\overset{(b)}\leq \dfrac{-2}{m}\sum \limits_{i=1}^m \eta_i \sum \limits_{\ell=0}^{\tau_i-1}\mathbb{E}\Big[ f_i(\bar{x}_t)-f_i(x^*)-\dfrac{L}{2}\|x_{i,\ell}-x^*\|^2\Big]\\
       &=-2\bar{\eta}\mathbb{E}\Big[ f(\bar{x}_t)-f(x^*)\Big]+\dfrac{L}{m}\sum \limits_{i=1}^m \eta_i \sum \limits_{\ell=0}^{\tau_i-1}\mathbb{E}\Big[\|x_{i,\ell}-\bar{x}_t\|^2\Big].
    \end{aligned}
\end{equation}
For step $(a)$, we used the unbiasedness property of the noise model, and for step $(b)$, we employed the same reasoning as that used to arrive at equation \eqref{eqn:T1_conv_ct}. 

Taking expectation on both sides of equation \eqref{lemma12:eqn1}, and  combining the bounds in equations \eqref{lemma12:eqn6}, \eqref{lemma12:eqn7} and \eqref{lemma12:eqn5} establishes the claim of Lemma \ref{thm4:lemma1}.
\end{proof}

For $L$-smooth and $\mu$-strongly convex loss functions, the following lemma provides a bound on the drift at client $i$.
\begin{lemma}
\label{thm4:lemma2}
Suppose each $f_i(x)$ is $L$-smooth and $\mu$-strongly convex. Moreover, suppose $\tau_i \geq 1$, $\forall i \in \mathcal{S}$, $\delta_c=\delta_s=1$ and $\eta_i=\dfrac{\bar{\eta}}{\tau_i}$, $\forall i \in \mathcal{S}$, where $\bar{\eta} \in (0,1)$. Under the stochastic oracle model defined in Section \ref{sec:speed_acc}, \texttt{FedLin} guarantees the following bound for each $i \in \mathcal{S}$, and $\forall \ell \in \{0,\ldots, \tau_i-1\}$:
\begin{equation}
\label{lemma14:eqn1}
    \mathbb{E}\Big[\|x_{i,\ell}-\bar{x}_t\|^2\Big]\leq 6 \bar{\eta}^2\mathbb{E}\Big[\|\nabla f(\bar{x}_t)\|^2\Big]+27\eta_i\bar{\eta}\sigma^2.
\end{equation}
\end{lemma}
\begin{proof}
From the noisy local update rule of \texttt{Fedlin} in Eq. \eqref{eqn:noisyFedLin}, we have:
\begin{equation}
    \begin{aligned}
        x_{i,\ell+1}&=x_{i,\ell}-\eta_i\big(q_i(x_{i,\ell})-q_i(\bar{x}_t)+q(\bar{x}_t)\big)\\
        &=x_{i,\ell}-\eta_i\big(q_i(x_{i,\ell})-\nabla f_i(x_{i,\ell})+\nabla f_i(x_{i,\ell})-\nabla f_i(\bar{x}_t)\\&+\nabla f_i(\bar{x}_t)-q_i(\bar{x}_t)+q(\bar{x}_t)\big).
    \end{aligned}
\end{equation}
Hence, we have:
\begin{equation}
    \begin{aligned}
        x_{i,\ell+1}-\bar{x}_t&=(x_{i,\ell}-\bar{x}_t)-\eta_i\big(
        \underbrace{\nabla f_i(x_{i,\ell})-\nabla f_i(\bar{x}_t)+\nabla f(\bar{x}_t)}_{\mathcal{V}}\big)\\
        &-\eta_i\big(\underbrace{q_i(x_{i,\ell})-\nabla f_i(x_{i,\ell})+\nabla f_i(\bar{x}_t)-q_i(\bar{x}_t)+q(\bar{x}_t)-\nabla f (\bar{x}_t)}_{\mathcal{W}}\big)
    \end{aligned}
\end{equation}
Consequently, we may write:
\begin{equation}
\label{lemma13:eqn2}
    \begin{aligned}
        \|x_{i,\ell+1}-\bar{x}_t\|^2&=\|(x_{i,\ell}-\bar{x}_t-\eta_i{\mathcal{V}})-\eta_i{\mathcal{W}}\|^2\\
        &=\|(x_{i,\ell}-\bar{x}_t-\eta_i{\mathcal{V}})\|^2+\eta_i^2\|{\mathcal{W}}\|^2-2\langle x_{i,\ell}-\bar{x}_t-\eta_i{\mathcal{V}},\eta_i{\mathcal{W}} \rangle.
    \end{aligned}
\end{equation}
Taking expectation on both sides of equation \eqref{lemma13:eqn2}, we have:
\begin{equation}
\label{lemma13:eqn3}
    \begin{aligned}
        \mathbb{E}\big[\|x_{i,\ell+1}-\bar{x}_t\|^2\big]
        &=\mathbb{E}\big[\|(x_{i,\ell}-\bar{x}_t-\eta_i{\mathcal{V}})\|^2\big]+\eta_i^2\mathbb{E}\big[\|{\mathcal{W}}\|^2\big]\\&-2\mathbb{E}\bigg[\mathbb{E}\Big[\langle x_{i,\ell}-\bar{x}_t-\eta_i{\mathcal{V}},\eta_i{\mathcal{W}} \rangle|\mathcal{F}_{i,\ell-1}^{(t)}\Big]\bigg] \\ 
        &\overset{(a)}= \underbrace{\mathbb{E}\big[\|x_{i,\ell}-\bar{x}_t-\eta_i{\mathcal{V}}\|^2\big]}_{\mathcal{A}_1}+\eta_i^2\underbrace{\mathbb{E}\big[\|{\mathcal{W}}\|^2\big]}_{\mathcal{A}_2}.
    \end{aligned}
\end{equation}
For $(a)$, we used the following facts: (i) $x_{i,\ell}, \bar{x}_t$, and $\mathcal{V}$, are all adapted to $\mathcal{F}_{i,\ell-1}^{(t)}$, and (ii) $\mathbb{E}\big[\mathcal{W}|\mathcal{F}_{i,\ell-1}^{(t)}\big]=0$ based on fact (i) and the unbiasedness property of the noise model. 

We now proceed to bound the term $\mathcal{A}_1$ in equation \eqref{lemma13:eqn3} as follows:
\begin{equation}
\label{lemma13:eqn5}
    \begin{aligned}
       \mathbb{E}\big[\|(x_{i,\ell}-\bar{x}_t-\eta_i{\mathcal{V}})\|^2\big]&=\mathbb{E}\big[\|x_{i,\ell}-\bar{x}_t-\eta_i(\nabla f_i(x_{i,\ell})-\nabla f_i(\bar{x}_t))-\eta_i\nabla f(\bar{x}_t)\|^2\big] \\
       &\leq (1+\dfrac{1}{\gamma})\mathbb{E}\Big[\|x_{i,\ell}-\bar{x}_t\|^2\Big]+(1+\gamma)\eta_i^2\mathbb{E}\Big[\|\nabla f(\bar{x}_t)\|^2\Big].
    \end{aligned}
\end{equation}
The above inequality follows from the application of equation \eqref{eqn:rel_triangle} and Lemma \ref{lemma:two_pt_bnd}.

To bound the term $\mathcal{A}_2$ in equation \eqref{lemma13:eqn3}, we proceed as follows:
\begin{equation}
\label{lemma13:eqn4}
    \begin{aligned}
        \|\mathcal{W}\|^2&=\|q_i(x_{i,\ell})-\nabla f_i(x_{i,\ell})+\nabla f_i(\bar{x}_t)-q_i(\bar{x}_t)+q(\bar{x}_t)-\nabla f (\bar{x}_t)\|^2\\
        &\leq 3\|q_i(x_{i,\ell})-\nabla f_i(x_{i,\ell})\|^2+3\|\nabla f_i(\bar{x}_t)-q_i(\bar{x}_t)\|^2\\
        &+3\|q(\bar{x}_t)-\nabla f (\bar{x}_t)\|^2\\
        & \leq 3\|q_i(x_{i,\ell})-\nabla f_i(x_{i,\ell})\|^2+3\|\nabla f_i(\bar{x}_t)-q_i(\bar{x}_t)\|^2\\
        &+\dfrac{3}{m}\sum\limits_{i=1}^m\|q_i(\bar{x}_t)-\nabla f_i (\bar{x}_t)\|^2.
    \end{aligned}
\end{equation}
Taking expectation on both sides of Eq. \eqref{lemma13:eqn4} and using the bounded-variance property,  we obtain:
\begin{equation}
\label{lemma13:eqn6}
    \begin{aligned}
       \mathbb{E}\big[\|{\mathcal{W}}\|^2\big]& \leq 3(3\sigma^2)=9\sigma^2.
    \end{aligned}
\end{equation}

Combining equations \eqref{lemma13:eqn5} and \eqref{lemma13:eqn6}, equation \eqref{lemma13:eqn3} becomes:
\begin{equation}
\label{lemma13:eqn7}
    \begin{aligned}
        \mathbb{E}\big[\|x_{i,\ell+1}-\bar{x}_t\|^2\big]&\leq (1+\dfrac{1}{\gamma})\mathbb{E}\Big[\|x_{i,\ell}-\bar{x}_t\|^2\Big]+(1+\gamma)\eta_i^2\mathbb{E}\Big[\|\nabla f(\bar{x}_t)\|^2\Big]+9\eta_i^2\sigma^2.
    \end{aligned}
\end{equation}
Iterating equation \eqref{lemma13:eqn7} and using $x_{i,0}=\bar{x}_t$, we obtain:
\begin{equation}
\label{lemma13:eqn8}
    \begin{aligned}
        \mathbb{E}\big[\|x_{i,\ell}-\bar{x}_t\|^2\big]&\leq \Bigg[(1+\gamma)\eta_i^2\mathbb{E}\Big[\|\nabla f(\bar{x}_t)\|^2\Big]+9\eta_i^2\sigma^2\Bigg]\Bigg(\sum \limits_{j=0}^{\ell-1}(1+\dfrac{1}{\gamma})^j\Bigg)\\
        &\leq 6 \bar{\eta}^2\mathbb{E}\Big[\|\nabla f(\bar{x}_t)\|^2\Big]+27\eta_i\bar{\eta}\sigma^2,
    \end{aligned}
\end{equation}
where we set $\gamma=\tau_i$, and used $\eta_i \tau_i =\bar{\eta}$.

\textbf{Completing the proof of Theorem \ref{thm:noisy}:} By substituting the bound on the drift from Lemma \ref{thm4:lemma2} in equation \eqref{thm4:driftbound}, and for $12\bar{\eta}L\leq 1$, we obtain: 
\begin{equation}
\begin{aligned}
    \mathbb{E}\Big[\|\bar{x}_{t+1}-x^*\|^2\Big] &\leq \mathbb{E}\Big[\|\bar{x}_{t}-x^*\|^2\Big] -2\bar{\eta} \mathbb{E}\Big[f(\bar{x}_{t})-f(x^*)\Big] + 4 \bar{\eta}^2\mathbb{E}\Big[\|\nabla f(\bar{x}_t)\|^2\Big] \\
    & +2L\Big(6\bar{\eta}^3\mathbb{E}\Big[\|\nabla f(\bar{x}_t)\|^2\Big]+27\bar{\eta}^3\sigma^2\Big)+16\bar{\eta}^2 \sigma^2\\
    &\leq \mathbb{E}\Big[\|\bar{x}_{t}-x^*\|^2\Big] -2\bar{\eta}\mathbb{E}\Big[f(\bar{x}_{t})-f(x^*)\Big] + 6 \bar{\eta}^2\mathbb{E}\Big[\|\nabla f(\bar{x}_t)\|^2\Big]+25\bar{\eta}^2 \sigma^2\\
    &\overset{(a)}\leq \mathbb{E}\Big[\|\bar{x}_{t}-x^*\|^2\Big] -\mathbb{E}\Big[2\bar{\eta}(1-6\bar{\eta}L)(f(\bar{x}_{t})-f(x^*))\Big] + 25\bar{\eta}^2 \sigma^2\\
    & \overset{(b)} \leq (1-\frac{\bar{\eta}\mu}{2})\mathbb{E}\Big[\|\bar{x}_{t}-x^*\|^2\Big]+25\bar{\eta}^2 \sigma^2.
\end{aligned}
\end{equation}
In the above steps, $(a)$ and $(b)$ follow from the $L$-smoothness and $\mu$-strong convexity of the loss functions, respectively. This completes the proof of Theorem \ref{thm:noisy}. 
\end{proof}
\newpage
\section{Proof of Theorem \ref{thm:server_sprs1}: Gradient Sparsification at Server with no Error-Feedback}
\label{app:servsparse1}
In our subsequent analysis, we will make use of three basic properties of a \texttt{TOP-k} operator that are summarized in the following lemma.
\begin{lemma}
Let $\mathcal{C}_{\delta}:\mathbb{R}^d \rightarrow\mathbb{R}^d$ denote the \texttt{TOP-k}  operator, where $\delta=d/k$, and $k\in\{1,\ldots,d\}$. Then, given any vector $x\in\mathbb{R}^d$, the following three properties hold.
\begin{itemize} 
\item \textbf{Property 1}: $\langle \mathcal{C}_{\delta}(x), x\rangle = {\Vert \mathcal{C}_{\delta}(x) \Vert}^2$.
\item \textbf{Property 2}: ${\Vert \mathcal{C}_{\delta}(x) \Vert}^2 \geq \frac{1}{\delta} {\Vert x \Vert}^2$.
\item \textbf{Property 3}: ${\Vert x- \mathcal{C}_{\delta}(x) \Vert}^2 \leq \left(1-\frac{1}{\delta}\right) {\Vert x \Vert}^2$. 
\end{itemize}
\label{lemma:topK}
\end{lemma}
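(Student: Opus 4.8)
The plan is to verify the three properties directly from the coordinate-wise structure of the $\texttt{TOP-k}$ operator, exploiting the fact that $\mathcal{C}_{\delta}(x)$ and its complement $x - \mathcal{C}_{\delta}(x)$ have disjoint supports. For \textbf{Property 1}, I would simply observe that $\mathcal{C}_{\delta}(x)$ is supported on the index set $\mathcal{E}_{\delta}(x)$, on which it agrees coordinate-for-coordinate with $x$, while vanishing elsewhere. Hence the inner product collapses onto this support: $\langle \mathcal{C}_{\delta}(x), x\rangle = \sum_{j\in\mathcal{E}_{\delta}(x)} (x)_j (x)_j = \sum_{j\in\mathcal{E}_{\delta}(x)} (x)_j^2 = {\Vert \mathcal{C}_{\delta}(x)\Vert}^2$, since the terms for $j\notin\mathcal{E}_{\delta}(x)$ contribute nothing to either side.

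The crux is \textbf{Property 2}, which I would establish via an averaging argument. Order the squared magnitudes of the $d$ coordinates of $x$ in non-increasing order; then ${\Vert \mathcal{C}_{\delta}(x)\Vert}^2$ is exactly the sum of the $k$ largest among these $d$ values, whereas ${\Vert x\Vert}^2$ is their total. Since each of the $k$ largest squared coordinates is at least the mean $\frac{1}{d}{\Vert x\Vert}^2$ of all $d$ squared coordinates, summing $k$ such terms gives ${\Vert \mathcal{C}_{\delta}(x)\Vert}^2 \geq \frac{k}{d}{\Vert x\Vert}^2 = \frac{1}{\delta}{\Vert x\Vert}^2$, where I use $\delta = d/k$.

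Finally, \textbf{Property 3} follows by combining a Pythagorean decomposition with Property 2. Because $\mathcal{C}_{\delta}(x)$ is supported on $\mathcal{E}_{\delta}(x)$ while $x - \mathcal{C}_{\delta}(x)$ is supported on the complementary indices, the two vectors are orthogonal, giving ${\Vert x\Vert}^2 = {\Vert \mathcal{C}_{\delta}(x)\Vert}^2 + {\Vert x - \mathcal{C}_{\delta}(x)\Vert}^2$. Rearranging and invoking Property 2 yields ${\Vert x - \mathcal{C}_{\delta}(x)\Vert}^2 = {\Vert x\Vert}^2 - {\Vert \mathcal{C}_{\delta}(x)\Vert}^2 \leq \left(1 - \frac{1}{\delta}\right){\Vert x\Vert}^2$, as claimed. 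None of these steps presents a genuine obstacle; the only point requiring care is the averaging inequality underlying Property 2, where one must state precisely that a sum of the $k$ largest squared coordinates dominates $k$ times the global average of all $d$ squared coordinates.
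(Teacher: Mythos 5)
Your proofs of Properties 1 and 3 are correct, and they follow the natural route: Property 1 from the fact that $\mathcal{C}_{\delta}(x)$ agrees with $x$ on its support $\mathcal{E}_{\delta}(x)$ and vanishes elsewhere, and Property 3 from the orthogonal (Pythagorean) decomposition ${\Vert x\Vert}^2 = {\Vert \mathcal{C}_{\delta}(x)\Vert}^2 + {\Vert x-\mathcal{C}_{\delta}(x)\Vert}^2$ combined with Property 2. Note that the paper does not actually prove this lemma itself; it defers to \cite{beznosikov}, so a self-contained argument such as yours is a reasonable thing to supply.

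There is, however, a genuine flaw in your justification of Property 2. You assert that ``each of the $k$ largest squared coordinates is at least the mean $\frac{1}{d}{\Vert x\Vert}^2$ of all $d$ squared coordinates.'' That is false in general: take $d=3$, $k=2$, and $x=(\sqrt{10},1,1)$, so the squared coordinates are $(10,1,1)$ with mean $4$, yet the second-largest squared coordinate is $1<4$. Only the single largest coordinate is guaranteed to dominate the mean, so summing ``$k$ such terms'' does not go through as written. The inequality you actually need --- that the \emph{sum} of the $k$ largest squared coordinates is at least $\frac{k}{d}$ of the total, which is exactly what you flag in your final sentence --- is true, but requires the correct averaging argument: writing $y_1\geq y_2\geq\cdots\geq y_d$ for the sorted squared coordinates, every discarded value satisfies $y_j\leq y_k\leq \frac{1}{k}\sum_{i=1}^{k}y_i$ for $j>k$, hence $\frac{1}{d-k}\sum_{j=k+1}^{d}y_j\leq \frac{1}{k}\sum_{i=1}^{k}y_i$, and therefore the global mean is at most the mean of the top $k$, i.e. $\frac{1}{d}\sum_{j=1}^{d}y_j\leq \frac{1}{k}\sum_{j=1}^{k}y_j$, which rearranges to ${\Vert \mathcal{C}_{\delta}(x)\Vert}^2\geq \frac{1}{\delta}{\Vert x\Vert}^2$. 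With this repair the proof is complete; the error is local and does not propagate, since your Property 3 argument uses only the (true) conclusion of Property 2, not the faulty pointwise claim.
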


All three properties stated above follow almost directly from the definition of the \texttt{TOP-k} operator. For a formal proof, see \cite{beznosikov}. We start with the following lemma. 

\begin{lemma}
Suppose each $f_i(x)$ is $L$-smooth. Moreover, suppose $\tau_i \geq 1, \forall i\in\mathcal{S}$,  $\delta_c=1$, and $\eta_i=\frac{\bar{\eta}}{\tau_i}, \forall i \in \mathcal{S}$, where $\bar{\eta}\in (0,1)$. Then, the variant of \texttt{FedLin} described in the statement of Theorem \ref{thm:server_sprs1} guarantees:
\begin{equation}
\begin{aligned}
    f(\bar{x}_{t+1})-f(\bar{x}_t) &\leq -\bar{\eta} \left(1- \bar{\eta} L \right){\Vert g_t \Vert}^2 + \left(\frac{L}{m} \sum\limits_{i=1}^{m} \eta_i \sum\limits_{\ell=0}^{\tau_i-1}\Vert x_{i,\ell} - \bar{x}_t \Vert \right) \Vert \nabla f(\bar{x}_t) \Vert\\ 
    & \hspace{5mm} + \frac{\bar{\eta} L^3 }{m} \sum\limits_{i=1}^{m} \eta_i  \sum\limits_{\ell=0}^{\tau_i-1}{\Vert x_{i,\ell} - \bar{x}_t \Vert}^2.
\end{aligned}
\label{eqn:f_bound_serv1}
\end{equation}
\label{lemma:f_bound_serv1}
\end{lemma}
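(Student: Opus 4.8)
The plan is to mirror the proof of Lemma \ref{lemma:f_bound_ct}, the only structural change being that the local updates now use the compressed server gradient $g_t = \mathcal{C}_{\delta_s}(\nabla f(\bar{x}_t))$ in place of $\nabla f(\bar{x}_t)$. First I would roll out the local recursion $x_{i,\ell+1} = x_{i,\ell} - \eta_i(\nabla f_i(x_{i,\ell}) - \nabla f_i(\bar{x}_t) + g_t)$ from $x_{i,0} = \bar{x}_t$, and average over clients. Using $\eta_i\tau_i = \bar{\eta}$ and $\frac{1}{m}\sum_i \nabla f_i(\bar{x}_t) = \nabla f(\bar{x}_t)$, the terms proportional to $\nabla f(\bar{x}_t)$ cancel, leaving the clean identity
$$\bar{x}_{t+1} - \bar{x}_t = -\frac{1}{m}\sum_{i=1}^m \eta_i \sum_{\ell=0}^{\tau_i-1}\left(\nabla f_i(x_{i,\ell}) - \nabla f_i(\bar{x}_t)\right) - \bar{\eta}\, g_t.$$
I will abbreviate the first (drift-dependent) sum as $S$, so that $\bar{x}_{t+1} - \bar{x}_t = -S - \bar{\eta} g_t$.

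Next I would invoke the $L$-smoothness descent inequality \eqref{eqn:smooth2} to write $f(\bar{x}_{t+1}) - f(\bar{x}_t) \leq \langle -S - \bar{\eta} g_t, \nabla f(\bar{x}_t)\rangle + \frac{L}{2}\Vert S + \bar{\eta} g_t\Vert^2$, and then bound the two pieces separately. For the linear term, the crucial observation is that $\langle g_t, \nabla f(\bar{x}_t)\rangle = \Vert g_t\Vert^2$ by Property 1 of the \texttt{TOP-k} operator (Lemma \ref{lemma:topK}), since $g_t = \mathcal{C}_{\delta_s}(\nabla f(\bar{x}_t))$; this is precisely what produces a $-\bar{\eta}\Vert g_t\Vert^2$ contribution rather than the $-\bar{\eta}\Vert \nabla f(\bar{x}_t)\Vert^2$ of Lemma \ref{lemma:f_bound_ct}. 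The remaining inner product $-\langle S, \nabla f(\bar{x}_t)\rangle$ I would bound exactly as in Lemma \ref{lemma:f_bound_ct}: Cauchy--Schwarz, the triangle inequality, and the $L$-smoothness bound \eqref{eqn:smooth1} turn it into $\frac{L}{m}\left(\sum_i \eta_i \sum_\ell \Vert x_{i,\ell} - \bar{x}_t\Vert\right)\Vert \nabla f(\bar{x}_t)\Vert$, matching the second term of the claim.

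For the quadratic term, I would apply \eqref{eqn:rel_triangle} with $\gamma = 1$ to get $\frac{L}{2}\Vert S + \bar{\eta} g_t\Vert^2 \leq L\Vert S\Vert^2 + L\bar{\eta}^2\Vert g_t\Vert^2$. The term $L\Vert S\Vert^2$ is then handled identically to steps (b)--(d) of \eqref{eqn:T2bound_ct}: two applications of Jensen's inequality \eqref{eqn:Jensens}, followed by smoothness and the identity $\eta_i^2\tau_i = \eta_i\bar{\eta}$, yielding the third term $\frac{\bar{\eta} L^3}{m}\sum_i \eta_i \sum_\ell \Vert x_{i,\ell} - \bar{x}_t\Vert^2$. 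Finally, collecting the two $\Vert g_t\Vert^2$ contributions gives $-\bar{\eta}\Vert g_t\Vert^2 + L\bar{\eta}^2\Vert g_t\Vert^2 = -\bar{\eta}(1 - \bar{\eta} L)\Vert g_t\Vert^2$, which is the leading term of \eqref{eqn:f_bound_serv1}.

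The main obstacle --- really the only new ingredient relative to Lemma \ref{lemma:f_bound_ct} --- is managing the interaction between the biased compression operator and the inner product with the true gradient. Property 1 ($\langle \mathcal{C}_\delta(x), x\rangle = \Vert \mathcal{C}_\delta(x)\Vert^2$) is what makes the argument go through \emph{without} any error-feedback: it guarantees that the compressed direction $g_t$ retains a nonnegative correlation with $\nabla f(\bar{x}_t)$ equal to exactly $\Vert g_t\Vert^2$. The subsequent conversion of this $\Vert g_t\Vert^2$ term into a genuine decrease in sub-optimality (invoking Property 2 to relate $\Vert g_t\Vert^2$ to $\Vert \nabla f(\bar{x}_t)\Vert^2$, and hence to $f(\bar{x}_t) - f(x^*)$) is deferred to the remainder of the proof of Theorem \ref{thm:server_sprs1}; the present lemma merely isolates the per-round descent bound in the convenient form above.
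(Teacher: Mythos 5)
Your proposal is correct and takes essentially the same route as the paper's proof: you derive the same identity $\bar{x}_{t+1}-\bar{x}_t = -S-\bar{\eta}\,g_t$, invoke Property 1 of the \texttt{TOP-k} operator (Lemma \ref{lemma:topK}) to turn the linear term into $-\bar{\eta}\Vert g_t\Vert^2$, apply the relaxed triangle inequality \eqref{eqn:rel_triangle} with $\gamma=1$, and recycle the $T_1$/$T_2$ bounds of Lemma \ref{lemma:f_bound_ct} for the drift-dependent terms, exactly as the paper does in Appendix \ref{app:servsparse1}. The only cosmetic difference is that you perform the cancellation of the $\nabla f(\bar{x}_t)$ terms at the level of the iterate identity rather than inside the inner product, which is the same computation.
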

\begin{proof}
For the setting under consideration, the local update rule at client $i$ takes the form
\begin{equation}
    x_{i,\ell+1} = x_{i,\ell}-\eta_i(\nabla f_i(x_{i,\ell})-\nabla f_i(\bar{x}_{t})+g_t),
\label{eqn:local_update_serv1}
\end{equation}
where
\begin{equation}
g_t=\mathcal{C}_{\delta_s}\left(\frac{1}{m}\sum\limits_{i=1}^{m}\nabla f_i (\bar{x}_t)\right)=\mathcal{C}_{\delta_s}\left(\nabla f (\bar{x}_t)\right).
\end{equation}
Using $x_{i,0}=\bar{x}_t, \forall i\in\mathcal{S}$, we then have:
\begin{equation}
\begin{aligned}
    x_{i,\tau_i}&=\bar{x}_t-\eta_i\sum\limits_{\ell=0}^{\tau_i-1}\nabla f_i(x_{i,\ell}) -\eta_i \tau_i (g_t - \nabla f_i(\bar{x}_t))\\
    &=\bar{x}_t-\eta_i\sum\limits_{\ell=0}^{\tau_i-1}\nabla f_i(x_{i,\ell}) - \bar{\eta} (g_t - \nabla f_i(\bar{x}_t)), \forall i\in\mathcal{S},
\end{aligned}
\end{equation}
where we used $\eta_i \tau_i =\bar{\eta}$ in the second step. Thus,
\begin{equation}
\begin{aligned}
    \bar{x}_{t+1}=\frac{1}{m}\sum\limits_{i=1}^{m}x_{i,\tau_i}&=\bar{x}_t-\frac{1}{m} \sum\limits_{i=1}^{m} \eta_i \sum\limits_{\ell=0}^{\tau_i-1} \nabla f_i(x_{i,\ell}) -\frac{\bar{\eta}}{m} \sum\limits_{i=1}^{m} \left(g_t - \nabla f_i(\bar{x}_t)\right)\\  
    &=\bar{x}_t-\frac{1}{m} \sum\limits_{i=1}^{m} \eta_i \sum\limits_{\ell=0}^{\tau_i-1} \nabla f_i(x_{i,\ell}) - \bar{\eta} \left(g_t - \nabla f(\bar{x}_t)\right).
\label{eqn:xbart_serv1}
\end{aligned}
\end{equation}
Compared to \eqref{eqn:xbart_ct}, note that we have an additional error term $\bar{\eta}(g_t-\nabla f(\bar{x}_t))$ that shows up as a consequence of gradient sparsification at the server.  Nonetheless, we proceed exactly as before, and bound $f(\bar{x}_{t+1})-f(\bar{x}_t)$ as follows:
\begin{equation}
    \begin{aligned}
    f(\bar{x}_{t+1})-f(\bar{x}_{t}) & \leq \langle \bar{x}_{t+1} - \bar{x}_{t}, \nabla f(\bar{x}_t) \rangle + \frac{L}{2} {\Vert \bar{x}_{t+1}-\bar{x}_t \Vert}^2\\
    &=  \Big \langle - \frac{1}{m} \sum\limits_{i=1}^{m} \eta_i \sum\limits_{\ell=0}^{\tau_i-1}  \nabla f_i(x_{i,\ell}) + \bar{\eta}(\nabla f(\bar{x}_t) -g_t), \nabla f(\bar{x}_{t}) \Big \rangle \\
    & \hspace{5mm} + \frac{L}{2} \norm[\bigg]{\frac{1}{m}\sum\limits_{i=1}^{m} \eta_i \sum\limits_{\ell=0}^{\tau_i-1} \nabla f_i(x_{i,\ell})+\bar{\eta} \left(g_t - \nabla f(\bar{x}_t)\right)}^2\\
    & \overset{(a)} =  - \Big \langle \frac{1}{m} \sum\limits_{i=1}^{m} \eta_i \sum\limits_{\ell=0}^{\tau_i-1}  (\nabla f_i(x_{i,\ell})-\nabla f_i(\bar{x}_t)), \nabla f(\bar{x}_{t}) \Big \rangle -  \bar{\eta} \big \langle g_t, \nabla f(\bar{x}_{t}) \big \rangle \\
    & \hspace{5mm} + \frac{L}{2} \norm[\bigg]{\frac{1}{m}\sum\limits_{i=1}^{m} \eta_i \sum\limits_{\ell=0}^{\tau_i-1} (\nabla f_i(x_{i,\ell})-\nabla f_i(\bar{x}_t))+\bar{\eta} g_t }^2\\
   & \overset{(b)} =  - \Big \langle \frac{1}{m} \sum\limits_{i=1}^{m} \eta_i \sum\limits_{\ell=0}^{\tau_i-1}  (\nabla f_i(x_{i,\ell})-\nabla f_i(\bar{x}_t)), \nabla f(\bar{x}_{t}) \Big \rangle -  \bar{\eta} {\Vert g_t \Vert}^2 \\
    & \hspace{5mm} + \frac{L}{2} \norm[\bigg]{\frac{1}{m}\sum\limits_{i=1}^{m} \eta_i \sum\limits_{\ell=0}^{\tau_i-1} (\nabla f_i(x_{i,\ell})-\nabla f_i(\bar{x}_t))+\bar{\eta} g_t }^2\\
& \overset{(c)} \leq -\bar{\eta}\left(1-\bar{\eta}L\right) {\Vert g_t \Vert}^2 \underbrace{- \Big \langle \frac{1}{m} \sum\limits_{i=1}^{m} \eta_i \sum\limits_{\ell=0}^{\tau_i-1}  (\nabla f_i(x_{i,\ell})-\nabla f_i(\bar{x}_t)), \nabla f(\bar{x}_{t}) \Big \rangle}_{T_1}\\
& \hspace{5mm} + \underbrace{L \norm[\bigg]{\frac{1}{m}\sum\limits_{i=1}^{m} \eta_i \sum\limits_{\ell=0}^{\tau_i-1} (\nabla f_i(x_{i,\ell})-\nabla f_i(\bar{x}_t))}^2}_{T_2}\\
& \overset{(d)} \leq -\bar{\eta} \left(1- \bar{\eta} L \right){\Vert g_t \Vert}^2 + \left(\frac{L}{m} \sum\limits_{i=1}^{m} \eta_i \sum\limits_{\ell=0}^{\tau_i-1}\Vert x_{i,\ell} - \bar{x}_t \Vert \right) \Vert \nabla f(\bar{x}_t) \Vert\\ 
    & \hspace{5mm} + \frac{\bar{\eta} L^3 }{m} \sum\limits_{i=1}^{m} \eta_i  \sum\limits_{\ell=0}^{\tau_i-1}{\Vert x_{i,\ell} - \bar{x}_t \Vert}^2.
    \end{aligned}
\label{eqn:iterimbnd1_serv1}
\end{equation}
In the above steps, for arriving at (a), we made the following observation:
\begin{equation}
    \bar{\eta} \nabla f(\bar{x}_t)= \frac{1}{m} \sum\limits_{i=1}^{m} \bar{\eta} \nabla f_i(\bar{x}_t) = \frac{1}{m} \sum\limits_{i=1}^{m} \eta_i \tau_i \nabla f_i(\bar{x}_t) = \frac{1}{m} \sum\limits_{i=1}^{m} \eta_i \sum\limits_{\ell=0}^{\tau_i-1} \nabla f_i(\bar{x}_t).
\end{equation}
For (b), observe that $\big \langle g_t, \nabla f(\bar{x}_{t}) \big \rangle = \big \langle \mathcal{C}_{\delta_s}\left(\nabla  f(\bar{x}_{t})\right), \nabla f(\bar{x}_{t})  \big \rangle = {\Vert g_t \Vert}^2$, where the second equality follows from Property 1 of the \texttt{TOP-k} operator in Lemma \ref{lemma:topK}. For (c), we used \eqref{eqn:rel_triangle} with $\gamma=1$. For (d), we followed the arguments used to arrive at \eqref{eqn:T1_thm_ct} and \eqref{eqn:T2bound_ct} to bound $T_1$ and $T_2$, respectively. 
\end{proof}
\newpage
\textbf{Completing the proof of Theorem \ref{thm:server_sprs1}:} To complete the proof of Theorem \ref{thm:server_sprs1}, we start by noting that if the step-size at client $i$ satisfies $\eta_i \leq \frac{1}{L}$, then arguments identical to those used for proving Lemma \ref{lemma:drift_conv_ct} can be used to conclude that
\begin{equation}
    \Vert x_{i,\ell} - \bar{x}_t \Vert \leq \eta_i \tau_i \Vert g_t \Vert.
\end{equation}
Substituting the above bound in \eqref{eqn:f_bound_serv1} yields:
\begin{equation}
    \begin{aligned}
f(\bar{x}_{t+1})-f(\bar{x}_t) &\leq -\bar{\eta} \left(1- \bar{\eta} L \right){\Vert g_t \Vert}^2 + \left(\frac{L}{m} \sum\limits_{i=1}^{m} \eta_i \sum\limits_{\ell=0}^{\tau_i-1} \eta_i \tau_i \Vert g_t \Vert \right) \Vert \nabla f(\bar{x}_t) \Vert\\ 
    & \hspace{5mm} + \frac{\bar{\eta} L^3 }{m} \sum\limits_{i=1}^{m} \eta_i  \sum\limits_{\ell=0}^{\tau_i-1}{\left (\eta_i \tau_i \Vert g_t \Vert  \right)}^2 \\
& = -\bar{\eta} \left(1- \bar{\eta} L \right){\Vert g_t \Vert}^2 + \frac{L}{m} \sum\limits_{i=1}^{m} {\left(\eta_i \tau_i\right)}^2 \Vert g_t \Vert  \Vert \nabla f(\bar{x}_t) \Vert + \frac{\bar{\eta} L^3 }{m} \sum\limits_{i=1}^{m} {\left (\eta_i \tau_i \right)}^3 {\Vert g_t \Vert }^2\\
& \overset{(a)} \leq -\bar{\eta} \left(1- \bar{\eta} L \right){\Vert g_t \Vert}^2 + \frac{\sqrt{\delta_s} L}{m} \sum\limits_{i=1}^{m} {\left(\eta_i \tau_i\right)}^2 {\Vert g_t \Vert }^2 + \frac{\bar{\eta} L^3 }{m} \sum\limits_{i=1}^{m} {\left (\eta_i \tau_i \right)}^3 {\Vert g_t \Vert }^2\\
& \overset{(b)} \leq -\bar{\eta} \left(1- \left(2
+\sqrt{\delta_s}\right)\bar{\eta} L \right){\Vert g_t \Vert}^2\\
& \overset{(c)} \leq -\frac{\bar{\eta}}{\delta_s} \left(1- \left(2
+\sqrt{\delta_s}\right)\bar{\eta} L \right){\Vert \nabla f(\bar{x}_t) 
\Vert}^2\\
& \overset{(d)} \leq -\frac{2\bar{\eta} \mu}{\delta_s} \left(1- \left(2
+\sqrt{\delta_s}\right)\bar{\eta} L \right)\left(f(\bar{x}_t)-f(x^*)\right). 
    \end{aligned}
\end{equation}
In the above steps, for (a), we used Property 2 of the \texttt{TOP-k} operator in Lemma \ref{lemma:topK} to conclude that $\Vert \nabla f(\bar{x}_t) \Vert \leq \sqrt{\delta_s} \Vert \mathcal{C}_{\delta_s} (\nabla f(\bar{x}_t) ) \Vert = \sqrt{\delta_s} \Vert g_t \Vert$. For (b), we used $\eta_i \tau_i = \bar{\eta}$ and $\bar{\eta} L \leq 1$. For (c), we once again used the second property of the \texttt{TOP-k} operator, and for (d), we used the fact that $f(\cdot)$ is $\mu$-strongly convex (refer to \eqref{eqn:grad_low}). Setting $\bar{\eta}=\frac{1}{2\left(2
+\sqrt{\delta_s}\right)L}$ and rearranging terms then leads to
\begin{equation}
    f(\bar{x}_{t+1})-f(x^*) \leq {\left(1-\frac{1}{2\delta_s \left(2+\sqrt{\delta_s}\right)\kappa}\right)} (f(\bar{x}_{t})-f(x^*)), \textrm{where} \hspace{1mm}  \kappa=\frac{L}{\mu}.
\end{equation}
Using the above inequality recursively, we obtain the desired conclusion.  
\newpage
\section{Proof of Theorem \ref{thm:client_sprs}: Gradient Sparsification at Clients} 
\label{app:clientsparse}
The proof of Theorem \ref{thm:client_sprs} is somewhat more involved than Theorem \ref{thm:server_sprs1}. Let us begin by compiling the governing equations for the setting under consideration. 
\begin{equation}
    \begin{aligned}
    x_{i,\ell+1}&=x_{i,\ell}-\eta_i(\nabla f_i(x_{i,\ell}) - \nabla f_i(\bar{x}_t) + g_t)\\
    h_{i,t}&=\mathcal{C}_{\delta_c}\left(\rho_{i,t-1}+\nabla f_i(\bar{x}_t) \right) \\
    \rho_{i,t}&=\rho_{i,t-1}+\nabla f_i(\bar{x}_t) - h_{i,t}.
    \end{aligned}
\label{eqn:clientsprs_eqs}
\end{equation}
The first and the third equations  hold for every communication round $t\in\{1,\ldots,T\}$, whereas the second equation holds  $\forall t \in \{2,\ldots,T\}$. Moreover, we have $h_{i,1}=\nabla f_i(\bar{x}_1)$, and  $\rho_{i,0}=\rho_{i,1}=0, \forall i\in\mathcal{S}$, i.e., the initial gradient  compression errors are $0$ at each client. Since there is no further gradient sparsification at the server, we have $g_t=\frac{1}{m}\sum_{i=1}^{m}h_{i,t}$. It then follows that
\begin{equation}
    \rho_{t}=\rho_{t-1}+\nabla f(\bar{x}_t) - g_t,
\label{eqn:error_eqn}
\end{equation}
where $\rho_t\triangleq \frac{1}{m}   \sum_{i=1}^{m}\rho_{i,t}$. To simplify the analysis, let us define a virtual sequence $\{\tilde x_t\}$ as follows:\footnote{We note that virtual sequences and perturbed iterates are commonly used to simplify proofs in the context of analyzing compression schemes \cite{reddystich,beznosikov}, and asynchronous methods \cite{mania}.}
\begin{equation}
    \tilde{x}_t \triangleq \bar{x}_t - \bar{\eta} \rho_{t-1},
\label{eqn:tildex}
\end{equation}
where $\bar{\eta}= \eta_i \tau_i, \forall i\in\mathcal{S}$. Now observe that
\begin{equation}
    \begin{aligned}
  \tilde{x}_{t+1}&=\bar{x}_{t+1}-\bar{\eta} \rho_t\\
  &= \bar{x}_t-\frac{1}{m} \sum\limits_{i=1}^{m} \eta_i \sum \limits_{\ell=0}^{\tau_i-1} \nabla f_i(x_{i,\ell}) - \bar{\eta} \left(g_t - \nabla f(\bar{x}_t)\right) - \bar{\eta} \left( \rho_{t-1}+\nabla f(\bar{x}_t) - g_t \right)\\
&= \tilde{x}_t - \frac{1}{m} \sum\limits_{i=1}^{m} \eta_i \sum \limits_{\ell=0}^{\tau_i-1} \nabla f_i(x_{i,\ell}). 
    \end{aligned}
\label{eqn:recursion_xtilde}
\end{equation}
The second equality follows from \eqref{eqn:xbart_serv1} and \eqref{eqn:error_eqn}, and the third follows from the definition of $\tilde{x}_t$ in  \eqref{eqn:tildex}. Interestingly, note that the recursion for $\tilde{x}_t$ that we just derived in \eqref{eqn:recursion_xtilde} resembles that for $\bar{x}_t$ in \eqref{eqn:xbart_ct} where there was no effect of gradient sparsification. This simplified recursion reveals the utility of the virtual sequence. 

\textbf{Proof idea:} In order to argue that $\bar{x}_t$ converges to $x^*$, it clearly suffices to argue that the virtual sequence $\tilde{x}_t$ converges to $x^*$, and $\bar{x}_t$ converges to $\tilde{x}_t$. To achieve this, we will employ the following Lyapunov function in our analysis:
\begin{equation}
    \psi_{t} \triangleq { \Vert \tilde{x}_t - x^* \Vert }^2 + \bar{\eta}^2 V_{t-1}, \textrm{where} \hspace{2mm} V_t \triangleq \frac{1}{m} \sum_{i=1}^{m} {\Vert \rho_{i,t} \Vert}^2.
\label{eqn:Lyap}
\end{equation}
The choice of the above Lyapunov function is specific to our setting, and accounts for the effects of systems heterogeneity and gradient sparsification. In the following lemma, we bound the first part of the Lyapunov function, namely the distance of the virtual iterate $\tilde{x}_t$ from the optimal point $x^*$. 

\begin{lemma} Suppose each $f_i(x)$ is $L$-smooth and $\mu$-strongly convex, and suppose Assumption \ref{ass:bndgrad} holds.  Moreover, suppose $\tau_i\geq 1,\forall i\in\mathcal{S}$, and $\delta_s=1$. Let the step-size for client $i$ be chosen as $\eta_i=\frac{\bar{\eta}}{\tau_i}$, where $\bar{\eta}\in (0,1)$ satisfies $\bar{\eta}\leq\frac{1}{2LC}$. Then, \texttt{FedLin} guarantees:
\begin{equation}
    {\Vert \tilde{x}_{t+1} - x^* \Vert}^2 \leq  {\Vert \tilde{x}_{t} - x^* \Vert}^2 - 2 \bar{\eta} (f(\tilde{x}_t)-f(x^*)) + 14 \bar{\eta}^2 {\Vert \nabla f(\tilde{x}_t) \Vert}^2 + 24  \bar{\eta}^3 L V_{t-1} + 12 \bar{\eta}^3 L D. 
\end{equation}
\label{lemma:clientsprs_bnd}
\end{lemma}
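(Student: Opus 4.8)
The plan is to track the squared distance of the virtual sequence $\{\tilde{x}_t\}$ to $x^*$, exploiting the clean recursion \eqref{eqn:recursion_xtilde} in which the sparsification error has been folded away. Writing $G_t \triangleq \frac{1}{m}\sum_{i=1}^{m}\eta_i\sum_{\ell=0}^{\tau_i-1}\nabla f_i(x_{i,\ell})$ so that $\tilde{x}_{t+1}=\tilde{x}_t - G_t$, I would first expand
$$\Vert \tilde{x}_{t+1}-x^*\Vert^2 = \Vert \tilde{x}_t - x^*\Vert^2 - 2\langle \tilde{x}_t - x^*, G_t\rangle + \Vert G_t\Vert^2,$$
and then bound the two non-trivial terms separately, mirroring the structure of Lemma \ref{lemma:iterate_subopt_ct} but with $\tilde{x}_t$ in place of $\bar{x}_t$.

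For the cross term, I would split $\langle \tilde{x}_t - x^*, \nabla f_i(x_{i,\ell})\rangle = \langle \tilde{x}_t - x_{i,\ell}, \nabla f_i(x_{i,\ell})\rangle + \langle x_{i,\ell}-x^*, \nabla f_i(x_{i,\ell})\rangle$, lower-bounding the first inner product by smoothness \eqref{eqn:smooth2} and the second by convexity. Using $\eta_i\tau_i=\bar{\eta}$ and $\nabla f=\frac{1}{m}\sum_i \nabla f_i$, this yields the target descent term $-2\bar{\eta}(f(\tilde{x}_t)-f(x^*))$ together with a drift penalty $\frac{L}{m}\sum_i \eta_i\sum_\ell \Vert \tilde{x}_t - x_{i,\ell}\Vert^2$. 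I would control this drift in two pieces: $\Vert \tilde{x}_t - \bar{x}_t\Vert^2 = \bar{\eta}^2\Vert\rho_{t-1}\Vert^2 \le \bar{\eta}^2 V_{t-1}$ (by definition \eqref{eqn:tildex} and Jensen \eqref{eqn:Jensens}), and $\Vert \bar{x}_t - x_{i,\ell}\Vert \le \bar{\eta}\Vert g_t\Vert$, the latter obtained by repeating the argument of Lemma \ref{lemma:drift_conv_ct} verbatim with $g_t$ replacing $\nabla f(\bar{x}_t)$ (valid since $\eta_i L \le 1$). The squared term $\Vert G_t\Vert^2$ is handled by the now-standard splitting into a drift contribution and a $\bar{\eta}\nabla f(\bar{x}_t)$ contribution, reusing the chain of inequalities \eqref{eqn:rel_triangle}, \eqref{eqn:Jensens}, \eqref{eqn:smooth1}, which again produces terms in $\Vert g_t\Vert^2$ and $\Vert \nabla f(\bar{x}_t)\Vert^2$.

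The crux, and the step I expect to be the main obstacle, is bounding $\Vert g_t\Vert^2$, since $g_t=\frac{1}{m}\sum_i \mathcal{C}_{\delta_c}(\rho_{i,t-1}+\nabla f_i(\bar{x}_t))$ couples the accumulated compression errors, the heterogeneous client gradients, and the \texttt{TOP-k} operator. I would use the elementary fact that $\mathcal{C}_{\delta_c}$ only zeroes out coordinates, hence $\Vert \mathcal{C}_{\delta_c}(x)\Vert \le \Vert x\Vert$, together with Jensen \eqref{eqn:Jensens}, to get $\Vert g_t\Vert^2 \le \frac{1}{m}\sum_i \Vert \rho_{i,t-1}+\nabla f_i(\bar{x}_t)\Vert^2 \le 2V_{t-1} + 2C\Vert\nabla f(\bar{x}_t)\Vert^2 + 2D$, where the last inequality invokes the bounded-dissimilarity Assumption \ref{ass:bndgrad}. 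It then remains to transfer every occurrence of $\nabla f(\bar{x}_t)$ to $\nabla f(\tilde{x}_t)$, which I would do via $L$-smoothness \eqref{eqn:smooth1} and $\bar{x}_t-\tilde{x}_t=\bar{\eta}\rho_{t-1}$, giving $\Vert\nabla f(\bar{x}_t)\Vert^2 \le 2\Vert \nabla f(\tilde{x}_t)\Vert^2 + 2L^2\bar{\eta}^2 V_{t-1}$.

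Finally, I would collect the coefficients of $\Vert\nabla f(\tilde{x}_t)\Vert^2$, $V_{t-1}$, and $D$, and use the step-size restriction $\bar{\eta}\le \frac{1}{2LC}$ (hence $L\bar{\eta}\le \frac{1}{2C}\le\frac12$ and $CL^2\bar{\eta}^2\le\frac14$) to absorb the higher-order-in-$\bar{\eta}$ terms and the dissimilarity constant $C$ into the stated numerical constants $14$, $24L$, and $12L$. The delicate bookkeeping here, namely ensuring that the $C$-dependent contributions arising from $\Vert g_t\Vert^2$ collapse into pure constants precisely because $\bar{\eta}\le\frac{1}{2LC}$, and that no $V_{t-1}$ or $D$ term survives with an oversized coefficient, is exactly where the particular form of the step-size condition earns its keep.
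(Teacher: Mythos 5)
Your proposal is correct and follows essentially the same route as the paper's proof: the same expansion of $\Vert \tilde{x}_{t+1}-x^*\Vert^2$ via the virtual-sequence recursion, the same cross-term treatment mirroring Lemma \ref{lemma:iterate_subopt_ct}, the same two-piece drift control through $\Vert \tilde{x}_t-\bar{x}_t\Vert^2\leq\bar{\eta}^2V_{t-1}$ and $\Vert \bar{x}_t-x_{i,\ell}\Vert\leq\bar{\eta}\Vert g_t\Vert$, the same bound $\Vert g_t\Vert^2\leq 2V_{t-1}+2C\Vert\nabla f(\bar{x}_t)\Vert^2+2D$ via the \texttt{TOP-k} contraction, Jensen, and Assumption \ref{ass:bndgrad}, and the same smoothness transfer from $\nabla f(\bar{x}_t)$ to $\nabla f(\tilde{x}_t)$, with the step-size condition $\bar{\eta}\leq\frac{1}{2LC}$ absorbing the constants exactly as in the paper. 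The only cosmetic difference is that you split $\Vert G_t\Vert^2$ around $\bar{x}_t$ where the paper centers directly at $\tilde{x}_t$, which changes nothing substantive.
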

\begin{proof}
From \eqref{eqn:recursion_xtilde}, we obtain
\begin{equation}
    {\Vert \tilde{x}_{t+1}-x^* \Vert}^2 = {\Vert \tilde{x}_{t}-x^* \Vert}^2 - \frac{2}{m} \langle \tilde{x}_t - x^*, \sum\limits_{i=1}^{m} \eta_i \sum\limits_{\ell=0}^{\tau_i-1} \nabla f_i (x_{i,\ell}) \rangle + \norm[\bigg]{\frac{1}{m}\sum\limits_{i=1}^{m} \eta_i \sum\limits_{\ell=0}^{\tau_i-1} \nabla f_i (x_{i,\ell})}^2.
\end{equation}
To bound the second and third terms in the above equation, we can follow exactly the same steps as those in Lemma \ref{lemma:iterate_subopt_ct}. In particular, Lemma \ref{lemma:iterate_subopt_ct} relied on $L$-smoothness and convexity of each $f_i(\cdot)$ - each of these properties apply to our current setting. Referring to \eqref{eqn:iterate_subopt_ct}, we thus have
\begin{equation}
{\Vert \tilde{x}_{t+1}-x^* \Vert}^2  \leq {\Vert \tilde{x}_{t}-x^* \Vert}^2 -2 \bar{\eta} \left(f(\tilde{x}_{t})-f(x^*) \right) + \frac{L(1
+2\bar{\eta} L)}{m} \sum\limits_{i=1}^{m} \eta_i \sum\limits_{\ell=0}^{\tau_i-1}{\Vert x_{i,\ell} - \tilde{x}_t \Vert}^2 +  2{\bar{\eta}}^2 {\Vert \nabla f(\tilde{x}_t) \Vert}^2.
\label{eqn:iteratebnd_clientsprs}
\end{equation}
To bound the term ${ \Vert \tilde{x}_t - x_{i,\ell} \Vert}^2$, start by observing that
\begin{equation}
    \begin{aligned}
{ \Vert \tilde{x}_t - x_{i,\ell} \Vert}^2 &= { \Vert \tilde{x}_t - \bar{x}_t + \bar{x}_t - x_{i,\ell} \Vert}^2 \\
& \leq 2 { \Vert \tilde{x}_t - \bar{x}_t \Vert}^2 + 2 { \Vert \bar{x}_t - x_{i,\ell} \Vert}^2\\
& = 2 \bar{\eta}^2 {\Vert \rho_{t-1} \Vert}^2 + 2 { \Vert \bar{x}_t - x_{i,\ell} \Vert}^2,
    \end{aligned}
\end{equation}
where the last equality follows from \eqref{eqn:tildex}. Since $\bar{\eta} \leq \frac{1}{L}$, following the same arguments as in Lemma \ref{lemma:drift_conv_ct}, we have $\Vert \bar{x}_t - x_{i,\ell} \Vert \leq \eta_i \tau_i \Vert g_t \Vert = \bar{\eta} \Vert g_t \Vert$. Thus,
\begin{equation}
    { \Vert \tilde{x}_t - x_{i,\ell} \Vert}^2 \leq 2\bar{\eta}^2 \left( {\Vert \rho_{t-1} \Vert}^2 + {\Vert g_t \Vert}^2 \right).
\label{eqn:clientsprsbnd1}
\end{equation}
Next, note that
\begin{equation}
    \begin{aligned}
    {\Vert g_t \Vert}^2 &=  \norm[\bigg]{\frac{1}{m} \sum_{i=1}^{m} h_{i,t}}^2 \\
    & \overset{(a)} \leq  \frac{1}{m} \sum_{i=1}^{m}{\Vert h_{i,t} \Vert}^2\\
    & \overset{(b)} \leq  \frac{1}{m} \sum_{i=1}^{m}{\Vert \rho_{i,t-1}+\nabla f_i(\bar{x}_t) \Vert}^2 \\
    & \overset{(c)} \leq \frac{2}{m} \sum_{i=1}^{m}{\Vert \rho_{i,t-1} \Vert}^2+
    \frac{2}{m} \sum_{i=1}^{m} {\Vert \nabla f_i(\bar{x}_t) \Vert}^2 \\
    & \overset{(d)} = 2 V_{t-1} + \frac{2}{m} \sum_{i=1}^{m} {\Vert \nabla f_i(\bar{x}_t) \Vert}^2 \\
    & \overset{(e)} \leq 2 V_{t-1} + 2C {\Vert \nabla f(\bar{x}_t) \Vert}^2 + 2D.
    \end{aligned}
\label{eqn:clientsprsbnd2}
\end{equation}
In the above steps, (a) follows from Jensen's inequality; (b) follows from the fact that $h_{i,t}=\mathcal{C}_{\delta_c}\left(\rho_{i,t-1}+\nabla f_i(\bar{x}_t) \right)$, and the definition of the \texttt{TOP-k} operation; (c) follows from \eqref{eqn:rel_triangle} with $\gamma=1$; (d) follows from the definition of $V_{t-1}$, and (e) follows from Assumption \ref{ass:bndgrad}. Finally, observe that
\begin{equation}
\begin{aligned}
    {\Vert \nabla f(\bar{x}_t) \Vert}^2 &= {\Vert \nabla f(\bar{x}_t) - \nabla f(\tilde{x}_t) + \nabla f(\tilde{x}_t) \Vert}^2\\ 
& \leq 2 {\Vert \nabla f(\bar{x}_t) - \nabla f(\tilde{x}_t)\Vert}^2 + 2 {\Vert \nabla f(\tilde{x}_t) \Vert}^2\\
& \overset{(a)} \leq 2L^2 {\Vert \bar{x}_t - \tilde{x}_t \Vert}^2 + 2 {\Vert \nabla f(\tilde{x}_t) \Vert}^2\\
& \overset{(b)} = 2\bar{\eta}^2 L^2 {\Vert \rho_{t-1} \Vert}^2 + 2 {\Vert \nabla f(\tilde{x}_t) \Vert}^2\\
& \overset{(c)} \leq 2 \bar{\eta}^2 L^2 V_{t-1} + 2 {\Vert \nabla f(\tilde{x}_t) \Vert}^2, 
\end{aligned}
\label{eqn:clientsprsbnd3}
\end{equation}
where for (a), we used the $L$-smoothness of $f(\cdot)$; for (b), we used \eqref{eqn:tildex}, and for (c), we used Jensen's inequality to bound ${\Vert \rho_{t-1} \Vert}^2$ by $V_{t-1}$. Combining the bounds \eqref{eqn:clientsprsbnd1}, \eqref{eqn:clientsprsbnd2} and \eqref{eqn:clientsprsbnd3}, we obtain:
\begin{equation}
    \begin{aligned}
{\Vert \tilde{x}_t - x_{i,\ell} \Vert}^2 & \leq 2 \bar{\eta}^2 \left( {\Vert \rho_{t-1} \Vert}^2 + 2 V_{t-1} + 2C {\Vert \nabla f(\bar{x}_t) \Vert}^2 + 2D \right)\\
& \leq 2 \bar{\eta}^2 \left( (3+4\bar{\eta}^2 L^2 C) V_{t-1} + 4C {\Vert \nabla f(\tilde{x}_t) \Vert}^2 + 2D \right)\\
& \leq 4 \bar{\eta}^2 \left( 2 V_{t-1} + 2C {\Vert \nabla f(\tilde{x}_t) \Vert}^2 + D \right).
    \end{aligned}
\end{equation}
For the second inequality, we once again used Jensen's to conclude ${\Vert \rho_{t-1} \Vert}^2 \leq V_{t-1}$; for the last inequality, we used $\bar{\eta}^2 L^2 C \leq \bar{\eta}^2 L^2 C^2 \leq \frac{1}{4}$, which in turn follows from $C\geq 1$, and the fact that $\bar{\eta} \leq \frac{1}{2LC}$ based on our choice of step-size. Plugging the bound on ${\Vert \tilde{x}_t - x_{i,\ell} \Vert}^2$ in \eqref{eqn:iteratebnd_clientsprs}, we have
\begin{equation}
\begin{aligned}
{\Vert \tilde{x}_{t+1}-x^* \Vert}^2  &\leq {\Vert \tilde{x}_{t}-x^* \Vert}^2 -2 \bar{\eta} \left(f(\tilde{x}_{t})-f(x^*) \right) +  2{\bar{\eta}}^2 {\Vert \nabla f(\tilde{x}_t) \Vert}^2\\
& \hspace{5mm} + \frac{L(1
+2\bar{\eta} L)}{m} \sum\limits_{i=1}^{m} \eta_i \sum\limits_{\ell=0}^{\tau_i-1} 4\bar{\eta}^2 \left( 2 V_{t-1} + 2C {\Vert \nabla f(\tilde{x}_t) \Vert}^2 + D \right) \\
& \leq {\Vert \tilde{x}_{t}-x^* \Vert}^2 -2 \bar{\eta} \left(f(\tilde{x}_{t})-f(x^*) \right) +  2{\bar{\eta}}^2 {\Vert \nabla f(\tilde{x}_t) \Vert}^2\\
& \hspace{5mm} + \frac{3L}{m} \sum\limits_{i=1}^{m}  4\bar{\eta}^3\left( 2 V_{t-1} + 2C {\Vert \nabla f(\tilde{x}_t) \Vert}^2 + D \right) \\
& = {\Vert \tilde{x}_{t}-x^* \Vert}^2 -2 \bar{\eta} \left(f(\tilde{x}_{t})-f(x^*) \right) + 2\bar{\eta}^2 \left(1+12\bar{\eta}LC\right) {\Vert \nabla f(\tilde{x}_t) \Vert}^2 + 24 \bar{\eta}^3 L V_{t-1} + 12 \bar{\eta}^3 L D\\
& \leq {\Vert \tilde{x}_{t}-x^* \Vert}^2 -2 \bar{\eta} \left(f(\tilde{x}_{t})-f(x^*) \right) + 14\bar{\eta}^2 {\Vert \nabla f(\tilde{x}_t) \Vert}^2 + 24 \bar{\eta}^3 L V_{t-1} + 12 \bar{\eta}^3 L D.
\end{aligned}
\end{equation}
For the second inequality, we used $\bar{\eta}L \leq 1$ and $\bar{\eta}=\eta_i\tau_i$, and for the last inequality, we used $\bar{\eta}LC \leq \frac{1}{2}$. This concludes the proof. 
\end{proof}
In the next lemma, we derive a recursion to bound $V_{t}$ - a measure of the sparsification error. 

\begin{lemma}
Suppose the conditions stated in Lemma \ref{lemma:clientsprs_bnd} hold. Then, we have
\begin{equation}
    V_t \leq \left(1-\frac{1}{2\delta_c}+4\bar{\eta}^2L^2\delta_c C\right)V_{t-1} + 4 \delta_c C {\Vert \nabla f(\tilde{x}_t) \Vert}^2 + 2\delta_c D.
\end{equation}
\label{lemma:sparse_err}
\end{lemma}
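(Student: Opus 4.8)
The plan is to derive a one-step contraction recursion for the averaged sparsification error $V_t = \frac{1}{m}\sum_{i=1}^m \|\rho_{i,t}\|^2$ by combining the error-feedback structure in \eqref{eqn:clientsprs_eqs} with the contractive Property 3 of the \texttt{TOP-k} operator from Lemma \ref{lemma:topK}. First I would rewrite the per-client error recursion in a convenient form: setting $w_{i,t} \triangleq \rho_{i,t-1} + \nabla f_i(\bar{x}_t)$, the definitions of $h_{i,t}$ and $\rho_{i,t}$ give $\rho_{i,t} = w_{i,t} - \mathcal{C}_{\delta_c}(w_{i,t})$, so Property 3 of Lemma \ref{lemma:topK} immediately yields $\|\rho_{i,t}\|^2 \leq (1 - 1/\delta_c)\|w_{i,t}\|^2$.

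The crux of the argument is to split $\|w_{i,t}\|^2 = \|\rho_{i,t-1} + \nabla f_i(\bar{x}_t)\|^2$ using the relaxed triangle inequality \eqref{eqn:rel_triangle} with a carefully chosen parameter $\gamma$. The natural (and essentially forced) choice is $\gamma = \frac{1}{2(\delta_c - 1)}$, assuming $\delta_c > 1$; the case $\delta_c = 1$ is trivial since then $\rho_{i,t} \equiv 0$. With this choice the coefficient of $\|\rho_{i,t-1}\|^2$ collapses to exactly $(1 - 1/\delta_c)(1 + \gamma) = 1 - \frac{1}{2\delta_c}$, while the coefficient of $\|\nabla f_i(\bar{x}_t)\|^2$ becomes $(1 - 1/\delta_c)(2\delta_c - 1) \leq 2\delta_c$. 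Hence $\|\rho_{i,t}\|^2 \leq (1 - \frac{1}{2\delta_c})\|\rho_{i,t-1}\|^2 + 2\delta_c\|\nabla f_i(\bar{x}_t)\|^2$. Averaging over $i \in \mathcal{S}$ and invoking Assumption \ref{ass:bndgrad} then gives $V_t \leq (1 - \frac{1}{2\delta_c})V_{t-1} + 2\delta_c\left(C\|\nabla f(\bar{x}_t)\|^2 + D\right)$.

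The final step is to re-express the gradient at $\bar{x}_t$ in terms of the virtual iterate $\tilde{x}_t$, for which I would simply reuse the smoothness bound \eqref{eqn:clientsprsbnd3} already established inside the proof of Lemma \ref{lemma:clientsprs_bnd}, namely $\|\nabla f(\bar{x}_t)\|^2 \leq 2\bar{\eta}^2 L^2 V_{t-1} + 2\|\nabla f(\tilde{x}_t)\|^2$. Substituting this in and collecting terms folds the extra $2\bar{\eta}^2 L^2 V_{t-1}$ contribution into the $V_{t-1}$ coefficient, yielding exactly $\left(1 - \frac{1}{2\delta_c} + 4\bar{\eta}^2 L^2 \delta_c C\right)V_{t-1} + 4\delta_c C\|\nabla f(\tilde{x}_t)\|^2 + 2\delta_c D$, which is the claimed recursion.

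I expect the only delicate point to be the choice of $\gamma$: it must be tuned so that the compression contraction factor $(1 - 1/\delta_c)$ and the inflation $(1 + \gamma)$ combine to a factor strictly below $1$ (here $1 - \frac{1}{2\delta_c}$), since this is precisely what drives the geometric decay of the error-feedback term in the downstream Lyapunov analysis. Everything else is routine bookkeeping: the relaxed triangle inequality \eqref{eqn:rel_triangle}, Jensen's inequality \eqref{eqn:Jensens}, Assumption \ref{ass:bndgrad}, and the already-proven bound \eqref{eqn:clientsprsbnd3}.
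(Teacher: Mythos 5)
Your proposal is correct and follows essentially the same route as the paper's proof: Property 3 of the \texttt{TOP-k} operator, the relaxed triangle inequality with the same choice $\gamma = \frac{1}{2(\delta_c-1)}$, averaging over clients with Assumption \ref{ass:bndgrad}, and then substituting the bound \eqref{eqn:clientsprsbnd3} to convert $\Vert \nabla f(\bar{x}_t)\Vert^2$ into terms involving $V_{t-1}$ and $\Vert \nabla f(\tilde{x}_t)\Vert^2$. Your explicit handling of the degenerate case $\delta_c = 1$ (where the compression is the identity and $V_t \equiv 0$) is a small refinement the paper omits, but it does not alter the argument.
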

\begin{proof}
Let us observe that
\begin{equation}
    \begin{aligned}
    V_t &= \frac{1}{m} \sum_{i=1}^{m} {\Vert \rho_{i,t} \Vert}^2\\
    &= \frac{1}{m} \sum_{i=1}^{m} {\Vert \rho_{i,t-1}+\nabla f_i(\bar{x}_t) - h_{i,t} \Vert}^2\\
    &= \frac{1}{m} \sum_{i=1}^{m} {\Vert \rho_{i,t-1}+\nabla f_i(\bar{x}_t) - \mathcal{C}_{\delta_c} \left(\rho_{i,t-1}+\nabla f_i(\bar{x}_t)\right) \Vert}^2\\
    & \leq  \left(1-\frac{1}{\delta_c}\right) \frac{1}{m} \sum_{i=1}^{m} {\Vert \rho_{i,t-1}+\nabla f_i(\bar{x}_t) \Vert}^2\\
    & \leq \left(1-\frac{1}{\delta_c}\right) (1+\gamma) V_{t-1} + \left(1-\frac{1}{\delta_c}\right)\left(1+\frac{1}{\gamma}\right) \frac{1}{m} \sum_{i=1}^{m} {\Vert \nabla f_i(\bar{x}_t) \Vert}^2.
    \end{aligned}
\label{eqn:sparse_err_bnd}
\end{equation}
For the second-last inequality, we used Property 3 of the \texttt{TOP-k} operator in Lemma \ref{lemma:topK}; for the last inequality, we used the definition of $V_{t-1}$ and the relaxed triangle inequality \eqref{eqn:rel_triangle}. Now in order for $V_t$ to contract over time, we must have $$\left(1-\frac{1}{\delta_c}\right) (1+\gamma) < 1 \implies \gamma < \frac{1}{\delta_c -1}.$$
Accordingly, suppose $\gamma=\frac{1}{2(\delta_c-1)}$. Simple calculations then yield 
$$\left(1-\frac{1}{\delta_c}\right) (1+\gamma) = 1-\frac{1}{2\delta_c}; \hspace{5mm} \left(1-\frac{1}{\delta_c}\right) \left(1+\frac{1}{\gamma}\right) =  \left(1-\frac{1}{\delta_c}\right) (2\delta_c-1) < 2\delta_c. $$
Substituting the above bounds in \eqref{eqn:sparse_err_bnd}, and invoking Assumption \ref{ass:bndgrad}, we obtain
\begin{equation}
    \begin{aligned}
    V_t &\leq \left(1-\frac{1}{2\delta_c}\right)V_{t-1} + 2\delta_c \left(C{\Vert \nabla f(\bar{x}_t) \Vert}^2+D\right)\\
    & \leq \left(1-\frac{1}{2\delta_c}\right)V_{t-1} + 2\delta_c C \left(2 \bar{\eta}^2 L^2 V_{t-1} + 2 {\Vert \nabla f(\tilde{x}_t) \Vert}^2 \right) + 2\delta_c D \\
    & = \left(1-\frac{1}{2\delta_c}+4\bar{\eta}^2L^2\delta_c C\right)V_{t-1} + 4 \delta_c C {\Vert \nabla f(\tilde{x}_t) \Vert}^2 + 2\delta_c D,
    \end{aligned}
\end{equation}
where for the second inequality, we used \eqref{eqn:clientsprsbnd3}. 
\end{proof}
Now that we have a handle over each of the two components of the Lyapunov function $\psi_{t+1}$, we are in a position to complete the proof of Theorem \ref{thm:client_sprs}. 

\textbf{Completing the proof of Theorem \ref{thm:client_sprs}:} Suppose $\bar{\eta}$ is chosen such that $\bar{\eta}\leq \frac{1}{72L\delta_c C}$. Note that this choice of $\bar{\eta}$ meets the requirements for    Lemmas \ref{lemma:clientsprs_bnd} and \ref{lemma:sparse_err} to hold. Now based on Lemmas \ref{lemma:clientsprs_bnd} and \ref{lemma:sparse_err}, and the definition of $\psi_t$, we have
\begin{equation}
    \begin{aligned}
    \psi_{t+1}&={\Vert \tilde{x}_{t+1} - x^* \Vert}^2 + \bar{\eta}^2 V_t\\
   & \leq {\Vert \tilde{x}_{t}-x^* \Vert}^2 -2 \bar{\eta} \left(f(\tilde{x}_{t})-f(x^*) \right) + 2\bar{\eta}^2 (7+2\delta_c C) {\Vert \nabla f(\tilde{x}_t) \Vert}^2\\
   & \hspace{5mm} + \left(1-\frac{1}{2\delta_c}+ 4 \bar{\eta}^2 L^2 \delta_c C + 24 \bar{\eta} L \right) \bar{\eta}^2 V_{t-1} + 2 \bar{\eta}^2 (6\bar{\eta} L + \delta_c) D \\
  &  \overset{(a)} \leq {\Vert \tilde{x}_{t}-x^* \Vert}^2 -2 \bar{\eta} \left(f(\tilde{x}_{t})-f(x^*) \right) + 18 \bar{\eta}^2 \delta_c C {\Vert \nabla f(\tilde{x}_t) \Vert}^2\\
  & \hspace{5mm} + \left(1-\frac{1}{2\delta_c}+ 28 \bar{\eta} L \right) \bar{\eta}^2 V_{t-1} + 2 \bar{\eta}^2 \left(\frac{6}{\delta_c C} + \delta_c\right) D \\
 & \overset{(b)} \leq {\Vert \tilde{x}_{t}-x^* \Vert}^2 - 2\bar{\eta} \left(1-18\bar{\eta}L\delta_c C \right) \left(f(\tilde{x}_{t})-f(x^*) \right) + \left(1-\frac{1}{2\delta_c}+ 28 \bar{\eta} L \right) \bar{\eta}^2 V_{t-1}\\
 & \hspace{5mm}+ 2 \bar{\eta}^2\left(\frac{6}{\delta_c C} + \delta_c\right) D\\
 & \overset{(c)} \leq \left(1-\frac{3}{4}\bar{\eta} \mu \right){\Vert \tilde{x}_{t}-x^* \Vert}^2 + \left(1-\frac{1}{2\delta_c}+ 28 \bar{\eta} L \right) \bar{\eta}^2 V_{t-1} + 2 \bar{\eta}^2\left(\frac{6}{\delta_c C} + \delta_c\right) D.
    \end{aligned}
\end{equation}
For (a), we used $\delta_c \geq 1, C \geq 1$, and $\bar{\eta}L \delta_c C \leq 1$ to simplify the preceding inequality; for (b), we used the $L$-smoothness of $f(\cdot)$; for (c), we used the fact that $\bar{\eta}L \delta_c C \leq \frac{1}{72}$, and that $f(\cdot)$ is $\mu$-strongly convex. Now given our choice of $\bar{\eta}$, observe that
$$ 1-\frac{1}{2\delta_c}+28\bar{\eta}L \leq 1-\frac{1}{2\delta_c C}+\frac{28}{72\delta_c C} = 1-\frac{1}{9\delta_c C} < 1-\frac{1}{96 \delta_c C \kappa} \leq 1-\frac{3}{4}\bar{\eta}\mu,$$
where $\kappa=\frac{L}{\mu}.$ Thus,
\begin{equation}
\begin{aligned}
    \psi_{t+1} &\leq \left(1-\frac{3}{4}\bar{\eta} \mu\right)\left({\Vert \tilde{x}_{t}-x^* \Vert}^2 +  \bar{\eta}^2 V_{t-1}\right) + 2 \bar{\eta}^2\left(\frac{6}{\delta_c C} + \delta_c\right) D\\
 &=\left(1-\frac{3}{4}\bar{\eta} \mu\right)\psi_{t} + 2 \bar{\eta}^2\left(\frac{6}{\delta_c C} + \delta_c\right) D. 
\end{aligned}
\end{equation}
Using the above inequality recursively, we obtain
\begin{equation}
\begin{aligned}
    \psi_{T+1} &\leq {\left(1-\frac{3}{4}\bar{\eta} \mu\right)}^T \psi_{1} + 2 \bar{\eta}^2 \left(\frac{1-{\left(1-\frac{3}{4}\bar{\eta} \mu\right)}^T}{1-\left(1-\frac{3}{4}\bar{\eta} \mu\right)}\right) \left(\frac{6}{\delta_c C} + \delta_c\right) D \\
    & \leq {\left(1-\frac{3}{4}\bar{\eta} \mu\right)}^T \psi_{1} +\frac{8}{3}\bar{\eta} \left(\frac{6}{\delta_c C} + \delta_c\right) \frac{D}{\mu}. 
\end{aligned}
\label{eqn:clientpsrs_finalbd}
\end{equation}
Now since $\rho_{i,0}=0, \forall i\in\mathcal{S}$, we have $\rho_{0}=0$, and $V_0=0$. It thus follows that $\tilde{x}_1=\bar{x}_1-\bar{\eta}\rho_0=\bar{x}_1$, and $\psi_1={\Vert \tilde{x}_{1} - x^* \Vert}^2 + \bar{\eta}^2 V_0={\Vert \bar{x}_1-x^* \Vert}^2$. Finally, observe that
\begin{equation}
\begin{aligned}
    {\Vert \bar{x}_{T+1}-x^* \Vert}^2 & = {\Vert \bar{x}_{T+1}- \tilde{x}_{T+1} + \tilde{x}_{T+1} - x^* \Vert}^2 \\
    & \leq 2 {\Vert \tilde{x}_{T+1} - x^* \Vert}^2+ 2 {\Vert \bar{x}_{T+1}- \tilde{x}_{T+1} \Vert}^2 \\
    & = 2{\Vert \tilde{x}_{T+1} - x^* \Vert}^2 + 2 \bar{\eta}^2 {\Vert\rho_{T}\Vert}^2\\
    & \leq 2 \left({\Vert \tilde{x}_{T+1} - x^* \Vert}^2+\bar{\eta}^2V_T\right)\\
    & = 2\psi_{T+1}. 
\end{aligned}
\end{equation}
Based on the above discussion, and \eqref{eqn:clientpsrs_finalbd}, we have
    \begin{equation}
    {\Vert \bar{x}_{T+1} - x^* \Vert}^2 \leq 2 { \left(1-\frac{3}{4}\bar{\eta} \mu \right)}^T {\Vert \bar{x}_{1} - x^* \Vert}^2 + \frac{16}{3}\bar{\eta}\left(\frac{6}{\delta_c C}+\delta_c\right)\frac{D}{\mu},
\end{equation}
which is precisely the desired conclusion.
\newpage
\section{Proof of Theorem \ref{thm:serv_sprs2}: Gradient Sparsification at Server with Error-Feedback}
\label{app:servsparse2}
The proof of Theorem \ref{thm:serv_sprs2} follows similar conceptual steps as that of Theorem \ref{thm:client_sprs}. Thus, we will only sketch the main arguments, leaving the reader to verify the details. Given that $\delta_c=1$, we note that the governing equations for this setting are as follows. \begin{equation}
    \begin{aligned}
    x_{i,\ell+1}&=x_{i,\ell}-\eta_i(\nabla f_i(x_{i,\ell}) - \nabla f_i(\bar{x}_t) + g_t)\\
    g_t &= \mathcal{C}_{\delta_s}\left(e_{t-1} + \nabla f(\bar{x}_t) \right)\\
    e_t &= e_{t-1} + \nabla f(\bar{x}_t) - g_t.
    \end{aligned}
\label{eqn:servsprs_eqs}
\end{equation}
The first and the third equations above hold for every communication round $t\in\{1,\ldots,T\}$, whereas the second equation holds for every $t\in\{2,\ldots,T\}$. Initially, we have $g_1=\nabla f(\bar{x}_1)$, and $e_1=e_0=0$. Let us define the following virtual sequence $\{\tilde{x}_t\}$:
\begin{equation}
    \tilde{x}_t=\bar{x}_t-\bar{\eta} e_{t-1},
\label{eqn:virt_seq_serv}
\end{equation}
where $\bar{\eta}=\eta_i \tau_i, \forall i\in\mathcal{S}.$ Then, based on the definition of the virtual sequence, and \eqref{eqn:servsprs_eqs}, it is easy to verify that
\begin{equation}
 \tilde{x}_{t+1} = \tilde{x}_t - \frac{1}{m} \sum\limits_{i=1}^{m} \eta_i \sum \limits_{\ell=0}^{\tau_i-1} \nabla f_i(x_{i,\ell}).
\end{equation}
Following exactly the same steps as in the proof of Lemma \ref{lemma:iterate_subopt_ct}, we obtain
\begin{equation}
{\Vert \tilde{x}_{t+1}-x^* \Vert}^2  \leq {\Vert \tilde{x}_{t}-x^* \Vert}^2 -2 \bar{\eta} \left(f(\tilde{x}_{t})-f(x^*) \right) + \frac{L(1
+2\bar{\eta} L)}{m} \sum\limits_{i=1}^{m} \eta_i \sum\limits_{\ell=0}^{\tau_i-1}{\Vert x_{i,\ell} - \tilde{x}_t \Vert}^2 +  2{\bar{\eta}}^2 {\Vert \nabla f(\tilde{x}_t) \Vert}^2.
\label{eqn:iteratebnd_servsprs2}
\end{equation}
Just as in the proof of Theorem \ref{thm:client_sprs}, our next task is to derive a bound on ${\Vert x_{i,\ell} - \tilde{x}_t \Vert}^2$. To this end, we start with
\begin{equation}
    { \Vert \tilde{x}_t - x_{i,\ell} \Vert}^2 \leq 2\bar{\eta}^2 \left( {\Vert e_{t-1} \Vert}^2 + {\Vert g_t \Vert}^2 \right).
\label{eqn:servsprsbnd1}
\end{equation}
To arrive at the above inequality, we used \eqref{eqn:virt_seq_serv}, and the fact that $\Vert \bar{x}_t - x_{i,\ell} \Vert \leq \bar{\eta} \Vert g_t \Vert$. Next, observe that
\begin{equation}
    \begin{aligned}
    {\Vert g_t \Vert}^2 &= {\Vert \mathcal{C}_{\delta_s}\left(e_{t-1} + \nabla f(\bar{x}_t) \right) \Vert}^2\\ 
    & \leq {\Vert e_{t-1} + \nabla f(\bar{x}_t) \Vert}^2\\
    & \leq 2 {\Vert e_{t-1} \Vert}^2 + 2 {\Vert \nabla f(\bar{x}_t) \Vert}^2.
    \end{aligned}
\end{equation}
Using the smoothness of $\nabla f(\cdot)$, we also have
\begin{equation}
\begin{aligned}
    {\Vert \nabla f(\bar{x}_t) \Vert}^2 &= {\Vert \nabla f(\bar{x}_t) - \nabla f(\tilde{x}_t) + \nabla f(\tilde{x}_t) \Vert}^2\\ 
& \leq 2 {\Vert \nabla f(\bar{x}_t) - \nabla f(\tilde{x}_t)\Vert}^2 + 2 {\Vert \nabla f(\tilde{x}_t) \Vert}^2\\
& \leq 2L^2 {\Vert \bar{x}_t - \tilde{x}_t \Vert}^2 + 2 {\Vert \nabla f(\tilde{x}_t) \Vert}^2\\
&  = 2\bar{\eta}^2 L^2 {\Vert e_{t-1} \Vert}^2 + 2 {\Vert \nabla f(\tilde{x}_t) \Vert}^2.
\end{aligned}
\label{eqn:servsprsbnd2}
\end{equation}
Suppose $\bar{\eta}$ is chosen such that $\bar{\eta} L \leq \frac{1}{2}$. Combining the bounds we have derived above, we can then obtain
\begin{equation}
    { \Vert \tilde{x}_t - x_{i,\ell} \Vert}^2 \leq 8\bar{\eta}^2 \left( {\Vert e_{t-1} \Vert}^2 + {\Vert \nabla f(\tilde{x}_t) \Vert}^2 \right). 
\end{equation}
Substituting the above bound in \eqref{eqn:iteratebnd_servsprs2}, and simplifying the resulting inequality leads to
\begin{equation}
 {\Vert \tilde{x}_{t+1}-x^* \Vert}^2   \leq {\Vert \tilde{x}_{t}-x^* \Vert}^2 -2 \bar{\eta} \left(f(\tilde{x}_{t})-f(x^*) \right) + 14\bar{\eta}^2 {\Vert \nabla f(\tilde{x}_t) \Vert}^2 + 24 \bar{\eta}^3 L {\Vert e_{t-1} \Vert}^2.
 \label{eqn:iterate_bnd2_servsprs2}
\end{equation}
Given the dependence of the above inequality on the gradient  sparsification error $e_{t-1}$, we next proceed to derive a recursion for bounding $\Vert e_t \Vert$. We follow similar steps as in Lemma \ref{lemma:sparse_err}. 
\begin{equation}
    \begin{aligned}
    {\Vert e_t \Vert}^2 &= {\Vert e_{t-1}+\nabla f(\bar{x}_t) -  \mathcal{C}_{\delta_s} \left(e_{t-1}+\nabla f(\bar{x}_t)\right) \Vert}^2\\
    & \overset{(a)} \leq  \left(1-\frac{1}{\delta_s}\right)  {\Vert e_{t-1}+\nabla f(\bar{x}_t) \Vert}^2\\
    & \overset{(b)} \leq \left(1-\frac{1}{\delta_s}\right) (1+\gamma) { \Vert e_{t-1} \Vert }^2 + \left(1-\frac{1}{\delta_s}\right)\left(1+\frac{1}{\gamma}\right) {\Vert \nabla f(\bar{x}_t) \Vert}^2\\
    & \overset{(c)} \leq \left(1-\frac{1}{2 \delta_s}\right) { \Vert e_{t-1} \Vert }^2 + 2 \delta_s {\Vert \nabla f(\bar{x}_t) \Vert}^2 \\
   & \overset{(d)} \leq \left(1-\frac{1}{2 \delta_s}+4\bar{\eta}^2L^2 \delta_s \right) { \Vert e_{t-1} \Vert }^2 + 4 \delta_s {\Vert \nabla f(\tilde{x}_t) \Vert}^2. 
    \end{aligned}
\label{eqn:sparse_err_bnd_serv2}
\end{equation}
In the above steps, for (a), we used Property 3 of the \texttt{TOP-k} operator in Lemma \ref{lemma:topK}; for (b), we used \eqref{eqn:rel_triangle}; for (c), we set $\gamma=\frac{1}{2(\delta_s-1)}$; finally, for (d), we used the bound on ${\Vert \nabla f(\bar{x}_t) \Vert}^2$ in \eqref{eqn:servsprsbnd2}. We now have all the individual pieces required to complete the proof of Theorem \ref{thm:serv_sprs2}. To proceed, let us define the following Lyapunov function:
$$\psi_{t} \triangleq { \Vert \tilde{x}_t - x^* \Vert }^2 + \bar{\eta}^2 {\Vert e_{t-1} \Vert}^2.$$ 
Referring to \eqref{eqn:iterate_bnd2_servsprs2} and \eqref{eqn:sparse_err_bnd_serv2}, using the fact that $f(\cdot)$ is $\mu$-strongly convex, and following similar arguments as in the proof of Theorem \ref{thm:client_sprs}, we obtain
\begin{equation}
    \begin{aligned}
    \psi_{t+1} & \leq  {\Vert \tilde{x}_{t}-x^* \Vert}^2 - 2\bar{\eta} \left(1-18\bar{\eta}L\delta_s  \right) \left(f(\tilde{x}_{t})-f(x^*) \right) + \left(1-\frac{1}{2\delta_s}+ 28 \bar{\eta} L \right) \bar{\eta}^2 {\Vert e_{t-1} \Vert}^2\\
 & \leq \left(1-\frac{3}{4}\bar{\eta} \mu \right){\Vert \tilde{x}_{t}-x^* \Vert}^2 + \left(1-\frac{1}{2\delta_s}+ 28 \bar{\eta} L \right) \bar{\eta}^2 {\Vert e_{t-1} \Vert}^2\\
 & \leq \left(1-\frac{3}{4}\bar{\eta} \mu \right)\left({\Vert \tilde{x}_{t}-x^* \Vert}^2 + \bar{\eta}^2 {\Vert e_{t-1} \Vert}^2\right)\\
 & = \left(1-\frac{1}{96 \delta_s \kappa}\right) \psi_t.
    \end{aligned}
\end{equation}
In the last two steps, we used $\bar{\eta}=\frac{1}{72L\delta_s}$, implying ${\eta_i}=\frac{1}{72L\delta_s\tau_i}$. The rest of the proof follows by recursively using the above inequality in conjunction with the following easily verifiable facts:
$$ \psi_1= {\Vert \bar{x}_1 - x^* \Vert}^2 \hspace{2mm} \textrm{since} \hspace{1mm} e_0=0; \hspace{2mm} {\Vert \bar{x}_{T+1} - x^* \Vert}^2 \leq 2 \psi_{T+1}. 
$$

\newpage
\section{Simulation Results for \texttt{FedSplit}}
\label{fedsplit:num}
In \cite{fedsplit}, the authors introduce \texttt{FedSplit} - an algorithmic framework based on operator-splitting procedures. Given an initial global model $\bar{x}_1$, the update rule of 
\texttt{FedSplit} is given by
\begin{equation}
\begin{aligned}
    y_i^{(t)}&=\mbox{\texttt{prox\_update}}_i(2\bar{x}_t-z_i^{(t)}),\\
    z_i^{(t+1)}&=z_i^{(t)}+2(y_i^{(t)}-\bar{x}_t),\\
    \bar{x}_{t+1}&=\dfrac{1}{m}\sum\limits_{i \in \mathcal{S}} z_i^{(t+1)},
\end{aligned}
\end{equation}
where $z_i^{(1)}=\bar{x}_1$, $i \in \mathcal{S}$. The local update at client $i$ is defined in terms of a proximal solver $\texttt{prox\_update}_i(\cdot)$. Ideally, this proximal solver would be an exact evaluation of the following proximal operator for some step-size $s>0$:
\begin{equation}
    \mbox{\textbf{prox}}_{sf_i}(u)\coloneqq \argmin\limits_{x\in \mathbb{R}^d}\Big\{\underbrace{f_i(x)+\dfrac{1}{2s}\|u-x\|^2}_{h_i(x)}\Big\}. 
\end{equation}
As suggested in \cite{fedsplit}, in practice, \texttt{FedSplit} would be implemented using an approximate proximal solver. One way to do so, as clearly detailed in \cite{fedsplit}, is to run $e$ steps of gradient descent on $h_i(x)$ using a suitably chosen step-size $\alpha$. The latter is precisely the method we use to numerically implement \texttt{FedSplit}. As per Corollary 1 in \cite{fedsplit}, \texttt{FedSplit} achieves linear convergence to a neighberhood of the global minimum for any value of $e$. In what follows, we show that \texttt{FedSplit} may diverge even under the simplest of settings. In particular, we consider an instance of problem \eqref{eqn:objective} where two clients with simple quadratics attempt to minimize the global objective function \eqref{eqn:objective} using \texttt{FedSplit}. The local objective function of client $1$ is given by
\begin{equation*}
    f_1(x)=\dfrac{1}{2}x^{T}
    \underbrace{\begin{bmatrix}
L & 0\\
0 & \mu
\end{bmatrix}}_{A_1}
x-
 \underbrace{\begin{bmatrix}
1\\
1
\end{bmatrix}^T}_{B_1^T}x,
\end{equation*}
 and the local objective function of client $2$ is given by
 \begin{equation*}
    f_2(x)=\dfrac{1}{2}x^{T}
    \underbrace{\begin{bmatrix}
\mu & 0\\
0 & \mu
\end{bmatrix}}_{A_2}
x-
 \underbrace{\begin{bmatrix}
-1\\
2
\end{bmatrix}^T}_{B_2^T}x, 
\end{equation*}
where $L=1000$, and $\mu=1$. The step-sizes corresponding to the proximal operator and gradient descent, namely $s$ and $\alpha$, respectively, are chosen as per Corollary $1$ in \cite{fedsplit}. Furthermore, we run $e$ rounds of gradient descent per communication round $t$ for $e \in \{1, \cdots,41\}$. Given the fact that the implementation code of \texttt{FedSplit} is not publicly available, we note that our local implementation of the scheme has diverged for all the odd values of $e$ between $1$ and $41$, inclusive. It should be noted, however, that our implementation of \texttt{FedSplit} converged for some even values of $e$ in the considered range, as shown in Figure \ref{fedsplit_plot}. We have further observed that increasing the ratio $\kappa=L/\mu$ beyond $1000$ causes \texttt{FedSplit} to diverge for values of $e$ higher than $41$ as well. 
\begin{figure}[t!]
  \centering
  \includegraphics[width=\linewidth]{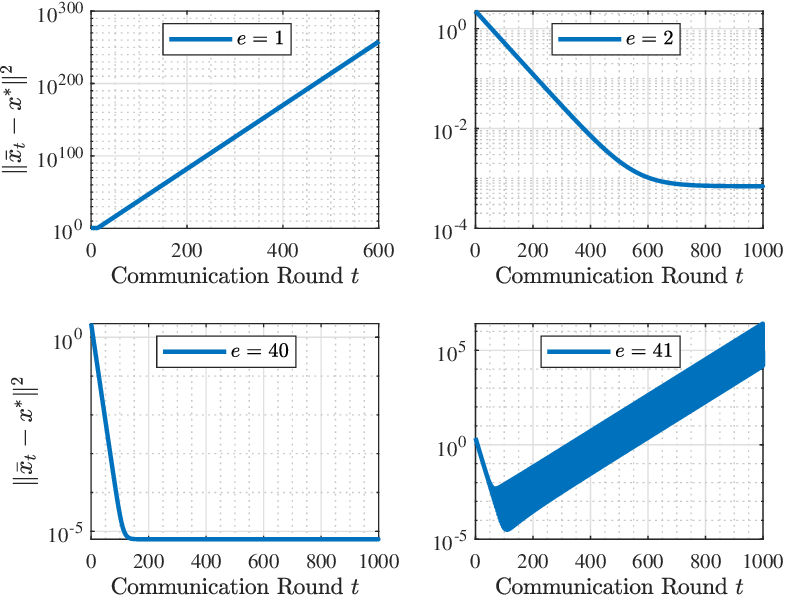}
  \caption{Simulation results for \texttt{FedSplit} for $e \in \{1,2,40,41\}$.}
\label{fedsplit_plot}
\end{figure}
\vspace*{3in}
\newpage
\section{Additional Experimental Results on Logistic Regression}
\label{add_exp}
In this section, we provide additional numerical results for \texttt{FedLin} on a logistic regression problem. For each client $i \in \mathcal{S}$, the design matrix $A_i \in \mathbb{R}^{m_i\times d}$ is a collection of $m_i$ feature vectors, with the $j$-th feature vector denoted by $a_{ji}$, $j \in \{1, \cdots, m_i\}$. In turn, every feature vector $a_{ji}$ is associated with a class label $b_{ji} \in \{+1,-1\}$. In a logistic regression problem, the conditional probability of observing a positive class label $b_{ji}=+1$ for a given feature vector $a_{ji}$ is 
\begin{equation}
\label{model_lr}
    \mathcal{P}\big(b_{ji}=+1\big)=\dfrac{1}{1+e^{-a_{ji}^Tx}},
\end{equation}
where $x \in \mathbb{R}^d$ is an unknown parameter vector to be estimated. The maximum likelihood estimate of the parameter vector $x$ is then the solution of the following convex optimization problem
\begin{equation}
\label{lrp}
    \min \limits_{x \in \mathbb{R}^d} f(x)=\min \limits_{x \in \mathbb{R}^d} \dfrac{1}{m}\sum \limits_{i=1}^m \underbrace{\sum \limits_{j=1}^{m_i}\log(1+e^{-b_{ji}a_{ji}^Tx})}_{f_i(x)}. 
\end{equation}
The client objective functions, $f_i(x)$, are both smooth and convex. To generate synthetic data, for each client $i \in \mathcal{S}= \{1, \cdots,10\}$, we generate the design matrix $A_i$ and the corresponding class labels according to the model \eqref{model_lr}, where $x \in \mathbb{R}^{100}$ and $A_i \in \mathbb{R}^{500 \times 100} $. In particular, the entries of the design matrix are modeled as $[A_i]_{jk}\stackrel{i.i.d.}{\sim}\mathcal{N}(0,1)$ for $j\in \{1, \cdots,500\}$ and $k \in \{1, \cdots, 100\}$. The entries of the true parameter vector $x$ are modeled as $[x]_{\ell} \stackrel{i.i.d.}{\sim} \mathcal{N}(0,1)$ for $\ell \in \{1,\cdots, 100\}$. To model the effect of systems heterogeneity, we allow clients to perform different numbers of local iterations. In particular, for each client $i \in \mathcal{S}$, the number of local iterations is drawn independently from a uniform distribution over the range $[2,50]$, i.e, $\tau_i \in [2,50]$, $\forall i \in \mathcal{S}$.

\textbf{Gradient Sparsification at Server.}
We first consider a variant of \texttt{FedLin} where gradient sparsification is implemented only at the server side without any error-feedback. In particular, we consider the cases where $\delta_s \in \{2,4\}$, which correspond to the implementation of a \texttt{TOP-50} and a \texttt{TOP-25} operator on the communicated gradients, respectively. For comparison, we also plot the resulting performance when no gradient sparsification is implemented at the server side, i.e. $\delta_s=1$. As illustrated in Figure \ref{lp_server}, regardless of the level of gradient sparsification at the server side, \texttt{FedLin} always converges to the true minimizer.

\textbf{Gradient Sparsification at Clients.} Next, we implement gradient sparsification only at the clients' side, i.e. $\delta_s=1$. In particular, we consider the cases where $\delta_c \in \{1.25, 1.67\}$, which correspond to the implementation of a \texttt{TOP-80} and a \texttt{TOP-60} operator on the communicated local gradients, respectively. As illustrated in Figure \ref{lp_client}, unlike the server case, \texttt{FedLin} with sparsification at the clients' side converges with a non-vanishing convergence error, which increases as the value of $\delta_c$ increases. 
\begin{figure}[t]
  \centering
  \includegraphics[width=0.6\linewidth]{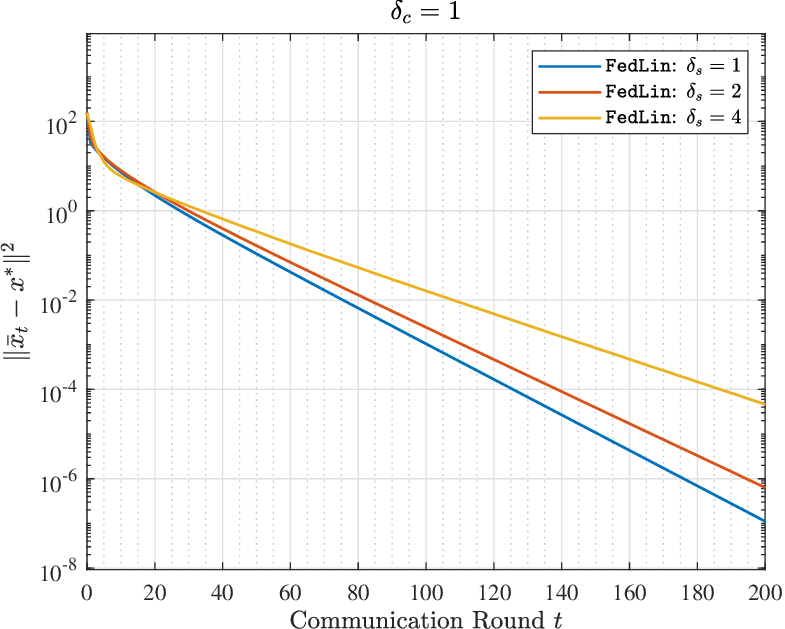}
  \caption{Simulation results for \texttt{FedLin} where gradient sparsification is implemented at the server side. The constant $\bar{\eta}$ is fixed at $0.15$ across all clients.}
\label{lp_server}
\end{figure}

\begin{figure}[t]
  \centering
  \includegraphics[width=0.6\linewidth]{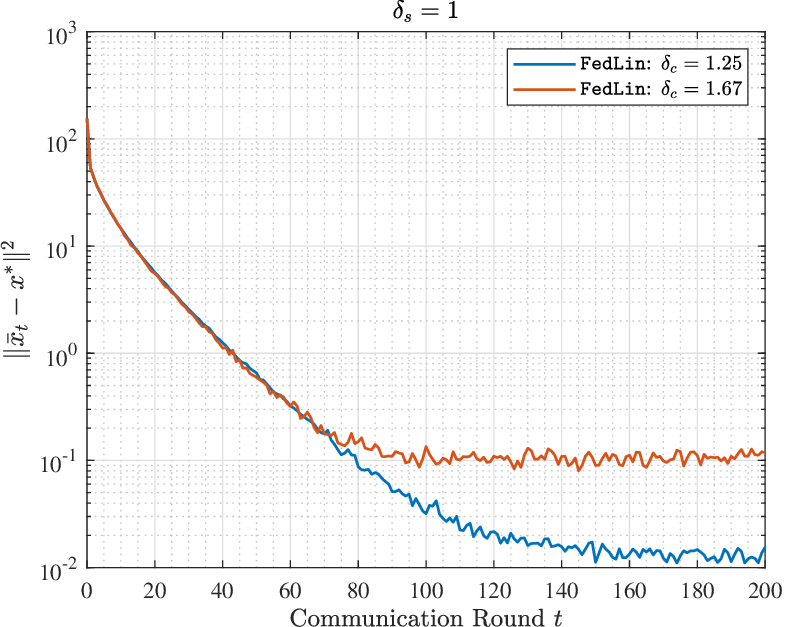}
  \caption{Simulation results for \texttt{FedLin} where gradient sparsification is implemented at the clients'  side. The constant $\bar{\eta}$ is fixed at $0.1$ across all clients.}
\label{lp_client}
\end{figure}
\clearpage
\section{Simulation Results for \texttt{FedLin} with Noisy Gradients}
\label{app:Noisy_FedLin_exp}
In this section, we provide numerical results for \texttt{FedLin} under noisy client gradients to validate the theoretical results of Theorem \ref{thm:noisy}. In particular, we consider the least square problem of Section \ref{sec:experiments} with $\delta_s=\delta_c=1$ and $\alpha=10$. All the remaining parameters are kept the same. To simulate noisy gradients, we add zero-mean Gaussian noise with variance $\sigma^2 \in \{10^{-5}, 10^{-3}, 10^{-1}\}$. 
\begin{figure}[!h]
  \centering
  \includegraphics[width=0.6\linewidth]{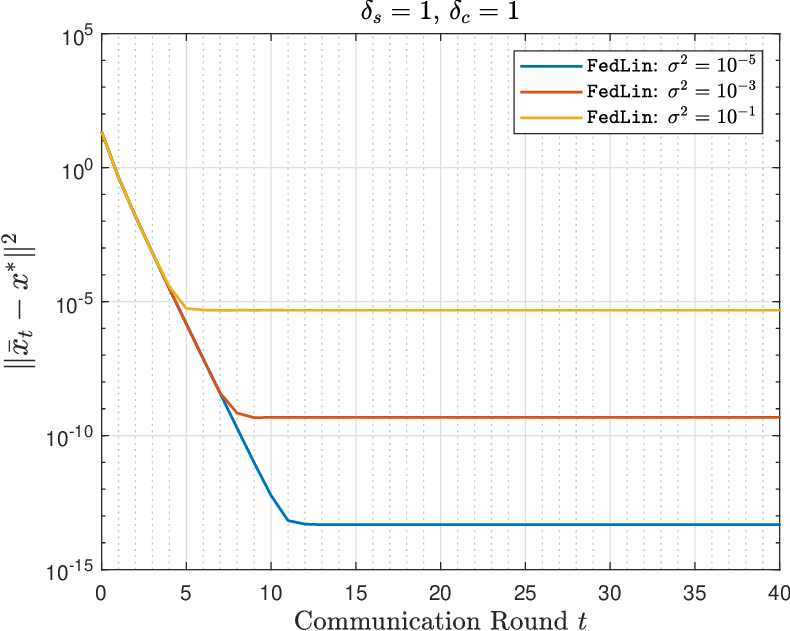}
  \vspace*{5 mm}
  \caption{Simulation results for \texttt{FedLin} under noisy client gradients. The constant $\bar{\eta}$ is fixed at $10^{-2}$ across all clients.}
\label{noisy_sim}
\end{figure}

As illustrated in Figure \ref{noisy_sim}, \texttt{FedLin} under noisy gradients converges with a non-vanishing error-floor, which increases as the variance of the noise increases. Thus, our simulations here corroborate the theory in Theorem \ref{thm:noisy}.
\end{document}